\documentclass[format=acmsmall, review=false]{acmart}
\usepackage{acm-ec25-arxiv}
\makeatletter
\def\@ACM@checkaffil{
    \if@ACM@instpresent\else
    \ClassWarningNoLine{\@classname}{No institution present for an affiliation}%
    \fi
    \if@ACM@citypresent\else
    \ClassWarningNoLine{\@classname}{No city present for an affiliation}%
    \fi
    \if@ACM@countrypresent\else
        \ClassWarningNoLine{\@classname}{No country present for an affiliation}%
    \fi
}
\makeatother
\pagenumbering{gobble}
\usepackage{booktabs} 
\usepackage[ruled]{algorithm2e} 
\usepackage{00macros}

\SetAlFnt{\small}
\SetAlCapFnt{\small}
\SetAlCapNameFnt{\small}
\SetAlCapHSkip{0pt}
\IncMargin{-\parindent}

\setcitestyle{authoryear}

\begin{CCSXML}
<ccs2012>
   <concept>
       <concept_id>10003752</concept_id>
       <concept_desc>Theory of computation</concept_desc>
       <concept_significance>500</concept_significance>
       </concept>
   <concept>
       <concept_id>10003752.10010070</concept_id>
       <concept_desc>Theory of computation~Theory and algorithms for application domains</concept_desc>
       <concept_significance>500</concept_significance>
       </concept>
   <concept>
       <concept_id>10003752.10010070.10010099</concept_id>
       <concept_desc>Theory of computation~Algorithmic game theory and mechanism design</concept_desc>
       <concept_significance>500</concept_significance>
       </concept>
 </ccs2012>
\end{CCSXML}

\ccsdesc[500]{Theory of computation}
\ccsdesc[500]{Theory of computation~Theory and algorithms for application domains}
\ccsdesc[500]{Theory of computation~Algorithmic game theory and mechanism design}


\acmYear{2025}\copyrightyear{2025}
\setcopyright{rightsretained}
\acmConference[EC '25]{The 26th ACM Conference on Economics and Computation}{July 7--10, 2025}{Stanford, CA, USA}
\acmBooktitle{The 26th ACM Conference on Economics and Computation (EC '25), July 7--10, 2025, Stanford, CA, USA}
\acmDOI{10.1145/3736252.3742614}
\acmISBN{979-8-4007-1943-1/25/07}

\title[Selecting Optimal Alternates for Citizens' Assemblies]{Alternates, Assemble! Selecting Optimal Alternates for Citizens' Assemblies}

\author{Angelos Assos}
\email{assos@mit.edu}
\orcid{0009-0006-2841-3511}
\affiliation{%
  \institution{MIT}
  \city{Cambridge}
  \state{MA}
  \country{USA}
}

\author{Carmel Baharav}
\email{cbaharav@ethz.ch}
\orcid{0009-0004-3634-6721}
\affiliation{%
  \institution{ETH Zurich}
  \city{Zurich}
  \country{Switzerland}
}

\author{Bailey Flanigan}
\email{baileyf@mit.edu}
\orcid{0000-0002-7805-1989}
\affiliation{%
  \institution{Harvard}
  \city{Cambridge}
  \state{MA}
  \country{USA}
}
\author{Ariel D. Procaccia}
\email{arielpro@seas.harvard.edu}
\orcid{0000-0002-8774-5827}
\affiliation{%
  \institution{Harvard}
  \city{Cambridge}
  \state{MA}
  \country{USA}
}

\begin{abstract}
\emph{Citizens' assemblies} are an increasingly influential form of deliberative democracy, where randomly selected people discuss policy questions. The legitimacy of these assemblies hinges on their representation of the broader population, but participant dropout often leads to an unbalanced composition. In practice, dropouts are replaced by preselected \emph{alternates}, but existing methods do not address how to choose these alternates. To address this gap, we introduce an optimization framework for alternate selection. Our algorithmic approach, which leverages learning-theoretic machinery, estimates dropout probabilities using historical data and selects alternates to minimize expected misrepresentation. Our theoretical bounds provide guarantees on sample complexity (with implications for computational efficiency) and on loss due to dropout probability mis-estimation. Empirical evaluation using real-world data demonstrates that, compared to the status quo, our method significantly improves representation while requiring fewer alternates.  
\end{abstract}

\begin{document}

\maketitle

\section{Introduction} \label{sec:introduction}
Motivated by mounting threats to democratic governance, new methods for facilitating public participation in democracy are taking root. One of the most prominent examples is the \textit{citizens' assembly}, in which members of the public are chosen randomly to serve on a panel. These panelists convene for several days to learn about an issue, deliberate, and then weigh in on what policy measures should be taken. Across continents, citizens' assemblies and other similar processes, falling under the broader umbrella of \textit{deliberative minipublics}, are entering politics at all scales (e.g., see the 2021 Global Climate Assembly \cite{GlobalAssembly2021}, France's recent national assemblies \cite{giraudet2022co,france2}, Ireland's constitutional convention \cite{ireland2019}, Germany's national assembly on nutrition \cite{Bundestag2023Nutrition}, and the plethora of local and regional assemblies occurring worldwide \cite{OECD2020Innovative}).

Because citizens' assemblies are costly per participant\footnote{Participants are usually compensated and, if the event is in person, their travel expenses are covered.} and deliberation is difficult to facilitate at large scale, only a small fraction of the population can partake\emdash typical panel sizes range from tens to hundreds. Consequently, \textit{representation} is paramount: it is important that the panelists who partake can credibly represent the views of the broader populace. In practice, assembly organizers typically approach this problem by ensuring that the panelists satisfy proportionally representative \textit{quotas}: upper and lower limits on how many panelists must be from various groups. Usually these quotas are set to achieve proportional representation of the population on many dimensions at once, including  gender, age, race/ethnicity, education level, and some measure of political opinion. For example, on a panel of 100 people serving the state of Wisconsin, the quotas for \textit{political opinion} might require that the panel contain 40-44 democrats, 40-44 republicans, and 12-20 undecided voters, mirroring the state's party affiliation rates of $42\%, 42\%, 16\%$. The quotas on \textit{education level} might additionally require that 30-36 panelists have a college degree and 64-70 do not. 

Over the past several years, computer science research has produced algorithms for randomly sampling a panel that is guaranteed to satisfy practitioner-chosen quotas of this form \cite{flanigan2021fair,flanigan2021transparent,flanigan2024manipulation,baharav2024fair}. The panel selection process (both as it is studied in past work \textit{and} as it typically works in practice) consists of two steps: First, thousands of invitations are sent out to the population, inviting them to partake in the process. Those who respond affirmatively form the \textit{pool} of willing participants. Then, the final \textit{panel} is randomly selected from the pool, and it must satisfy the quotas. Existing algorithms address the second step of this process, aiming to randomize within the quotas in a way that is maximally fair, transparent, and strategyproof.

This past work goes to great lengths\emdash sometimes at the expense of other ideals \cite{flanigan2023mini}\emdash to ensure that the panel satisfies representative quotas. However, past work does not address that this representation can be undone by what happens downstream: often, months pass between the selection of the panel and its convention, and in this time, some of those who were initially selected for the panel \textit{drop out}. Many times, people drop out just before the panel convenes, or simply do not show up on the first day. These dropouts cause quota violations, which are especially problematic because they do not affect all groups equally: \Cref{fig:dropout-rates} shows dropout rates among different groups across 25 citizens' assemblies. These trends show that certain groups --- trending more financially and socially marginalized --- tend to drop out at substantially higher rates. We see that young people, indigenous people, racial minorities, people with less permanent housing, and people with the least education drop out at disproportionately high rates\emdash trends that  largely persist across datasets collected from assemblies run in two separate countries by two separate organizations. This is not at all surprising: dropout is likely due to random shocks (e.g., illness, childcare falling through, last-minute work commitments, transportation issues), and those with less resources may be less able to overcome such shocks in order to attend the panel.

\begin{figure}[t]
    \centering
\includegraphics[width=0.85\linewidth]{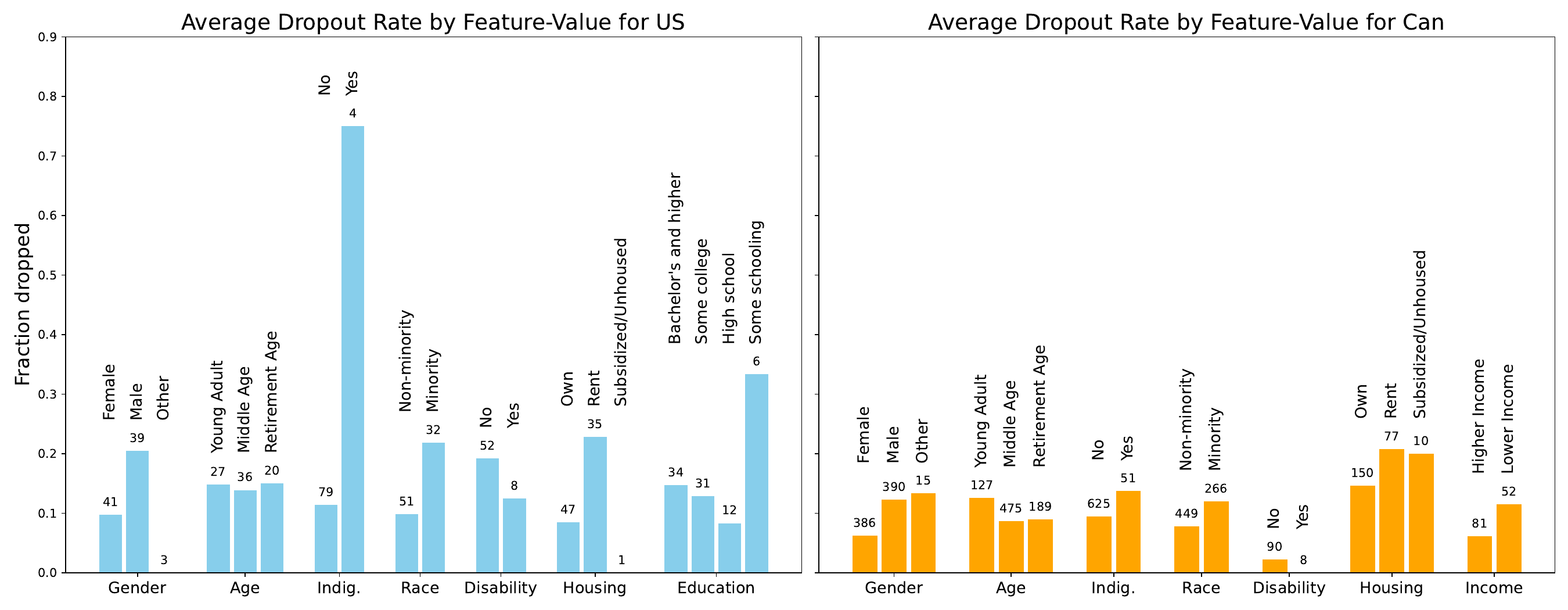}
    \vspace{-0.5em}
    \caption{Average dropout rates among different groups over 3 US assemblies (blue) and 22 Canadian assemblies (orange). Not all datasets contained all attributes; numbers above bars show how many people across datasets had that attribute. Data cleaning methods are in \Cref{app:datacleaning}}
    \label{fig:dropout-rates}
    \vspace*{-1.1em}
\end{figure}
\vspace{0.3em}

\noindent \textbf{State-of-the-art: heuristics for selecting \textit{alternates}.} To hedge against loss of representation due to dropout, practitioners often select \textit{alternates} --- extra participants to have on standby to replace panelists who drop out. The key challenge is that the alternates must be selected \textit{before} seeing who drops out; this is because by the time dropouts occur (on or near the first day of the panel's convention), it is too late to recruit replacements for a multi-day process. Subject to this constraint, alternates are used in different ways across organizations: some organizations select an entire duplicate panel using the same method as was used to select the original, and then compensate these alternates for attending the panel's first day. Others directly inflate the size of the panel, imposing quotas that require extra panelists in groups they expect to drop out based on past experience. 

Such methods are heuristic and have two main potential inefficiencies, both of which we aim to address in this paper. First, these methods either ignore differential dropout rates altogether, or they rely on practitioners to guess at dropout rates. Given the wealth of historical data on which kinds of people have dropped out from \textit{past} panels, there is a natural opportunity to instead learn statistical predictions of different groups' likelihoods of dropping out. Second, these methods do not consider the problem's combinatorial structure (i.e., which \textit{combinations} of attributes alternates should have), operating instead on the level of quotas, which are imposed on single attributes in isolation. Beyond their potential to lead to suboptimal representation, these inefficiencies can be costly: each additional alternate comes at a cost as they need to be compensated. These opportunities to improve the efficiency of alternate selection motivate our research question.

\begin{quote}     \textbf{\textit{Research question:}} \textit{Given panelists' dropout probabilities\emdash which are estimated from data and are thus subject to prediction errors\emdash can we design a practical and performant algorithm that selects an optimal set of alternates of a given size?}
\end{quote}

\vspace{0.3em}
\noindent \textbf{Problem setup.}
More formally, the inputs to our problem are the quotas we want to satisfy; the originally-selected panel $K$ of size $k$, which respects these quotas; a pool $N$ of size $n$ from which can choose alternates; a budget $a \in \mathbb{N}^+$ limiting the number of alternates we can choose; and panelists' \textit{dropout probabilities} $\rho_i | i \in K$, describing each panelist's probability of dropping out. As will be technically consequential, we assume that panelists drop out \textit{independently}, reflecting the reality that panelists are randomly chosen from a large population and unlikely to know each other, and thus the event that one drops out should not affect the dropout event of another. These dropout probabilities induce a \textit{dropout set distribution} $\mathcal{D}$, describing the probability that each \textit{panel subset} drops out. Given all these inputs, the pipeline of selecting and deploying alternates works as depicted in \Cref{fig:pipeline}. Though our solution must engage with all steps 1-3 in the figure, our main goal is to design a procedure for step 1. 

\begin{figure}[h!]
    \centering
    
\includegraphics[width=0.9\textwidth]{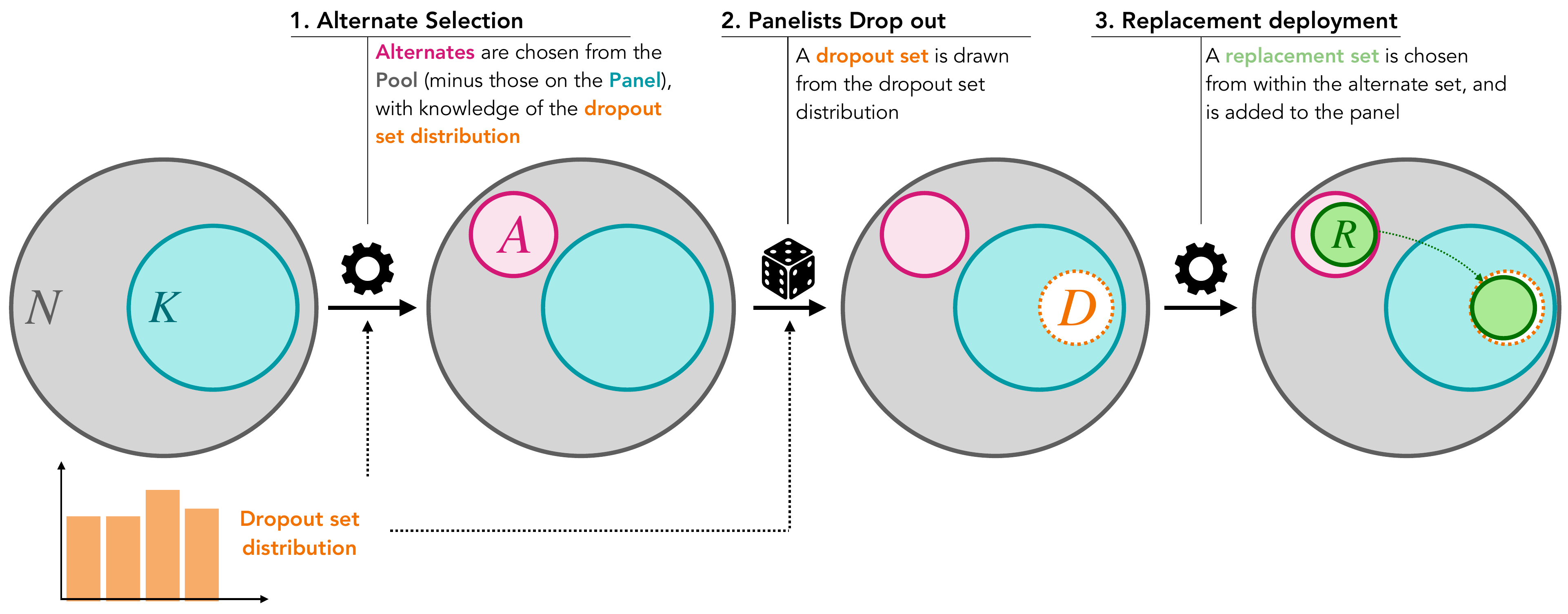}
\vspace{-1em}
    \caption{The alternate selection pipeline. \textbf{1.} With knowledge of all inputs, an \textit{alternate selection algorithm} chooses an alternate set $A \subseteq N : |A| = a$. Importantly, $A$ must be chosen \textit{before seeing the random draw of which panelists drop out} (the \textit{dropout set} $D$), reflecting that alternates need to be placed on retainer ahead of time so that, e.g., they can be invited to the first day of the panel's convention in anticipation of some panelists not showing up. \textbf{2.} A dropout set $D \subseteq K$ is drawn from the dropout set distribution. \textbf{3.} The best possible \textit{replacement set} $R \subseteq A$ is chosen replace $D$. The final panel is then $(K \setminus D) \cup R$. To ensure we respect the initial panel size constraint (for budgetary and other logistical reasons), we require that $|R| \leq |D|$, so that adding the replacement set does not exceed the original panel size. }
    \label{fig:pipeline}
\end{figure}
Our goal is to compute the alternate set that can best restore the quotas after dropouts\emdash in expectation over the randomness of which panelists drop out. The optimal alternate set $A^*$ is then expressed (semi-formally) as follows, where the \textit{loss} measures how well the quotas are satisfied:
\begin{equation} \label{eq:opt-informal}
    A^* = \text{argmin}_{A \subseteq N} \sum_{D \subseteq K} \Pr_{D \sim \mathcal{D}}[D \text{ drops out}] \cdot \left(\min_{R \subseteq A : |R| \leq |D|}  \ \text{\textit{loss}}(K \setminus D \cup R)\right).
\end{equation}
We next discuss the difficulty of computing $A^*$. For this, it is useful to have in mind that $n$ is typically on the order of hundreds or thousands, $k$ is typically on the order of tens or hundreds, and $a$ may range widely, from $<10$ up to $k$ or more.

\vspace{0.3em}
\noindent \textbf{Key challenge.} Solving \Cref{eq:opt-informal} presents a combinatorial minefield, consisting of three separate, nested tasks. First and most generally, we must compute the best alternate set (size $a$) within the pool (size $n$), which by brute force would require examining ${n \choose a}$ alternate sets. Secondly, we must evaluate at least one alternate set $A$, which requires computing its expected performance over all possible dropout sets, i.e., all subsets of the panel (size $k$). By brute force, this amounts to computing a sum with $2^{k}$ terms. Finally, to compute \textit{each} term of this sum, corresponding to a specific dropout set $D$, we must determine how well $A$ can replace $D$. This means finding the best subset of $A$, which requires looking at all subsets of $A$ of size at most $D$, which is at least ${a \choose |D|}$. In fact this third problem\emdash the sub-sub-routine among these three combinatorial problems\emdash is \textit{itself} NP-hard in asymptotic parameter $a$, as follows from an existing theorem (\Cref{app:reduction}).

\vspace{0.3em}
\noindent \textbf{Approach: \textit{Integer Linear Programming} (ILP) and \textit{Empirical Risk Minimization} (ERM).} In light of our problem's NP-hardness, a natural choice would be to pursue a polynomial-time approximation algorithm. For practical reasons, we do not go quite so far: of the three tasks above, the two subset selection tasks (the first and third tasks) can be solved quickly and exactly via ILP solvers. We therefore opt to present an algorithm that is not polynomial time but in exchange does not compromise on practical performance in these aspects. It is the \textit{second} task\emdash computing the expected performance of a given alternate set $A$ over a distribution with support size $2^k$\emdash which does not lend itself to ILP solving. This is where we will pursue polynomial dependency on the problem's parameters, and in exchange forgo exact optimality. 

For this second task, our approach makes use of the observation that, because agents drop out independently, we have \textit{sample access} to the dropout set distribution even though its support is too large to write down: we can sample a dropout set $D \sim \mathcal{D}$ by flipping a $\dprob_i$-biased coin for every panelist; those which come up 1s compose $D$. Taking inspiration from ERM, we use this sample access to construct an \textit{empirical estimate} of $\mathcal{D}$, and then we solve our problem over this empirical distribution $\hat{\mathcal{D}}$ instead. 
We then use an ILP to perform both set selection tasks at once, choosing the alternate set $\hat{A}^*$ that optimizes the expected performance \textit{over the empirical version of the distribution}\emdash just \Cref{eq:opt-informal} where $\mathcal{D}$ is replaced with $\hat{\mathcal{D}}$.
Ideally, at sufficient sample size, $\hat{\mathcal{D}}$ should be ``similar enough'' $\mathcal{D}$ that $\hat{A}^*$ should exhibit loss close to that of the optimal alternate set $A^*$ with high probability. For this ERM-based approach to be a sufficient improvement on the brute-force method, it must be that we can ensure that the loss of $\hat{A}^*$ closely approximates that of $A^*$ with only polynomially-many samples. 

\vspace{0.3em}
\noindent \textbf{Contributions.}
We answer our research question affirmatively via the following series of results.

In \textbf{\Cref{sec:algo}}, we define our ERM algorithms and analyze their sample complexity. We first formally connect our problem to the PAC learning model, which will allow us to leverage known learning theory bounds. We draw this connection for two standard loss functions: the \textit{binary loss}, capturing whether $A$ can perfectly restore the quotas; and the \textit{linear loss}, capturing \textit{how far} $A$ is from restoring the quotas. Then we show that for the ERM algorithm using either loss definition, only $O(a \log n)$ samples are sufficient to ensure that $loss(\hat{A}^*) - loss(A^*) \leq \varepsilon$ with high probability. This bound is the result of tight bounds on the dimensionality of our hypothesis classes. This very low sample complexity is important, because we must solve an ILP whose size depends on it. 

In \textbf{\Cref{sec:robustness}}, we analyze the extent to which our two ERM algorithms (using binary and linear loss, respectively) are robust to inaccuracies in the dropout probabilities. While our two loss functions were indistinguishable on the basis of sample complexity, here a distinction emerges: the binary loss can be arbitrarily non-robust to small errors, while the linear loss is extremely robust, its loss growing linearly with the prediction error.

In \textbf{\Cref{sec:empirics}}, we evaluate our entire algorithmic pipeline on data from real citizens' assemblies. We first estimate dropout probabilities from historical data, and then we compare the performance of our binary- and linear-loss ERM algorithms against heuristic benchmark algorithms emulating those used in practice. We find that while our estimates are likely imperfect in some instances (due largely to our data being highly suboptimal for this task), our ERM algorithms have lower loss, more consistent performance, and greater robustness to mis-estimates than our heuristic benchmarks. 

In \textbf{Section 6}, we first discuss how our algorithms extend to the more general case where \textit{alternates} drop out, and to various other ways of hedging against dropouts beyond selecting an alternate set. Finally, we close by situating our algorithmic solutions\emdash and the problem of alternate selection in general\emdash within the broader goal of \textit{randomly} selecting citizens' assembly participants.

\vspace{0.3em}

\noindent {\bf For appendices and omitted proofs, see the full version \cite{thispaper-ssrn}.}

\vspace{0.3em}
\noindent \textbf{Related work.} Broadly, the CS community is increasingly interested in deliberative democracy, with recent papers studying AI-aided deliberation, opinion change, and representation in these contexts \cite{fish2023generative,tessler2024ai,fishkin2019deliberative, gelauff2022opinion}. More specifically to our setting, there has been substantial recent computer science research on the task of randomly selecting the participants of citizens' assemblies, known as \textit{sortition} \cite{benade2019no,flanigan2020neutralizing,flanigan2021fair,flanigan2021transparent,EKMP+22,ebadianboosting,flanigan2024manipulation,baharav2024fair}. However, this existing work focuses only on selecting the \textit{original panel}, leaving untouched the issue of subsequent panel dropouts. Methodologically, our use of the PAC learning framework to (probably, approximately) solve an NP-hard problem is analogous to the approach used by \citet{peters2022robust} to compute a (probably, approximately) optimal allocation and pricing of rented rooms for prospective tenants.
Finally, the key challenge of our problem, estimating a sum over $2^k$ possible dropout sets, is reminiscent of the core challenge in computing the partition function for graphs or other combinatorial objects. While our analysis does not utilize this technical connection, typical methods for estimating partition functions also often involve sampling techniques, though these techniques differ from ours~\cite{ma2013estimating}.

\section{Model} \label{sec:model}
An \textit{instance} of the alternate selection problem  $\inst$ consists of a set of \textit{quotas}, a \textit{panel} that satisfies these quotas, a \textit{pool}, a vector of \textit{dropout probabilities}, and an \textit{alternates budget}. 

The \textbf{\textit{panel}} $K$ is a set of the $k$ agents who were originally chosen as assembly members (we often call these agents \textit{panelists}). The \textbf{\textit{pool}} $N$ is a set of $n$ agents that is disjoint from $K$. The pool corresponds to the agents that were willing to participate in the assembly but were \textit{not} chosen, and thus are now available to be chosen as alternates. We will refer to individual agents as $i$.

To define the quotas, we first define \textbf{\textit{features}} and \textbf{\textit{feature-values}}. A feature is an attribute category, such as ``gender'' or ``age''. Each feature $f$ takes on a predefined set of feature-values $V_f$, an exhaustive and mutually exclusive set of categorical values of that feature. For example, the feature $f = $ ``age'' might take on the values $V_{age} = \{<\text{50}, \geq\text{50 years old}\}$. Formally, a feature is a function $f : (N \cup K) \to V_f$ that maps each agent to their value for feature $f$. In practice, the feature-values of all agents in $K$ and $N$ are known, and we assume this is the case. We will often use $f,v$ to refer a feature, value pair (e.g., $age$, $< 50$). We express the set of all feature, value pairs as $FV := \{f,v | f \in F, v \in V_f\}$.

On each $f,v \in FV$, there are upper and lower \textbf{\textit{quotas}} describing the maximum and minimum number of agents with value $v$ for feature $f$ the panel should contain. These upper and lower quotas are denoted as $u_{f,v} \in \mathbb{N}$ and $l_{f,v} \in \mathbb{N}$, respectively, where $u_{f,v} \geq l_{f,v}$ and $u_{f,v}> 0$ (otherwise, the group defined by $f,v$ is effectively dropped from the instance). We summarize these quotas as $\boldsymbol{l}:=(l_{f,v} | f,v \in FV)$ and $\boldsymbol{u} := (u_{f,v} | f,v \in FV)$. We say that a set of agents $S$ \textit{satisfies the quotas} iff
\[\sum_{i \in S} \mathbb{I}(f(i) = v) \in [l_{f,v},u_{f,v}] \qquad \text{for all }f,v \in FV.\]
We assume the original panel $K$ satisfies the quotas.

Finally, our \textbf{\textit{alternates budget}} $a \in \mathbb{N}$ is the maximum number of alternates we can choose. An instance is formally expressed by the tuple $(N,K,\boldsymbol{l},\boldsymbol{u},a)$. In instance $\inst$, we will let $\mathcal{A}(\inst) := \{A \subseteq N : |A| = a\}$ be the space of all possible alternate sets (we will drop the $\inst$ when clear from context).

\vspace{0.3em}
\noindent \textbf{Dropouts.} Each panelist $i \in K$ has some \textbf{\textit{dropout probability}} $\dprobi \in [0,1]$, describing the probability that they drop out after the original panel $K$ is selected. As discussed in \Cref{sec:introduction}, panelists do not know each others' identities prior to the panel (and thus their dropout events should not be related), so we assume that each agent $i$ drops out independently; that is, if $X_i \in \{0,1\}$ is the indicator of the event that $i$ drops out, then the $X_i$'s are independent and $Pr[X_i = 1] = \dprobi$. We summarize these dropout probabilities in the vector $\dprobs:=(\dprobi | i \in K)$. Let $D := \sum_{i \in K} X_i$ be the \textbf{\textit{dropout set}}, the random variable describing the subset of the panel that drops out. Note that $D \in 2^K$ (where $2^K$ is the power set of $K$), i.e., any subset of the panel can drop out. Let the \textbf{\textit{dropout set distribution}} $\ddist_{\dprobs} : 2^K \to [0,1]$ be the distribution over dropout sets induced by the dropout probabilities $\dprobs$. Then, by our assumption of independent dropouts, 
\[\ddist_{\dprobs}(D) = \prod_{i \in D} \rho_i \cdot \prod_{j \in K \setminus D}(1-\dprob_j).\]
While $\ddist_{\dprobs}$ is the \textit{true} dropout distribution, under various circumstances we end up working with only \textit{estimates} of this distribution. We will thus use $\mathcal{D}$ to refer to a generic dropout set distribution.

\vspace{0.3em}
\noindent \textbf{Selecting and evaluating an alternate set.} An alternate set is chosen by an \textbf{\textit{alternate selection algorithm}}: a mapping that takes as input an instance $\inst$ and a dropout set distribution $\mathcal{D}$, and outputs an alternate set $A \in \mathcal{A}(\inst)$. Designing this alternate selection algorithm is the main challenge, and the task we will focus on in this paper. 

Intuitively, we evaluate our chosen alternate set according to its ability to ``replace'' a randomly drawn dropout set $D$. Formalizing this intuition requires defining precisely how $A$ is used to ``replace'' $D$, which works as follows:
%
With our alternate set $A$ selected, we then observe the realization of $D \sim \mathcal{D}_{\dprobs}$, at which point we are left with only the agents in $K \setminus D$. We must then restore the quotas to the greatest degree possible using a \textbf{\textit{replacement set}} $R$: any subset of $A$ is of size at most $|D|$ (that is, $R \subseteq A : |R| \leq |D|$). The best replacement set in $A$ can simply be computed by a simple integer linear program (see \Cref{app:replacementILP} for formulation), which solves following problem:
\begin{equation}
    \min_{R \subseteq A : |R| \leq |D|} \quad dev(K \setminus D \cup R, \boldsymbol{l}, \boldsymbol{u}). \label{eq:replacements}
\end{equation}
Here, we are minimizing a \textbf{\textit{deviation function}} $dev$, which conceptually measures how well $R$ restores the quotas $\boldsymbol{l}, \boldsymbol{u}$ after $D$ drops out from $K$. Formally, a deviation function is a mapping that takes in a set of agents and quotas and outputs a real number describing how far that set of agents is from satisfying those quotas.

We will consider two specific deviation functions. First, the \textit{binary deviation} $(dev^{0/1})$  simply checks whether the quotas are satisfied, outputting 0 if yes and 1 if no. The \textit{linear deviation} $(dev^{\ell_1})$ is more continuous, measuring how far away each quota is from being satisfied by normalized $\ell_1$ distance.\footnote{Normalizing by any value in $[l_{f,v}, u_{f,v}]$ is equally principled, as any choice therein is equivalent when the quotas are tight. Our choice of $u_{f,v}$ is for technical convenience in \Cref{sec:robustness} and avoidance of divide-by-0 issues.} For a set of agents $S$ and quotas $\boldsymbol{l},\boldsymbol{u}$, we define these deviation functions as follows.
    \begin{align*}
        dev^{0/1}(S,\boldsymbol{l},\boldsymbol{u}):=&\begin{cases}
        0 \quad & \text{if } \sum_{i \in S} \mathbb{I}(f(i) = v) \in [l_{f,v},u_{f,v}] \qquad \forall \, f \in F, v \in V_f\\
        1\quad & \text{else } 
    \end{cases} \tag{\text{binary dev.}}\\    dev^{\ell_1}(S,\boldsymbol{l},\boldsymbol{u}):=& \sum_{f \in F}\sum_{v \in V_f}\frac{\max\left\{0,\,l_{f,v} - \sum_{i \in S} \mathbb{I}(f(i) = v),\,-u_{f,v} + \sum_{i \in S} \mathbb{I}(f(i) = v)\right\}}{u_{f,v}} \tag{\text{linear dev.}}
    \end{align*}
Note that the linear deviation is strictly more expressive than the binary deviation: the functions correspond exactly when $dev^{0/1} = dev^{\ell_1} = 0$, but otherwise the range of $dev^{0/1}$ is simply 1 while the range of $dev^{\ell_1}$ encompasses a spread of rational numbers. Throughout the paper, we will often drop the $\boldsymbol{l},\boldsymbol{u}$ from the $dev$ inputs when they are clear from context.


Finally, we evaluate an alternate selection algorithm that outputs $A$ according to the \textit{expected} deviation $dev$, over the randomness of drawing $D$ from the true dropout set distribution $\ddist_{\dprobs}$: 
\begin{equation} \label{eq:loss}
    L^{dev}(A;\ddist_{\dprobs},\inst) = \E_{D \sim \ddist_{\dprobs}} \left[\min_{R \subseteq A : |R| \leq |D|}dev(K \setminus D \cup R,\boldsymbol{l},\boldsymbol{u})\right] 
\end{equation}
Abusing notation slightly, we will write the loss with respect to the binary deviation $dev^{0/1}$ as $L^{0/1}$ and linear deviation $dev^{\ell_1}$ as $L^{\ell_1}$. We will sometimes compute the loss over a generic dropout set distribution $\mathcal{D}$, which is defined by \Cref{eq:loss} where all instances of $\ddist_{\dprobs}$ are replaced with $\ddist$. 
We will often drop the $\inst$ out of our loss function when it is clear from context.

\vspace{0.3em}
\noindent \textbf{An ILP for computing the optimal alternate set.} Finally, we express the optimal alternate set as the solution of an ILP. This ILP\emdash expressed here informally and written fully in \Cref{app:ILPformulation}\emdash is the program we solve when computing an optimal alternate set over a given dropout distribution $\mathcal{D}$. Note that by the program definition below, $\textsc{Opt}^{dev}(\mathcal{D}_{\dprobs},\inst)$ is the true optimal alternate set in instance $\inst$ with dropout probabilities $\dprobs$. In \textsc{Opt}$^{dev}$, the objective function is exactly the expected loss (\Cref{eq:loss}), but for illustrative purposes, we have expanded the expectation into its sum form, where $support(\ddist):=\{D \in 2^K : \ddist(D) > 0\}$ be the collection of unique dropout sets with nonzero probability in $\ddist$ (noting that any $D$ outside this support cannot contribute to the loss).

\vspace{0.3em}
\noindent \textsc{Opt}$^{dev}(\mathcal{D},\inst)$: 
\begin{align*}
    &\underset{A, \, R_{A,D} | D \in support(\mathcal{D})}{\arg\min}  \quad  \sum_{D \in support(\mathcal{D})} Pr[D \sim \mathcal{D}] \cdot dev(K \setminus D \cup R_{A,D}, \boldsymbol{l},\boldsymbol{u})\\
    &\text{s.t.} \quad |A| \leq a\\
    &\qquad \  A \subseteq N\\
    &\qquad \  dev(K \setminus D \cup R_{A,D},\boldsymbol{l},\boldsymbol{u}) \leq dev(K \setminus D \cup R,\boldsymbol{l},\boldsymbol{u}) \qquad \forall \ R \subseteq A : |R| \leq |D|, \ D \in support(\mathcal{D})
\end{align*}
Observe that the number of terms in the objective and the number of constraints grows in the size of the $support(\ddist)$. In our problem, this will be the dominant source of combinatorial blowup, as such, when $support(\ddist)$ is small, the ILP is also ``small'' and can be solved quickly in practice with ILP solvers. However, when this support is large (it would be $2^k$ when $\ddist = \ddist_{\dprobs}$), this ILP becomes intractable. This is precisely the advantage of our ERM-based approach, to be presented next: it allows us to solve \textsc{Opt} over an \textit{empirical version} of $\ddist_{\dprobs}$, whose support will be polynomial in size. In this way, we handle the most problematic source of combinatorial blowup -- the $2^k$ possible dropout sets -- while outsourcing the ``practically easy'' combinatorial aspects of the problem to \textsc{Opt}.

\section{\textsc{ERM-Alts}, an ERM-based alternate selection algorithm} \label{sec:algo}
While we cannot explicitly write down or optimize over $\ddist_{\dprobs}$, we can \textit{sample} it: to draw a $D \sim \ddist_{\dprobs}$, we must simply flip a $\dprob_i$-weighted coin for each agent, and those whose coins turn up 1 are those in $D$. Using this observation, we now present our algorithm, \algoname, which finds the alternate set that minimizes the \textit{empirical risk}: the expected loss on an empirical approximation of $\ddist_{\dprobs}$.

\begin{algorithm}
\caption{$\algoname^{dev}(s,\dprobs,\inst)$}
\DontPrintSemicolon
     $\hat{\ddist}_{\dprobs} \leftarrow$ Draw $s$ samples (dropout sets) from $\ddist_{\dprobs}$ to produce \textit{empirical} approximation of $\ddist_{\dprobs}$, called $\hat{\ddist}_{\dprobs}$.\;
     $\hat{A}^* \leftarrow$ Solve \textsc{Opt}$^{dev}\;(\hat{\ddist}_{\dprobs},\inst)$.\;
    \Return $\hat{A}^*$
\end{algorithm}

Our theoretical analysis of $\algoname^{dev}$ will seek to bound its sample complexity: \textit{how large must $s$ be to ensure that $\hat{\ddist}_{\dprobs} \approx \ddist_{\dprobs}$ such that the loss of $\algoname^{dev}(s,\dprobs,\inst)$ closely approximates the loss of \textsc{Opt}$^{dev}(\ddist_{\dprobs},\inst)$?} To ensure that the ILP \textsc{Opt}$^{dev}(\hat{\ddist}_{\dprobs},\inst)$ (solved in step 2 of the algorithm) is small enough to solve quickly, we ideally want $s$\emdash and therefore $support(\hat{\ddist_{\dprobs}})$\emdash to be polynomial in the parameters of the instance. We prove such sample complexity bounds by applying known results from PAC learning, so we now cast our problem within the PAC learning framework.

\subsection{Casting our problem within the PAC learning framework}
From PAC learning, we will primarily use the concept of \textit{agnostic PAC learnability}, which is defined as follows (see Definition 3.3, \cite{shalev2014understanding}).
\begin{definition}[\textbf{Agnostic PAC Learnability}] \label{def:agnostic} A hypothesis class of functions $\mathcal{H}$ with domain $\mathcal{X}$ and range $\mathcal{Y}$ is agnostic PAC learnable if there exists a function $s_{\mathcal{H}} : (0, 1)^2 \to \mathbb{N}$ and a learning algorithm
with the following property: for every $\delta,\varepsilon \in (0,1)$ and every distribution $\mathcal{D}$
over $\mathcal{X} \times \mathcal{Y}$, when running the learning algorithm on $s \geq s_{\mathcal{H}}(\varepsilon,\delta)$ i.i.d. examples generated by a fixed distribution $\mathcal{D}$, the algorithm returns a hypothesis $\hat{h} \in \mathcal{H}$ such that\[
\Pr[L(\hat{h}) - \min_{h \in \mathcal{H}} L(h)\leq \varepsilon]\geq 1-\delta,
\] 
where the probability is computed over the random draw of the choice of training set given to the learning algorithm (assumed to be drawn from $\mathcal{D}$) and the loss $L$ measures the extent to which $h$ fails to output the \textit{true label} $y \in \mathcal{Y}$ on input $x \in \mathcal{X}$, in expectation over $x,y \sim \mathcal{D}$.
\end{definition}
Mapping this onto our problem, we want to find an alternate set $\hat{A}$ (hypothesis $\hat{h}$) within $\mathcal{A}$ (hypothesis class $\mathcal{H}$) such that with high probability (probability at least $1-\delta$), $\hat{A}$ has near-optimal expected loss (has loss within $\varepsilon$ of that of the loss-minimizing hypothesis) over dropout sets (samples) drawn from $\ddist_{\dprobs}$ (a fixed distribution $\mathcal{D}$). Our learning algorithm is empirical risk minimization (Algorithm 1), corresponding to a classical algorithm for PAC learning.

Fix an instance $\inst$. As illustrated above, our hypothesis class in $\inst$ is just the collection of all alternate sets $\mathcal{A}(\inst)$. To distinguish actual alternate sets (sets of agents) and hypotheses (functions), we let $h_A$ be the hypothesis corresponding to $A$. For all $a \in \mathcal{A}(\inst)$, each $h_A$ takes as input a given dropout set (and technically the instance, but we leave this implicit) and outputs a real number describing how well $A$ can replace that dropout set, as measured by $dev$. Because $dev^{0/1}$ and $dev^{\ell_1}$ have different ranges, we must define two hypothesis classes per instance: a \textit{binary} hypothesis class $\mathcal{H}^{0/1}(\inst) = \{h_A^{0/1} | A \in \mathcal{A}(\inst)\}$, and a \textit{linear} hypothesis class $\mathcal{H}^{\ell_1}(\inst) = \{h_A^{\ell_1} | A \in \mathcal{A}(\inst)\}$. The output of $h_A^{dev}$ is exactly the minimum deviation $dev$ over all replacement sets, as defined in \Cref{eq:replacements}:
\begin{equation} \label{eq:hypots}
    h_A^{0/1}(D)= \min_{R \subseteq A : |R| \leq |D|} dev^{0/1}(K \setminus D \cup R) \qquad \text{and} \qquad h_A^{\ell_1}(D) = \min_{R \subseteq A : |R| \leq |D|} dev^{\ell_1}(K \setminus D \cup R).
\end{equation}
Across these two hypothesis classes, the \textit{labeling} function is the same: the ``true'' label for any $D$ is 0, reflecting that $A$ can ideally maintain quota satisfaction after $D$ drops out. Formally, we define a labeling function as $\tau(D;\inst) = 0$, which we shorten to $\tau(D)$. Of course, there may exist no $A \in \mathcal{A}(\inst)$ such that $h_A(D) = 0$ for all $D$; this consideration is encompassed by the definition of agnostic PAC learning, where we aim to compete with the best available hypothesis.

Finally, we show that by minimizing the $L^{0/1}$ and $L^{\ell_1}$ loss as in \Cref{sec:model}, we are respectively minimizing the standard $0$-$1$ loss and $\ell_1$ loss from PAC learning. Fixing $A \in \mathcal{A}$, we apply the definition of the loss (\Cref{eq:loss}) and our definition of hypotheses (\Cref{eq:hypots}):
\begin{align*}
    L^{0/1}(A;\ddist_{\dprobs}) &= \E_{D \sim \ddist_{\dprobs}}\left[\mathbb{I}(h_A^{0/1}(D) \neq 0\right] = \E_{D \sim \ddist_{\dprobs}}\left[\mathbb{I}(h_A^{0/1}(D) \neq \tau(D)\right]\text{, the 0/1 PAC learning loss;}\\
    L^{\ell_1}(A;\ddist_{\dprobs}) &= \E_{D \sim \ddist_{\dprobs}}\left[\left|h_A^{\ell_1}(D) - 0\right|\right] = \E_{D \sim \ddist_{\dprobs}}\left[\left|h_A^{\ell_1}(D) - \tau(D)\right|\right]\text{, the $\ell_1$ PAC learning loss.}
\end{align*}


This completes our reduction to agnostic PAC learning, which we will use to apply known sample complexity bounds (i.e., bounds on $s_{\mathcal{H}(\epsilon,\delta)}$ in \Cref{def:agnostic}) from the PAC learning literature. These known bounds, stated in \Cref{app:restatement}, depend on the ``dimension'', or richness, of $\mathcal{H}$. To apply them, we must therefore characterize the respective dimensions of $\mathcal{H}^{0/1}(\inst)$ and $\mathcal{H}^{\ell_1}(\inst)$. Because $\mathcal{H}^{0/1}(\inst)$ consists of hypotheses with range $\{0,1\}$, its relevant dimensionality measure is the \textit{VC-dimension} ($\VC$); hypotheses in $\mathcal{H}^{\ell_1}(\inst)$ have real-valued range, so this class's dimension is measured by a generalization of the VC-dimension called the \textit{Pseudodimension} ($\Pdim$).
We define these notions of dimension below in the same notation as in \Cref{def:agnostic}; then, in the next section, we tightly bound our hypothesis classes' respective dimensions.

\begin{definition}[\textbf{VC-Dimension}]
    The \textit{VC-Dimension} of $\mathcal{H}$ (whose hypotheses have range $\{0,1\}$) is the size of the largest set of points in $\mathcal{X}$ that can be \textit{shattered} by $\mathcal{H}$. A set of points $X$ can be shattered by $\mathcal{H}$ iff for all labeling functions $b : X \to \{0,1\}^{|X|}$, there exists an $h_b \in \mathcal{H}$ such that $h_b(x) = b(x)$ for all $x \in X$.
\end{definition}

\begin{definition}[\textbf{Pseudodimension}]
    The \textit{Pseudodimension} of $\mathcal{H}$ (whose hypotheses have range $[0,1]$) is the size of the largest set of points in $\mathcal{X}$ that can be \textit{pseudo-shattered} by $\mathcal{H}$. A set of points $X$ can be pseudo-shattered by $\mathcal{H}$ iff there exists a \textit{witness vector} $r \in [0,1]^{|X|}$ (indexed as $r_x$) such that for all labeling functions $b : X \to \{0,1\}^{|X|}$, there exists an $h_b \in \mathcal{H}$ such that $\mathbb{I}(h_b(x) - r_x > 0) = b(x)$ for all $x \in X$.
\end{definition}


\subsection{Bounds on VC dimension of $\mathcal{H}^{0/1}$ and Pseudimension of $\mathcal{H}^{\ell_1}$}
Our hypothesis classes are instance-dependent (since the set of available alternate sets depends on who is in the instance's pool), but we want to give dimension bounds that apply across instances. Thus, for each type of hypothesis class we will examine its \textit{worst-case} dimensionality over all instances within a given collection. We define a collection of instances as follows: let $\mathfrak{I}(n, a)$ denote all instances $\mathcal{I}$ with $|N| = n$ and an alternate budget of $a$. Similarly let $\mathfrak{I}(n,a,|F|)$ denote all instances $\mathcal{I}$ with $|N|=n$, an alternate budget of $a$, and $|F|$ features.

\begin{theorem}[\textbf{Main $\boldsymbol{\VC, \, \Pdim}$ Bounds}]\label{thm:mainbounds} Fix any $n,a \in \mathbb{N}_{\geq 1}$ such that $n \geq a$. Then,
    \[\max_{\inst \in \mathfrak{I}(n,a)}\VC(\mathcal{H}^{0/1}(\inst)), \, \max_{\inst \in \mathfrak{I}(n,a)} \Pdim(\mathcal{H}^{\ell_1}(\inst) ) \in \Theta(a \log n).\]
\end{theorem}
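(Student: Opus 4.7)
My plan is to prove matching upper and lower bounds on both $\VC(\mathcal{H}^{0/1}(\inst))$ and $\Pdim(\mathcal{H}^{\ell_1}(\inst))$, handling the two hypothesis classes largely in parallel.

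For the \emph{upper bound}, both classes are indexed by $\mathcal{A}(\inst)$, so each contains at most $\binom{n}{a} \leq n^a$ hypotheses. A standard fact from learning theory is that both $\VC$ (for binary classes) and $\Pdim$ (for real-valued classes) are bounded by $\log_2$ of the hypothesis-class size: shattering or pseudo-shattering $m$ points would require $2^m$ distinct restrictions of the class to those points, hence $|\mathcal{H}| \geq 2^m$. This yields the upper bound $a \log_2 n$ uniformly for both classes.

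For the \emph{lower bound}, I would first reduce the $\Pdim$ claim to the $\VC$ claim: any set $\{D_1, \ldots, D_m\}$ shattered by $\mathcal{H}^{0/1}(\inst)$ is automatically pseudo-shattered by $\mathcal{H}^{\ell_1}(\inst)$ using the constant witness vector $r \equiv 0$, because $dev^{0/1}(S) = 0$ exactly when $dev^{\ell_1}(S) = 0$ and otherwise $dev^{\ell_1}(S) > 0$. So it suffices to exhibit a single instance with $\VC(\mathcal{H}^{0/1}(\inst)) = \Omega(a \log n)$. To do so I would partition the pool $N$ into $a$ disjoint blocks of size $n/a$, index each block's members by a distinct binary string of length $\log_2(n/a)$ via auxiliary binary features $f_{j,\ell}$, and use ``slot'' features $g_j$ with tight quotas $l_{g_j,1} = u_{g_j,1} = 1$ to force every quota-restoring replacement to contain exactly one alternate from each block. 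For each block--bit pair $(j,\ell)$, I would design a dropout set $D_{j,\ell}$ together with buffering panelists and carefully tuned tight/loose quotas on $f_{j,\ell}$ so that $D_{j,\ell}$ is perfectly replaceable by $A$ iff the $\ell$-th bit of $A$'s block-$j$ member equals $1$. Varying $A$ over the $(n/a)^a$ one-per-block configurations then realizes all $2^{a \log_2(n/a)}$ labelings on $\{D_{j,\ell}\}$, shattering $a \log_2(n/a) = \Theta(a \log n)$ points in the regime where $a$ is polynomially smaller than $n$.

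The main obstacle is ensuring the bit-tests are \emph{independent}: the chosen block-$j$ alternate $g_{j,x_j}$ carries \emph{all} of its own bits $b(x_j)_1, \ldots, b(x_j)_{\log_2(n/a)}$ into the final panel, so naively tightening the quota on $f_{j,\ell}$ to test the $\ell$-th bit would couple the test to the other bit tests within block $j$. My fix is to pad the panel with additional panelists whose feature values buffer the quotas on $f_{j,\ell'}$ for $\ell' \neq \ell$, so those quotas accept any bit carried by the block-$j$ alternate, while the $f_{j,\ell}$-quota is pre-tuned so that $D_{j,\ell}$ opens a one-unit deficit repairable exactly when the alternate's $\ell$-th bit is $1$. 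Verifying that these slack/tight conditions can coexist across all $a \log_2(n/a)$ bit-tests simultaneously is the heart of the lower-bound argument.
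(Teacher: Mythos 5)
Your upper bound and your reduction of the pseudodimension claim to the VC claim are both correct and essentially match the paper: the paper likewise bounds $\VC$ and $\Pdim$ by $\log\binom{n}{a}$ via finiteness of the hypothesis class, and it likewise derives the $\Pdim$ lower bound from the $\VC$ lower bound using the equivalence $dev^{\ell_1}(S)=0 \iff dev^{0/1}(S)=0$. (The paper uses the constant witness $1/(k+1)$ rather than $0$; under the paper's strict-inequality definition of pseudo-shattering your zero witness also works, though a strictly positive one is more robust to the non-strict convention.)

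The genuine gap is in the VC lower bound. You correctly identify the hard part of your own construction --- making the $a\log_2(n/a)$ bit-tests independent when the chosen block-$j$ alternate injects \emph{all} of its bits into the repaired panel at once, under two-sided quotas --- and then you defer exactly that verification, so the proof is incomplete. It is not evident that the buffering can be arranged consistently: each $D_{j,\ell}$ must simultaneously force the right alternates into the replacement set (respecting $|R|\leq|D|$), open a one-unit deficit on $f_{j,\ell}$ repairable only by bit value $1$, and leave every other $f_{j',\ell'}$ with two-sided slack against whatever bits the chosen alternates happen to carry, for all $2^{a\log_2(n/a)}$ choices of $A$ at once. The paper's construction shows this difficulty is avoidable rather than essential: it uses one binary feature per shattered point, sets $l_{f,1}=1$ with \emph{infinite upper quotas} (so no coupling between tests is possible), makes each panelist the unique holder of one feature's $1$-value, and takes each shattered dropout set to be a singleton panelist; the single alternate chosen from block $G_j$ then encodes the labels of all features in its block simultaneously, and carrying extra $1$s is harmless. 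To complete your route you would need to exhibit the quotas and buffering panelists explicitly and verify these conditions; switching to lower-quotas-only dissolves the problem. (Both your bound and the paper's are really meaningful only when $a$ is polynomially smaller than $n$ --- at $a=n$ there is one alternate set and the dimension is $0$ --- and you are appropriately explicit about this restriction.)
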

\begin{proof}[Proof: Upper bounds]
    Fix an arbitrary instance $\inst$. Both upper bounds are proven by the same argument, which relies on the simple fact that both our hypothesis classes $\mathcal{H}^{0/1}(\inst)$ and $\mathcal{H}^{\ell_1}(\inst)$ are finite. Their sizes are upper-bounded by $|\mathcal{A}|$, which is of size at most ${n \choose a}$, as there are at most this many $a$-subsets of $N$. 
    We now apply the well-known fact that $\VC(\mathcal{H}) \leq \log(|\mathcal{H}|)$, and likewise, $\Pdim(\mathcal{H}) \leq \log(|\mathcal{H}|)$. It follows that
    \[\VC(\mathcal{H}^{0/1}(\inst)),\Pdim(\mathcal{H}^{\ell_1}(\inst)) \leq \log {n \choose a} \leq  \log\left(\left(\frac{en}{a}\right)^a\right) = a(\log n - \log a + 1) \in O(a \log n). \qedhere\]
\end{proof}
\vspace{-0.5em}
The lower bounds on VC dimension and pseudodimension both arise from the same instance. At a high level, the instance uses $|F| = k = \lfloor \log{n \choose a} \rfloor - 2a$ binary features, a panel of size $|F|$ consisting of all $|F|$-length basis vectors, and a shattering set of size $|F|$ where each dropout set within it contains a single panelist. From there, the work of the proof, is engineering the pool to make any subset of this collection of dropout sets replaceable. The full proof is located in \Cref{app:mainbounds}.

The construction giving this lower bound uses many features: $|F| = k = \lfloor \log{n \choose a}\rfloor - 2a$. This prompts the question: can we prove sample complexity bounds that improve as $|F|$ gets smaller? We present such bounds below; their proofs are located in \Cref{app:FdepVCPdim}. We remark that the condition $n \geq a 2^{|F|/a}$ is required only for the lower bound.
\begin{theorem} [$\boldsymbol{|F|}$\textbf{-dependent} $\boldsymbol{\VC, \, \Pdim}$ \textbf{Bounds}]  \label{thm:FdepVCPdim}Fix any $n,a,|F| \in \mathbb{N}_{\geq 1}$ such that $n \geq a 2^{|F|/a}$. Taking $|F|$ to be a varying parameter and $\max_{f \in F} |V_f|$ to be a constant, we have that
 \[\max_{\inst \in \mathfrak{I}(n,a,|F|)}\VC(\mathcal{H}^{0/1}(\inst)), 
 \max_{\inst \in \mathfrak{I}(n,a,|F|)}\Pdim(\mathcal{H}^{\ell_1}(\inst)) \in \left[|F|, O\left(a|F|\log\max_{f\in F}|V_f|\right)\right] \in \left[|F|, O(a\,|F|)\right].\]
\end{theorem}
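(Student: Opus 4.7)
The plan is to establish the upper bound $O(a|F|\log V)$ where $V := \max_{f \in F}|V_f|$ by hypothesis counting, and to recover the lower bound of $|F|$ by specializing the construction from the proof of \Cref{thm:mainbounds}.

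For the upper bound, I would again use $\VC(\mathcal{H}) \leq \log|\mathcal{H}|$ and $\Pdim(\mathcal{H}) \leq \log|\mathcal{H}|$, but replace the naive count $|\mathcal{H}| \leq \binom{n}{a}$ with a tighter one in terms of feature vectors. The key observation is that $h_A$ depends on $A$ only through the multiset of feature vectors $\{w(i) : i \in A\}$: both deviation functions only see counts of each $(f,v)$ pair in the final set, and the inner minimization over $R \subseteq A$ with $|R|\leq |D|$ in \Cref{eq:hypots} is invariant to swapping agents with identical feature vectors. Consequently, two alternate sets with equal feature-vector multisets induce the same hypothesis. Because there are at most $\prod_{f \in F}|V_f| \leq V^{|F|}$ distinct feature vectors, the number of effectively distinct hypotheses is at most $\binom{V^{|F|}+a-1}{a}$, whose logarithm is $O(a|F|\log V)$; this bound is instance-free and so applies to the max.

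For the lower bound, I would reuse the construction from the proof of \Cref{thm:mainbounds}, treating $|F|$ as a free parameter rather than fixing it to $\lfloor \log\binom{n}{a}\rfloor - 2a$. Concretely: take $|F|$ binary features, a panel $K$ of $|F|$ panelists whose feature vectors are the standard basis of $\{0,1\}^{|F|}$, and quotas $l_{f,1}=1$ with all remaining bounds slack so that satisfying the quotas after dropouts reduces to covering each missing $1$. Partition the features into $a$ groups of roughly equal size $m = \lceil |F|/a \rceil$ (handling indivisibility exactly as in the main proof), and for each group $g_j$ place into the pool $2^m$ agents whose values on $g_j$ range over all binary patterns and are zero elsewhere; this is feasible precisely because $n \geq a\,2^{|F|/a}$. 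Shatter the $|F|$ singletons $D_i = \{i\}$ by choosing, for any $b \in \{0,1\}^{|F|}$, the alternate set $A_b$ containing one agent per group whose pattern on $g_j$ is the bitwise complement of $b|_{g_j}$; correctness is verbatim the argument from the main proof. The pseudodimension lower bound is then immediate from the VC-to-Pdim bootstrap established in \Cref{thm:mainbounds}, using the $1/(k+1)$ witness vector.

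The principal obstacle is the upper bound: the step that lets us trade $\binom{n}{a}$ for the much tighter $\binom{V^{|F|}+a-1}{a}$ is the factorization of $h_A$ through the multiset of feature vectors, which, while conceptually clean, requires care about the inner minimization and the definitions of both deviation functions. The remainder\emdash the lower-bound construction and the pseudodimension bootstrap\emdash is a direct specialization of machinery already in place.
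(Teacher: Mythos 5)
Your proposal is correct and follows essentially the same route as the paper: the upper bound counts distinct hypotheses via the multiset coefficient $\binom{V^{|F|}+a-1}{a}$ over feature-vector multisets (you are in fact more explicit than the paper about why $h_A$ factors through the feature-vector multiset), and the lower bound specializes the \Cref{thm:mainbounds} construction with $|F|$ as a free parameter, reusing the $1/(k+1)$-witness bootstrap for the pseudodimension. The only cosmetic difference is that you set $k = |F|$ while the paper pads the panel with $k - |F|$ all-zero panelists to allow any $k \geq |F|$; since the bound is a max over instances, either choice suffices.
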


\subsection{Sample complexity bounds for \algoname}

We now plug our upper bounds from \Cref{thm:mainbounds} into known dimension-dependent sample complexity bounds (\Cref{thm:vc-existing-bounds} and \Cref{lem:pdimexistingbound} respectively, stated in \Cref{app:restatement}), to give sample size-dependent formal guarantees on the loss of \textsc{ERM-Alts}. The key takeaway of these bounds is that we need few samples (only logarithmic in $n$), which is important because our ILP \textsc{Opt} scales commensurately. One could achieve analogous $|F|$-dependent results by plugging in \Cref{thm:FdepVCPdim}. 
\begin{corollary}[\textbf{Bound for }$\algoname^{0/1}$]\label{cor:binary-sample-complexity}
   Fix $\inst,\dprobs$ and constants $\varepsilon, \delta > 0$. There exists 
    \[s \in O\left((a\log n+\log(1/\delta)) \cdot 1/\varepsilon^2\right) \quad \text{such that}\]
    \[\Pr\left[L^{0/1}(\textsc{ERM-Alts}^{0/1}(s,\dprobs,\inst);\ddist_{\dprobs}) - L^{0/1}(\textsc{Opt}^{0/1}(\ddist_{\dprobs},\inst);\ddist_{\dprobs})\leq \varepsilon\right] \geq 1-\delta,\]
    where the probability above is over the random sampling of $\hat{\ddist}_{\dprobs}$ in \textsc{ERM-Alts}.
\end{corollary}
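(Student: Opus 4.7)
The plan is to invoke a standard agnostic PAC learning sample complexity bound for binary-valued hypothesis classes and then plug in the VC-dimension upper bound established in \Cref{thm:mainbounds}. The setup has already been done: in the previous subsection, the alternate selection problem was cast within the agnostic PAC learning framework, with hypothesis class $\mathcal{H}^{0/1}(\inst)$, labeling function $\tau(D) = 0$, and the PAC $0/1$ loss coinciding with $L^{0/1}$. Moreover, $\algoname^{0/1}$ is precisely an empirical risk minimizer over $\mathcal{H}^{0/1}(\inst)$: step 2 of the algorithm calls $\textsc{Opt}^{0/1}(\hat{\ddist}_{\dprobs},\inst)$, which by definition returns the alternate set minimizing the empirical loss on the $s$ sampled dropout sets.

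First I would invoke the classical agnostic PAC sample complexity bound for binary-valued hypothesis classes (stated as \Cref{thm:vc-existing-bounds} in \Cref{app:restatement}): for any hypothesis class $\mathcal{H}$ with range $\{0,1\}$, the empirical risk minimizer satisfies
\[
\Pr\bigl[L(\hat{h}) - \min_{h \in \mathcal{H}} L(h) \leq \varepsilon\bigr] \geq 1 - \delta
\]
whenever the sample size satisfies $s \in O\bigl((\VC(\mathcal{H}) + \log(1/\delta)) \cdot 1/\varepsilon^2\bigr)$. Applied to our setting with $\mathcal{H} = \mathcal{H}^{0/1}(\inst)$, the left-hand side of the probability becomes exactly the quantity in the corollary statement, since $\algoname^{0/1}$ is the empirical risk minimizer and $\textsc{Opt}^{0/1}(\ddist_{\dprobs},\inst)$ corresponds to the hypothesis $h_A$ minimizing $L^{0/1}(A; \ddist_{\dprobs})$ over $A \in \mathcal{A}(\inst)$.

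Next I would substitute the upper bound $\VC(\mathcal{H}^{0/1}(\inst)) \in O(a \log n)$ from \Cref{thm:mainbounds}, which applies uniformly for any instance in $\mathfrak{I}(n,a)$. Putting the two bounds together yields the stated sample size $s \in O\bigl((a \log n + \log(1/\delta)) \cdot 1/\varepsilon^2\bigr)$, completing the proof.

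Honestly, there is no real obstacle here: all the heavy lifting was done in casting the problem as PAC learning and in proving the VC-dimension bound in \Cref{thm:mainbounds}. The only subtlety worth flagging explicitly is the identification between the ERM output $\hat{A}^*$ and the standard PAC learner's output $\hat{h}$, and between the comparator $\textsc{Opt}^{0/1}(\ddist_{\dprobs},\inst)$ and $\arg\min_{h \in \mathcal{H}^{0/1}(\inst)} L(h)$; both follow immediately from the definitions of $\textsc{Opt}^{0/1}$ and of the hypothesis class, so the corollary reduces to a direct application of the off-the-shelf agnostic PAC bound.
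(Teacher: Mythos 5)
Your proposal matches the paper's own argument exactly: the paper derives this corollary by noting that $\algoname^{0/1}$ is an empirical risk minimizer over $\mathcal{H}^{0/1}(\inst)$ in the agnostic PAC framework set up in the preceding subsection, applying the uniform-convergence bound of \Cref{thm:vc-existing-bounds}, and substituting the $O(a\log n)$ VC-dimension upper bound from \Cref{thm:mainbounds}. Your identification of $\hat{A}^*$ with the PAC learner's output and of $\textsc{Opt}^{0/1}(\ddist_{\dprobs},\inst)$ with the loss-minimizing hypothesis is precisely the (implicit) content of the paper's proof, so nothing is missing.
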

\begin{corollary}[\textbf{Bound for }$\algoname^{\ell_1}$]\label{cor:sample-complexity-linear}  Fix $\inst,\dprobs$ and constants $\varepsilon, \delta > 0$. There exists 
    \[s \in O\left((a\log n+\log(1/\varepsilon) + \log(1/\delta)) \cdot 1/\varepsilon^2\right) \quad \text{such that}\]
  \[\Pr\left[L^{\ell_1}(\textsc{ERM-Alts}^{\ell_1}(s,\dprobs,\inst);\ddist_{\dprobs}) - L^{\ell_1}(\textsc{Opt}^{\ell_1}(\ddist_{\dprobs},\inst);\ddist_{\dprobs})\leq \varepsilon\right] \geq 1-\delta.\]
\end{corollary}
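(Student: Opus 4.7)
The plan is to prove Corollary 4.5 by the exact same pattern as Corollary 4.4, except substituting the Pseudodimension upper bound from \Cref{thm:mainbounds} into a known Pseudodimension-based agnostic PAC learning sample complexity lemma (\Cref{lem:pdimexistingbound}) instead of the VC version. The groundwork has already been laid in \Cref{sec:algo}: the embedding of our alternate-selection problem into the agnostic PAC learning framework of \Cref{def:agnostic} carries over directly to the linear case, with hypothesis class $\mathcal{H}^{\ell_1}(\inst) = \{h_A^{\ell_1} : A \in \mathcal{A}(\inst)\}$, training distribution $\ddist_{\dprobs}$ on $2^K$, constant label function $\tau(D)=0$, and PAC $\ell_1$ loss coinciding with $L^{\ell_1}$. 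By construction, $\algoname^{\ell_1}$ is empirical risk minimization over $s$ i.i.d.\ samples drawn from $\ddist_{\dprobs}$ via \textsc{Opt}$^{\ell_1}$.

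Concretely, I would first invoke \Cref{lem:pdimexistingbound}, which asserts that for any hypothesis class $\mathcal{H}$ whose hypotheses take values in $[0,1]$, the ERM algorithm over $s$ i.i.d.\ samples returns a hypothesis with loss within $\varepsilon$ of the best-in-class with probability at least $1-\delta$, provided
\[
s \in O\!\left(\frac{\Pdim(\mathcal{H})+\log(1/\varepsilon)+\log(1/\delta)}{\varepsilon^2}\right).
\]
The $\log(1/\varepsilon)$ summand (absent in the VC-based \Cref{cor:binary-sample-complexity}) arises from the standard $\varepsilon$-cover argument used for real-valued uniform convergence. I would then substitute our upper bound $\Pdim(\mathcal{H}^{\ell_1}(\inst)) \in O(a \log n)$ from \Cref{thm:mainbounds}, which immediately produces the claimed sample complexity, and conclude that $\hat{A}^* = \algoname^{\ell_1}(s,\dprobs,\inst)$ satisfies $L^{\ell_1}(\hat{A}^*;\ddist_{\dprobs}) - L^{\ell_1}(\textsc{Opt}^{\ell_1}(\ddist_{\dprobs},\inst);\ddist_{\dprobs}) \leq \varepsilon$ with probability at least $1-\delta$ over the random draw of $\hat{\ddist}_{\dprobs}$.

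The only subtlety, rather than a real obstacle, is normalization: hypotheses in $\mathcal{H}^{\ell_1}(\inst)$ take values in $[0, |FV|]$ rather than $[0,1]$, since $dev^{\ell_1}$ sums over all feature--value pairs. I would handle this either by rescaling every hypothesis (and $\varepsilon$) by the constant factor $|FV|$, or by citing a range-aware variant of \Cref{lem:pdimexistingbound}; either way the asymptotic dependence on $a$, $n$, $\varepsilon$, $\delta$ is unaffected because $|FV|$ is an instance constant that is absorbed into the big-$O$. Beyond this bookkeeping, the argument is a direct plug-in and inherits all the heavy lifting from \Cref{thm:mainbounds} and the standard learning-theoretic uniform convergence bound.
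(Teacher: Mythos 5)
Your proposal matches the paper's own argument exactly: the corollary is obtained by plugging the $\Pdim(\mathcal{H}^{\ell_1}(\inst)) \in O(a\log n)$ upper bound from \Cref{thm:mainbounds} into the Pseudodimension-based uniform-convergence bound of \Cref{lem:pdimexistingbound}, with the PAC reduction from \Cref{sec:algo} guaranteeing that ERM over $\hat{\ddist}_{\dprobs}$ is the right learning algorithm. Your remark on rescaling the hypothesis range $[0,|FV|]$ to $[0,1]$ is a sensible piece of bookkeeping that the paper leaves implicit and does not affect the asymptotics.
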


\section{Robustness to mis-estimated dropout probabilities} \label{sec:robustness}

So far, we have assumed that panelists' dropout probabilities $\dprobs$ are known. However, in practice they must be estimated from historical data based on known features. This prompts the question: How robust is our ERM algorithm when dropout probability estimates are subject to prediction errors? 

To formalize this, let $\tilde{\dprob}_i$ be our estimate of the agent $i$'s dropout probability, and correspondingly, let $\tilde{\dprobs} = (\tilde{\dprob}_i |i \in K)$. Then, let our \textit{prediction error} $\err$ be defined as the largest difference between any dropout probability estimate and its true value, i.e., $\err:=\|\dprobs-\tilde{\dprobs}\|_\infty.$ Ideally, the loss of an algorithm given $\tilde{\dprobs}$ should approach its loss given $\dprobs$ as $\err \to 0$. In fact, we find that whether this is true depends on our choice of deviation function: $\algoname^{0/1}$ suffers arbitrary loss under vanishingly small prediction errors (\Cref{thm:lb-robust-binary}), while $\algoname^{\ell_1}$ is highly robust (\Cref{thm:ub-robust-linear,thm:lb-robust-linear}).

For all proofs in this section, we weaken one assumption: we permit the replacement set $R$ to be larger than $D$.\footnote{\Cref{thm:lb-robust-binary} holds without this relaxation, but we have not been able to determine whether \Cref{thm:ub-robust-linear} does.} Formally, we use the following modified version of the loss (\Cref{eq:loss}), where the optimization occurs over all $R \subseteq A$ instead of just $R \subseteq A$ such that $|R| \leq |D|$:
\[\mathcal{L}^{dev}(A;\ddist_{\dprobs},\inst) := \E_{D \sim \ddist_{\dprobs}} \left[\min_{R \subseteq A }dev(K \setminus D \cup R,\boldsymbol{l},\boldsymbol{u})\right].\]
Even with this relaxation, we will still be able to prove separation between the two deviation functions' robustness. Note that this relaxation does not affect our results from \Cref{sec:algo}.

In our proofs we will use the following shorthand, so $\ermtruealts{dev}$ and $\ermfakealts{dev}$ are the alternate sets given by our algorithm given the \textit{true} and \textit{estimated} dropout probabilities, respectively, and $\opttruealts{dev}$ and $\optfakealts{dev}$ are the analogous optimal alternate sets.
\begin{align*}
\ermtruealts{dev} &:= \algoname^{dev}(s,\dprobs,\inst) \qquad \qquad &\ermfakealts{dev} &:= \algoname^{dev}(s,\tilde{\dprobs},\mathcal{I})\\
\opttruealts{dev} &:= \opttruealts{dev}(s,\ddist_{\dprobs},\inst) \qquad \qquad &\optfakealts{dev} &:= \textsc{OPT}^{dev}(s,\ddist_{\tilde{\dprobs}},\mathcal{I}),
\end{align*} 
Henceforth dropping $\inst, \boldsymbol{l},\boldsymbol{u}$ from the notation, our formal goal is to bound the added loss $\mathcal{L}$ of our algorithms due to mis-estimation,
\[ \mathcal{L}^{dev}(\ermfakealts{dev},\ddist_{\dprobs}) - \mathcal{L}^{dev}(\ermtruealts{dev},\ddist_{\dprobs}).\] 
We will generally bound this quantity by bounding the analogous difference for $\optfakealts{dev}$ and $\opttruealts{dev}$ (which will be \textit{deterministic}), and then use that, for sufficient $s$, $\ermtruealts{dev} = \opttruealts{dev}$ and $\ermfakealts{dev} = \optfakealts{dev}$ (which will be \textit{with high probability}). This last translation step will require our lower bound instances to be constructed with some care.

We first show that $\algoname^{0/1}$ is \textit{arbitrarily} non-robust. Recalling that the maximum possible binary loss is 1, this means that for arbitrarily small $\gamma$, there exists an instance where estimation error $\gamma$ increases the binary loss by a quantity arbitrarily close to 1 with high probability. 

\begin{theorem}[\textbf{Binary loss lower bound}]\label{thm:lb-robust-binary}  
Fix any constants $\alpha \in (0,1), \delta \in (0, 1/2], \gamma \in (0,1)$. There exists $\mathcal{I}$, $\boldsymbol{\rho}$ and $\tilde{\boldsymbol{\rho}}$ with $\|\boldsymbol{\rho}-\tilde{\boldsymbol{\rho}}\|_\infty \leq \gamma$, and $s(\alpha,\delta,\gamma)$ such that for all $s \geq s(\alpha,\delta,\gamma)$, 
\[\Pr\left[\mathcal{L}^{0/1}(\ermfakealts{0/1}; \ddist_{\dprobs}) - \mathcal{L}^{0/1}(\ermtruealts{0/1}; \ddist_{\dprobs})  \geq 1- \alpha\right] \geq 1-2\delta. \]
\end{theorem}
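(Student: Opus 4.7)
The plan is to construct an explicit instance with a carefully chosen ``mirror'' pair of optima: an alternate set $A^*$ that is the unique minimizer under the true probabilities $\dprobs$ with true loss $0$, and a set $\tilde A^*$ that is the unique minimizer under the estimated probabilities $\tilde{\dprobs}$ but whose true loss is at least $1-\alpha$. Concretely, we set $a = \lceil \log(1/\alpha)/\gamma \rceil$ so that $(1-\gamma)^a \leq \alpha$, and build $\inst$ with panel $K = \{p_1, \ldots, p_{2a}\}$, pool $N = \{a_1, \ldots, a_{2a}\}$, $|F| = 2a$ binary features, and quotas $l_{i,1} = u_{i,1} = 1$ for each feature $i$. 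Each $p_i$ (and $a_i$) has value $1$ only on feature $i$, which forces the only valid replacement for a dropped $p_i$ to be $a_i$; equivalently, an alternate set $A$ covers a dropout set $D$ iff $\{i : p_i \in D\} \subseteq \{i : a_i \in A\}$. The probability vectors are $\rho_i = \gamma$ for $i \in [a]$ and $\rho_i = 0$ otherwise, with $\tilde{\rho}_i$ exactly reflected, yielding $\|\dprobs - \tilde{\dprobs}\|_\infty = \gamma$.

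A direct computation on this structure shows that $A^* = \{a_1, \ldots, a_a\}$ is the unique optimum under $\dprobs$, covering every positive-probability dropout; its loss is $0$, and any $A' \neq A^*$ omits some $a_i$ with $\rho_i = \gamma$, incurring true loss at least $\gamma$. By the mirror symmetry, $\tilde A^* = \{a_{a+1}, \ldots, a_{2a}\}$ is the unique optimum under $\tilde{\dprobs}$ with $\ddist_{\tilde{\dprobs}}$-loss $0$ and runner-up gap $\gamma$, but it fails on every scenario where some $p_i$ with $i \leq a$ drops, so its true loss equals $1 - (1-\gamma)^a \geq 1 - \alpha$.

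It then remains to transfer this deterministic separation to \algoname. The plan is to invoke \Cref{cor:binary-sample-complexity} on each of the two invocations $\algoname^{0/1}(s, \dprobs, \inst)$ and $\algoname^{0/1}(s, \tilde{\dprobs}, \inst)$ with slack $\varepsilon = \gamma/2$, giving $s \in O((a\log n + \log(1/\delta))/\gamma^2)$ under which each returns, with probability at least $1 - \delta$, an alternate set whose loss is within $\varepsilon < \gamma$ of the optimum in its respective distribution. Because the runner-up gap on each side is $\gamma$, such near-optimality pins down the exact optimum, so $\ermtruealts{0/1} = A^*$ and $\ermfakealts{0/1} = \tilde A^*$ simultaneously with probability at least $1 - 2\delta$ by a union bound, yielding $\mathcal{L}^{0/1}(\ermfakealts{0/1}; \ddist_{\dprobs}) - \mathcal{L}^{0/1}(\ermtruealts{0/1}; \ddist_{\dprobs}) \geq 1 - \alpha$.

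The hard part will be not the sample-complexity bookkeeping but locating a construction in which the $\gamma$-per-coordinate perturbation is amplified by the binary loss's discontinuity into a loss gap approaching $1$. The decisive design choice is to use $a = \Theta(\log(1/\alpha)/\gamma)$ ``hidden-risk'' panelists rather than a single one, so that the product $(1-\gamma)^a$ shrinks to at most $\alpha$ precisely when the estimated optimum commits the wrong coverage on all $a$ of them simultaneously; with fewer hidden-risk panelists, the true loss of $\tilde A^*$ would be at most $O(\gamma)$, matching the per-coordinate perturbation and giving no meaningful separation.
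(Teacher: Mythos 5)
Your proof is correct and follows the same overall strategy as the paper's own: a mirror-symmetric pair of dropout-probability vectors, identification of two disjoint optimal alternate sets (one per vector), a runner-up gap that pins the ERM output to the exact optimum via \Cref{cor:binary-sample-complexity}, and a union bound over the two invocations. The gadget, however, is genuinely different. The paper uses a \emph{single} binary feature with tight aggregate quotas $l_{f_1,1}=u_{f_1,1}=k/2$, so a suboptimal alternate set is penalized only on the event that \emph{all} $k/2$ at-risk panelists drop out simultaneously; the resulting runner-up gap is $\gamma^{k/2}$, and resolving it requires a sample size scaling like $\gamma^{-2k}$ with $k=2\lceil\log_{1-\gamma}\alpha\rceil$, i.e., exponential in $1/\gamma$ and $\log(1/\alpha)$. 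Your construction instead spreads the risk over $2a$ features, each with its own tight quota of $1$ and a unique panelist--alternate pair carrying that feature, so omitting even one necessary alternate already incurs expected loss $\gamma$; the gap is $\gamma$, and $s \in O\left((a\log n + \log(1/\delta))/\gamma^2\right)$ suffices, polynomial in all parameters. Since the theorem asserts only the existence of some $s(\alpha,\delta,\gamma)$, both are valid, but yours yields a quantitatively stronger (much smaller) threshold. Two small details are worth making explicit: the quotas on the value-$0$ side of each feature (set $l_{f,0}=0$ and $u_{f,0}$ large so they never bind and $K$ remains quota-feasible), and the observation that the optimal replacement set never includes an alternate $a_j$ whose partner $p_j$ has not dropped, so the tight upper quotas $u_{j,1}=1$ cause no collateral violations.
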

The proof, located in \Cref{app:lb-robust-binary}, consists of an instance with a single binary feature. One group has dropout probability $0$ and the other has dropout probability $\gamma$. By misestimating these groups' respective dropout probabilities as $\gamma, 0$ instead of $0, \gamma$, our alternates are exclusively replacements for the first group, when they should be replacements for the second.

In contrast to \textsc{ERM-Alts$^{0/1}$}, we find that \algoname$^1$ \textit{is} highly robust, incurring additional loss proportional to $\err |FV|$ (\Cref{thm:ub-robust-linear}). We then show that this bound is essentially tight (\Cref{thm:lb-robust-linear}). 
\begin{theorem}[\textbf{Linear loss upper bound}]\label{thm:ub-robust-linear} 
     Fix any constants $\varepsilon, \delta > 0, \gamma \in (0, 1]$. Fix any $\inst$, $\boldsymbol{\rho}$ and $\tilde{\boldsymbol{\rho}}$ with $\|\boldsymbol{\rho}-\tilde{\boldsymbol{\rho}}\|_\infty \leq \gamma$. Then, there exists $s(\varepsilon,\delta)$ such that for all $s \geq s(\varepsilon,\delta)$,
    \[\Pr\left[\mathcal{L}^{\ell_1}(\ermfakealts{\ell_1}; \ddist_{\dprobs}) - \mathcal{L}^{\ell_1}(\ermtruealts{\ell_1}; \ddist_{\dprobs})  \leq 2\err |FV| + \varepsilon\right] \geq 1-\delta
    \]
\end{theorem}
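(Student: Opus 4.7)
The plan is to reduce the theorem to a deterministic perturbation lemma---for every fixed alternate set $A$, $|\mathcal{L}^{\ell_1}(A;\ddist_{\dprobs}) - \mathcal{L}^{\ell_1}(A;\ddist_{\tilde{\dprobs}})| \le \err\,|FV|$---and then chain this lemma with a single application of the linear-loss sample-complexity guarantee (\Cref{cor:sample-complexity-linear}, adapted to the relaxed loss $\mathcal{L}^{\ell_1}$) together with the optimality of $\opttruealts{\ell_1}$ and $\optfakealts{\ell_1}$ for their respective distributions. The non-trivial piece is the perturbation lemma; the rest is a mechanical chain of inequalities.

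For the perturbation lemma, the starting observation is that $\mathcal{L}^{\ell_1}(A;\ddist_{\dprobs}) = \sum_D \ddist_{\dprobs}(D)\,h_A^{\ell_1}(D)$ is \emph{multilinear} in $\dprobs$: each $\rho_i$ appears with degree at most one in $\ddist_{\dprobs}(D)$, and $h_A^{\ell_1}(D)$ does not depend on $\dprobs$ at all. Its partial derivative is thus $\partial_{\rho_i}\mathcal{L}^{\ell_1}(A;\ddist_{\dprobs}) = \E_{D_{-i}}[h_A^{\ell_1}(D_{-i}\cup\{i\}) - h_A^{\ell_1}(D_{-i})]$, so bounding the $\ell_1$ norm of the gradient reduces to bounding the sensitivity of $h_A^{\ell_1}$ to toggling a single agent $i$ in or out of $D$. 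Here the relaxation $R \subseteq A$ (no size constraint) is what keeps the bound clean: because the optimizer $R^*(D,A)$ is still a feasible replacement for $D\cup\{i\}$ and vice versa, I can directly compare $h_A^{\ell_1}(D)$ and $h_A^{\ell_1}(D\cup\{i\})$ by adding or removing agent $i$ from the set $K\setminus D \cup R^*(D,A)$. Exactly $|F|$ of the feature-value counts move by one, and since each $\phi_{f,v}/u_{f,v}$ is $1/u_{f,v}$-Lipschitz in its count, this gives $|h_A^{\ell_1}(D)-h_A^{\ell_1}(D\cup\{i\})| \le \sum_{f\in F} 1/u_{f,f(i)}$. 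Summing over $i\in K$ and regrouping by feature-value pair,
\[\sum_{i\in K}\bigl|\partial_{\rho_i}\mathcal{L}^{\ell_1}\bigr| \;\le\; \sum_{(f,v)\in FV} \frac{|K_{f,v}|}{u_{f,v}} \;\le\; |FV|,\]
where the last step uses the crucial inequality $|K_{f,v}|\le u_{f,v}$, which follows from $K$ satisfying the upper quotas. Integrating this uniform $\ell_1$-gradient bound along the segment from $\dprobs$ to $\tilde{\dprobs}$ (exact because $\mathcal{L}^{\ell_1}$ is multilinear) and pairing it with $\|\dprobs-\tilde{\dprobs}\|_\infty = \err$ via H\"older's inequality yields the lemma.

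With the lemma in hand, the rest is a five-step chain starting from $\mathcal{L}^{\ell_1}(\ermfakealts{\ell_1};\ddist_{\dprobs})$: (i) apply the perturbation lemma to swap $\ddist_{\dprobs}$ for $\ddist_{\tilde{\dprobs}}$, costing $\err|FV|$; (ii) invoke \Cref{cor:sample-complexity-linear} on $\ddist_{\tilde{\dprobs}}$ with accuracy $\varepsilon$ and confidence $\delta$, which on its high-probability event replaces $\ermfakealts{\ell_1}$ by $\optfakealts{\ell_1}$ at a cost of $\varepsilon$; (iii) use optimality of $\optfakealts{\ell_1}$ on $\ddist_{\tilde{\dprobs}}$ to pass to $\opttruealts{\ell_1}$; (iv) apply the perturbation lemma again to return from $\ddist_{\tilde{\dprobs}}$ to $\ddist_{\dprobs}$, costing another $\err|FV|$; (v) invoke optimality of $\opttruealts{\ell_1}$ on $\ddist_{\dprobs}$ to bound $\mathcal{L}^{\ell_1}(\opttruealts{\ell_1};\ddist_{\dprobs}) \le \mathcal{L}^{\ell_1}(\ermtruealts{\ell_1};\ddist_{\dprobs})$. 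The total damage is exactly $2\err|FV|+\varepsilon$, and the required $s(\varepsilon,\delta)$ is inherited from \Cref{cor:sample-complexity-linear}; note that only the $\ermfakealts{\ell_1}$ side of the theorem needs a sample-complexity event (the $\ermtruealts{\ell_1}$ side is handled by optimality of $\opttruealts{\ell_1}$, which is deterministic).

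The main obstacle is the perturbation lemma. A naive per-agent sensitivity bound $|h_A^{\ell_1}(D\cup\{i\}) - h_A^{\ell_1}(D)| \le |F|$ (using only $u_{f,v}\ge 1$) would blow the final guarantee up to $k|F|\err$; pulling the bound down to $|FV|$ requires the telescoping identity $\sum_i\sum_{f} 1/u_{f,f(i)} = \sum_{(f,v)}|K_{f,v}|/u_{f,v}$ followed by the upper-quota feasibility of $K$. The relaxation $R\subseteq A$ is also what makes the two-sided Lipschitz step clean: with the original $|R|\le|D|$ constraint, toggling $i$ in $D$ can force a cardinality change in the replacement set, breaking the simple identity that $R^*(D,A)$ is feasible for $D\cup\{i\}$---consistent with the paper's remark that it is unknown whether the bound survives without the relaxation.
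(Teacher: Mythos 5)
Your proposal is correct, and its skeleton is identical to the paper's: the same perturbation lemma $|\mathcal{L}^{\ell_1}(A;\ddist_{\dprobs}) - \mathcal{L}^{\ell_1}(A;\ddist_{\tilde{\dprobs}})| \le \gamma|FV|$ for every fixed $A$, followed by the same five-step chain (two applications of the lemma, one application of \Cref{cor:sample-complexity-linear} on the $\ddist_{\tilde{\dprobs}}$ side only, and the two optimality inequalities), yielding exactly $2\gamma|FV|+\varepsilon$. Where you diverge is in how the lemma is proven. The paper discretizes: it builds $k$ hybrid probability vectors that change one coordinate at a time, telescopes, and bounds each increment by coupling the two Bernoulli vectors through a common uniform draw $\boldsymbol{X}$, so that the two dropout sets agree except on an event of probability at most $\gamma$ on which they differ exactly in agent $i$; it then fixes a common replacement set across the two terms and runs a case analysis on the quota domains to get the per-agent bound $\sum_{f}1/u_{f,f(i)}$. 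You instead observe that $\mathcal{L}^{\ell_1}(A;\cdot)$ is multilinear in $\dprobs$, compute $\partial_{\rho_i}\mathcal{L}^{\ell_1} = \E_{D_{-i}}[h_A^{\ell_1}(D_{-i}\cup\{i\})-h_A^{\ell_1}(D_{-i})]$, bound the single-agent sensitivity of $h_A^{\ell_1}$ by the same $\sum_f 1/u_{f,f(i)}$ via the two-sided feasibility of the optimal replacement set (which, as you correctly flag, is exactly where the relaxation $R\subseteq A$ is used), and integrate the uniform $\ell_1$ gradient bound along the segment with H\"older. The two arguments compute the same quantity\emdash the coupling's ``bad'' event of probability $|\rho_i-\tilde\rho_i|$ is precisely the partial derivative in disguise, and the hybrid telescope is the discrete analogue of your line integral\emdash but your version is shorter and makes the role of the key inequality $\sum_{i\in K}\mathbb{I}(f(i)=v)\le u_{f,v}$ (quota feasibility of $K$) more transparent. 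One small caveat: you should state explicitly that the gradient bound holds uniformly over all $\rho\in[0,1]^k$ (it does, since the sensitivity bound on $h_A^{\ell_1}$ is probability-free), as the integral passes through intermediate vectors that are neither $\dprobs$ nor $\tilde{\dprobs}$; and, like the paper, you are implicitly using that \Cref{cor:sample-complexity-linear} carries over to the relaxed loss $\mathcal{L}^{\ell_1}$, which is fine since the relaxation leaves the hypothesis-class dimension bounds unchanged.
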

\begin{proof}[Proof sketch] The overall proof can be found in \Cref{app:ub-robust-linear}. It proceeds roughly as follows: Fix all required entities; as usual, $\inst$ will be dropped from our notation. The core of the proof is showing the following bound on the change in linear loss for \textit{any} fixed alternate set $A$, when evaluated with respect to $\mathcal{D}_{\dprobs}$ versus $\mathcal{D}_{\tilde{\dprobs}}$:
\begin{equation}\label{eq:linearublemma}
    \vert\mathcal{L}^{\ell_1}(A;\mathcal{D}_{\dprobs}) - \mathcal{L}^{\ell_1}(A;\mathcal{D}_{\tilde{\dprobs}})\vert  \leq \err |FV|.
\end{equation}

Once we have this bound, we can apply it to $\opttruealts{\ell_1}$ to show the following chain of inequalities, where the first inequality is by the optimality of $\optfakealts{\ell_1}$ for $\ddist_{\tilde{\dprobs}}$:
\[\mathcal{L}^{\ell_1}(\optfakealts{\ell_1};\ddist_{\tilde{\dprobs}}) - \mathcal{L}^{\ell_1}(\opttruealts{\ell_1};\ddist_{\dprobs})\leq \mathcal{L}^{\ell_1}(\opttruealts{\ell_1};\ddist_{\tilde{\dprobs}}) - \mathcal{L}^{\ell_1}(\opttruealts{\ell_1};\ddist_{\dprobs})\leq  \gamma|FV|.\]
This gives us an almost analogous version of our desired bound for $\opttruealts{\ell_1}$ and $\optfakealts{\ell_1}$. To relate $\mathcal{L}^{\ell_1}(\ermfakealts{\ell_1};\ddist_{\dprobs})$ to $\mathcal{L}^{\ell_1}(\optfakealts{\ell_1};\ddist_{\tilde{\dprobs}})$, we apply \Cref{eq:linearublemma} once more and derive $s(\varepsilon,\delta)$ based on \Cref{cor:sample-complexity-linear} to ensure that with probability $\geq 1-\delta$, the loss of $\ermfakealts{\ell_1}$ on $\mathcal{D}_{\tilde{\boldsymbol{\rho}}}$ with $s \geq s(\varepsilon,\delta)$ is within $\varepsilon$ of its respective corresponding optimal alternate set (for the upper bound, we do not need to ensure that $\ermtruealts{\ell_1}$ is close to $\opttruealts{\ell_1}$). This additional application of \Cref{eq:linearublemma} is the source of the 2-factor on $\err|FV|$ in the statement, and completes the proof.

It remains to show \Cref{eq:linearublemma}. The argument is quite intricate, but we will present the broad strokes of the argument here and give the fully formal version in \Cref{app:ub-robust-linear}. The high level approach is to construct $k+1$ intermediate probability vectors that transform $\dprobs$ to $\tilde{\dprobs}$ by altering one agent's dropout probability at a time. As such, let $\dprobs^i = (\tilde{\dprob}_1, \dots, \tilde{\dprob}_i, \dprob_{i+1}, \dots, \dprob_k)$ for all $i \in [k]$. Then, using a telescoping sum and the triangle inequality, we show that 
\[\vert\mathcal{L}^{\ell_1}(A;\mathcal{D}_{\dprobs}) - \mathcal{L}^{\ell_1}(A;\mathcal{D}_{\tilde{\dprobs}})\vert =\vert \mathcal{L}^{\ell_1}(A;  \mathcal{D}_{\dprobs^0}) - \mathcal{L}^{\ell_1}(A;  \mathcal{D}_{\dprobs^k})\vert\leq \sum_{i=1}^{k} \left \vert \mathcal{L}^{\ell_1}(A; \mathcal{D}_{\dprobs^{i-1}}) - \mathcal{L}^{\ell_1}(A; \mathcal{D}_{\dprobs^i}) \right \vert.\]
We first bound each individual term of this resulting sum separately. We do so via a coupling argument, where we couple the dropouts drawn per $\dprobs^i$ and $\dprobs^{i+1}$. We represent these dropouts as 
\[
\boldsymbol{Y} = (Y_j \sim \text{ Bernoulli}(\rho^{i-1}_j)|j \in [k]) \quad \text{ and } \quad \boldsymbol{Y'} = (Y'_j \sim \text{ Bernoulli}(\rho^{i}_j)|j \in [k]),
\]
where $Y_j$ (likewise $Y_j'$) indicates whether $j$ dropped out.
We couple $\boldsymbol{Y}$ and $\boldsymbol{Y'}$ by drawing each $Y_j$ and $Y_j'$ via the same independent draw of $X_j \sim Unif[0,1]$. Letting $\boldsymbol{X} = (X_j | j \in [k])$, we denote the coupled dropout vectors as
$\boldsymbol{Y}(\boldsymbol{X}) = (Y_j(\boldsymbol{X}) | j \in [k])$ and $\boldsymbol{Y'}(\boldsymbol{X})$ analogously. 
Let $D(\boldsymbol{Y}(\boldsymbol{X})) = \{j \in [k] | Y_j = 1\}$ be the dropout set specified by $\boldsymbol{Y}$ and define $D(\boldsymbol{Y'}(\boldsymbol{X}))$ analogously. Define $R(\boldsymbol{Y}(\boldsymbol{X})) := \argmin_{R \subseteq A} dev^{\ell_1}(K \setminus D(\boldsymbol{Y}(\boldsymbol{X})) \cup R)$ to be the optimal replacement set for $D(\boldsymbol{Y}(\boldsymbol{X}))$, and define $R(\boldsymbol{Y'}(\boldsymbol{X}))$ analogously. 

This coupling will serve to ensure that with substantial probability, $D(\boldsymbol{Y}(\boldsymbol{X}))$ and $D(\boldsymbol{Y'}(\boldsymbol{X}))$ are the same, and otherwise they are similar. In particular, there is some $cond(X_i)$ such that
\begin{align*}
    Pr[cond(X_i)] &= 1-\gamma &&\text{ and }\qquad cond(X_i) \ \ \, \implies  D(\boldsymbol{Y}(\boldsymbol{X})) = D(\boldsymbol{Y'}(\boldsymbol{X}))\\
    Pr[\neg cond(X_i)] &= \gamma &&\text{ and }\qquad \neg cond(X_i) \implies D(\boldsymbol{Y}(\boldsymbol{X})) \triangle D(\boldsymbol{Y'}(\boldsymbol{X})) = \{i\}.
\end{align*}
Fix an $i \in [k]$. Applying the definition of the loss along with the law of total expectation, linearity of expectation, and Jensen's inequality, we deduce that
\begin{align*}
|\mathcal{L}^{\ell_1}(A; \,\mathcal{D}_{\dprobs^{i-1}}) &- \mathcal{L}^{\ell_1}(A;  \mathcal{D}_{\dprobs^i})|\\
&\leq \mathbb{E}_{\boldsymbol{X}}\left[| dev^{\ell_1}(K \setminus D(\boldsymbol{Y}(\boldsymbol{X})) \cup R(\boldsymbol{Y}(\boldsymbol{X}))) - dev^{\ell_1}(K \setminus D(\boldsymbol{Y'}(\boldsymbol{X})) \cup R(\boldsymbol{Y'}(\boldsymbol{X})))|\right].
\intertext{From here, there are three key steps. First, we carefully choose a replacement set $R_X$ that is common across terms such that}
&\leq \mathbb{E}_{\boldsymbol{X}}\left[|dev^{\ell_1}(K \setminus D(\boldsymbol{Y}(\boldsymbol{X})) \cup R_{\boldsymbol{X}}) - dev^{\ell_1}(K \setminus D(\boldsymbol{Y'}(\boldsymbol{X})) \cup R_{\boldsymbol{X}})|\right]
\intertext{Next, we apply the first consequence of our coupling argument: that $cond(X_i)$ (which occurs with probability $1-\gamma$) implies $D(\boldsymbol{Y}(\boldsymbol{X})) = D(\boldsymbol{Y'}(\boldsymbol{X}))$, conditional on which the expectation is 0. Then,}
&\leq \gamma \, 
 \mathbb{E}_{\boldsymbol{X}}\left[|dev^{\ell_1}(K \setminus D(\boldsymbol{Y}(\boldsymbol{X})) \cup R_{\boldsymbol{X}}) - dev^{\ell_1}(K \setminus D(\boldsymbol{Y'}(\boldsymbol{X})) \cup R_{\boldsymbol{X}})| \middle\vert \  \neg cond(X_i)\right] 
\intertext{Finally, we apply the second consequence of our coupling argument: that $\neg cond(X_i)$ implies that $D(\boldsymbol{Y}(\boldsymbol{X})) \triangle D(\boldsymbol{Y'}(\boldsymbol{X})) = \{i\}$, which means the symmetric difference of the two sets on which $dev^{\ell_1}$ is evaluated above is also $\{i\}$. Over several steps of reasoning, we use this observation to bound on difference of deviations above in terms of only $i$'s feature-values. The deduction concludes with}
&\leq \gamma \sum_{f \in F} 1/{u_{f, f(i)}}.
\end{align*}

\noindent Putting it all together, $|\mathcal{L}^{\ell_1}(A;  \mathcal{D}_{\dprobs^0}) - \mathcal{L}^{\ell_1}(A;  \mathcal{D}_{\dprobs^k})|$ is upper bounded by the above expression summed up over all $i$, concluding the proof:
\[
  \sum_{i=1}^k \gamma \sum_{f \in F} \frac{1}{u_{f, f(i)}} = \gamma \sum_{f,v \in FV} \frac{\sum_{i=1}^k \mathbb{I}(f(i) = v)}{u_{f,v}} \leq \gamma \sum_{f,v \in FV} \frac{u_{f,v}}{u_{f,v}}= \gamma |FV|. \qedhere\]
\end{proof}

Finally, we show that the above bound is essentially tight. The proof proceeds similarly to that of \Cref{thm:lb-robust-binary} but the construction is more complex, requiring generic numbers of feature-values. We defer the proof to \Cref{app:lb-robust-linear}.
\begin{theorem} [\textbf{Linear loss lower bound}] \label{thm:lb-robust-linear}
Fix any constants $\gamma \in (0, 1), \delta \in (0, 1/2]$ and $|FV|$, $|F|$. There exists a $\mathcal{I}$ with $|F|$ features and $|FV|$ feature-values, $\boldsymbol{\rho}$, and  $\tilde{\boldsymbol{\rho}}$ with $ \|\dprobs - \tilde{\dprobs}\| \leq \err$, and $s(\delta,\gamma)$ such that for all $s \geq s(\delta,\gamma)$, \begin{align*}
        \Pr\left[\mathcal{L}^{\ell_1}(\ermfakealts{\ell_1}; \ddist_{\dprobs}) - \mathcal{L}^{\ell_1}(\ermtruealts{\ell_1}; \ddist_{\dprobs}) \geq \err|FV| - \err|F|\right] \geq 1-2\delta.
    \end{align*}
\end{theorem}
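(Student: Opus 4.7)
The plan is to mirror the strategy of Theorem~\ref{thm:lb-robust-binary}, but replace the single binary feature with a single ``informative'' feature $f_1$ having $m := |FV| - |F| + 1$ values, padded by $|F| - 1$ ``trivial'' (single-value) features that impose no nontrivial restriction and contribute nothing to the linear deviation in the absence of size changes. I will set $K$ to contain $m - 1$ agents whose $f_1$-value is $v_1$ together with one agent of each of the types $v_2, \ldots, v_m$, with tight quotas $l_{f_1,v_1} = u_{f_1,v_1} = m - 1$ and $l_{f_1,v_j} = u_{f_1,v_j} = 1$ for $j \geq 2$, budget $a = m - 1$, and pool $N$ containing ample copies of each type. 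I swap the ``safe'' and ``risky'' groups between the two probability vectors: $\rho_i = 0$ when $f_1(i) = v_1$ and $\rho_i = \gamma$ otherwise, while $\tilde{\boldsymbol{\rho}}$ swaps these assignments, so that $\|\boldsymbol{\rho} - \tilde{\boldsymbol{\rho}}\|_\infty = \gamma$.

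Under this construction the two optima are pinned down up to interchangeable agents of the same type: $\opttruealts{\ell_1}$ must consist of one alternate of each $v_j$ for $j \geq 2$, covering every possible true drop perfectly (so its true loss is $0$); and $\optfakealts{\ell_1}$ must consist of $m - 1$ alternates of type $v_1$, covering every possible fake drop perfectly. On the true distribution, however, the $v_1$-only alternate set $\optfakealts{\ell_1}$ is powerless against non-$v_1$ drops: for each realized drop set $D \subseteq \{v_2, \ldots, v_m\}$, the inner replacement ILP must incur at least $\sum_{v_j \in D} 1/u_{f_1, v_j} = |D|$ on feature $f_1$, with trivial features only ever adding more loss (never subtracting, since they are saturated only when $|R| = |D|$). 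Taking expectations over $D \sim \mathcal{D}_{\boldsymbol{\rho}}$ yields $\mathbb{E}[|D|] = (m-1)\gamma$, and hence the deterministic separation $\mathcal{L}^{\ell_1}(\optfakealts{\ell_1}; \mathcal{D}_{\boldsymbol{\rho}}) - \mathcal{L}^{\ell_1}(\opttruealts{\ell_1}; \mathcal{D}_{\boldsymbol{\rho}}) \geq (m-1)\gamma = \gamma(|FV| - |F|)$.

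The remaining step is to lift this separation to the ERM outputs. I will show that any alternate set omitting some type $v_j$ ($j \geq 2$) has true loss at least $\gamma$ (a drop of the omitted type, which occurs with probability $\gamma$, contributes at least $1$ on feature $f_1$), and that any alternate set containing fewer than $m - 1$ copies of $v_1$ has fake loss at least a strictly positive constant on the order of $\gamma^{m-1}/(m-1)$ (arising from the event, of probability $\gamma^{m-1}$, that all $m - 1$ of the $v_1$ agents drop under $\tilde{\boldsymbol{\rho}}$ and at least one is unreplaceable). Choosing $\varepsilon$ strictly smaller than half the smaller of these two gaps and applying Corollary~\ref{cor:sample-complexity-linear} separately to $\mathcal{D}_{\boldsymbol{\rho}}$ and $\mathcal{D}_{\tilde{\boldsymbol{\rho}}}$, a union bound yields $s(\delta, \gamma)$ such that for all $s \geq s(\delta, \gamma)$, both $\ermtruealts{\ell_1}$ and $\ermfakealts{\ell_1}$ are loss-equivalent to their respective optima with probability at least $1 - 2\delta$, transferring the exact gap. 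The main obstacle I anticipate is precisely this last step: the theorem demands the \emph{exact} bound $\gamma|FV| - \gamma|F|$ rather than $\gamma|FV| - \gamma|F| - O(\varepsilon)$, so the construction must isolate each optimum from every other alternate set by a strictly positive constant loss gap, a property engineered here by the combination of tight quotas, a budget $a$ matched exactly to the minimal covering size under each distribution, and the all-or-nothing $\{0, \gamma\}$ structure of $\boldsymbol{\rho}$ and $\tilde{\boldsymbol{\rho}}$.
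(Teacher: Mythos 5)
Your proposal is correct and follows essentially the same strategy as the paper's proof: a two-group panel with swapped risky/safe dropout probabilities under $\boldsymbol{\rho}$ versus $\tilde{\boldsymbol{\rho}}$, a deterministic separation of $\gamma(|FV|-|F|)$ between the two optimal alternate sets when evaluated on $\mathcal{D}_{\boldsymbol{\rho}}$, and constant isolation gaps (of order $\gamma$ and $\gamma^{k}/k$) that let the sample-complexity corollary plus a union bound transfer the exact gap to the ERM outputs with probability $1-2\delta$. The only difference is cosmetic: the paper distributes the $|FV|-|F|$ rare feature-values across all features (one Group-1 agent per rare value, with a shared ``filler'' value per feature), whereas you concentrate them in a single feature and pad with degenerate single-valued features; both constructions realize the same bound.
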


\section{Empirical Evaluation} \label{sec:empirics}

\textbf{Estimating dropout probabilities from historical data.} We estimate the dropout probabilities by fitting the same parametric model as in prior work on sortition, where it was used to estimate agents' probabilities of opting into the pool \cite{flanigan2020neutralizing}. This model, sometimes referred to as \textit{simple independent action}, assumes that each feature-value independently contributes to an agent's probability of dropping out. In particular, the parameters of the model are $\{\beta_0\} \cup \{\beta_{f,v} | f \in F, v \in V_f\}$, and we assume these parameters relate to the dropout probabilities as follows:
\begin{equation}\label{eq:indactmodel}
    \dprob_i = \beta_0 \prod_{f \in F} \beta_{f,f(i)}.
\end{equation}
Colloquially, $\beta_{f,v}$ is the probability of dropping out due to having value $v$ for feature $f$, and $\beta_0$ is the baseline probability of dropping out irrespective of identity. We fit these $\beta$ parameters using maximum likelihood estimation. We use this model of dropout probabilities because it was empirically validated in the sortition context \cite{flanigan2020neutralizing}, but we emphasize that users can select any estimation method, as our algorithms are agnostic to the prediction model.

\vspace{0.3em}
\noindent \textbf{Datasets.} We fit the $\beta$s in \Cref{eq:indactmodel} based on datasets whose rows are agents $i$, and columns are all $i$'s feature-values plus the binary outcome variable describing whether $i$ dropped out. Each \textit{dataset} corresponds to a single instance (real-world assembly); a data \textit{cluster} is a group of datasets with common feature-values (this commonality is necessary for coherently estimating the $\beta$s). Datasets within each cluster correspond to assemblies organized by the same practitioner group.

We train $\beta$s from seven total datasets divided into two separate data clusters: the first cluster contains 3 datasets from US assemblies, named \textit{US-1}, \textit{US-2}, \textit{US-3}, and the second contains 4 datasets from Canadian assemblies, named \textit{Can-1\,-\,Can-4}. We train on as many features present in \Cref{fig:dropout-rates} as possible. Data structure and cleaning details are in \Cref{app:datacleaning}. 

We focus on the US data cluster in the body because these datasets contain extra information relevant for evaluation: the precise quotas used in panel selection, and the alternates used in practice. 
Replication of results in the Canadian datasets are described in \Cref{app:canadian}; we see basically the same trends, but they are less pronounced because the quotas used in these Canadian datasets tend to be far less restrictive (this is a known trend, see, e.g., \cite{flanigan2020neutralizing}).

\vspace{0.3em}
\noindent \textbf{Algorithms.} We run three ERM-based algorithms, \textsc{ERM-Alts}$^{1}$, \textsc{ERM-Alts}$^{0/1}$, and \textsc{ERM-Alts}$^{1}$-\textsc{Eq} (see below), all with $s = 300$ samples (see \Cref{app:convergence} for demonstration of convergence). We compare these algorithms to the benchmarks below, formally specified blue and discussed further in \Cref{app:benchmarks}.

\textsc{Quota-Based} is a proxy for what at least one practitioner group reportedly does in practice: chooses alternates by selecting an entire duplicate panel. Interpreted literally, this method requires $a = k$; to capture this reasoning for generic $a$ we extend their method by scaling down the quotas proportionally to the size of the alternate set. This algorithm is the natural benchmark for what one could do using only existing tools and no knowledge of the dropout probabilities.

\textsc{Greedy} captures the marginal benefit of knowing the dropout probability estimates but not reasoning about the problem's combinatorial structure. \textsc{Greedy} orders the panelists in decreasing order of dropout probability. Then, taking panelists $i \in \{1 \dots a\}$ in order, it finds the remaining pool member whose attributes match $i$ on the most features and adds them to the alternate set.

\textsc{ERM-Alts$^1$-Eq} roughly captures the opposite of greedy: it captures the marginal benefit of reasoning about the combinatorial structure of the problem \textit{without} access to dropout probability estimates, so dropout probabilities are assumed to be equal. Formally, this is \textsc{ERM-alts}$^1$ run with dropout probabilities all equal to the average of the dropout probabilities in that instance, so that the expected number of dropouts is the same as in \textsc{ERM-alts}$^1$.

\textsc{Practitioner Alternates} (represented by a dot) reflects the actual alternate set that was used in that instance. These alternates were essentially chosen by \textsc{Quota-Based}, with a few context-driven adjustments. Note that there are many alternate sets satisfying a given set of quotas, so we do not expect the practitioner alternates to perfectly match the performance of \textsc{Quota-Based}.

We further contextualize our results by showing the loss under two extremes:
$A = \emptyset$, the \textit{worst case} where no alternates are used, and $A = N$, the (unattainable) \textit{best case}, where one can choose alternates from the entire pool (akin to being able to choose alternates \textit{after} seeing the dropout set). 

\vspace{0.3em}
\noindent \textbf{Loss Estimation.} We cannot precisely evaluate an alternate set $A$'s true expected loss $L^{\ell_1}(A;\ddist_{\dprobs})$ due to the exponential support of $\ddist_{\dprobs}$, which motivated our ERM approach. We thus take a sampling approach to evaluation as well: we compute the approximate loss $L^{\ell_1}(A;\hat{\ddist}_{\dprobs})$, where $\hat{\ddist}_{\dprobs}$ is the result of drawing 300 samples from $\ddist_{\dprobs}$\emdash the same sample size used in \textit{running} our ERM algorithms.

\subsection{Evaluation on \textit{realized} dropout set with \textit{estimated} \texorpdfstring{$\tilde{\dprobs}$}{test}} \label{sec:realized-eval}
First, in each instance \textit{US 1-3}, we evaluate the loss of each algorithm on the \textit{actual} dropout set that occurred in that instance, corresponding to a single draw from the distribution $\ddist_{\dprobs}$. As such, this experiment aims to test \textit{what would have happened had each algorithm been run in this assembly.}  

To emulate having trained on historical data, in \textit{US-1} we estimate $\tilde{\dprobs}_1$ on only datasets \textit{US-2} and \textit{US-3}, and analogously we get $\tilde{\dprobs}_2$ and $\tilde{\dprobs}_3$ for the other datasets. As will be the case for multiple analyses in this section, we study how the loss changes as the alternate budget $a$ is increased. 
\begin{figure}[t]
    \centering
\includegraphics[width=\linewidth]{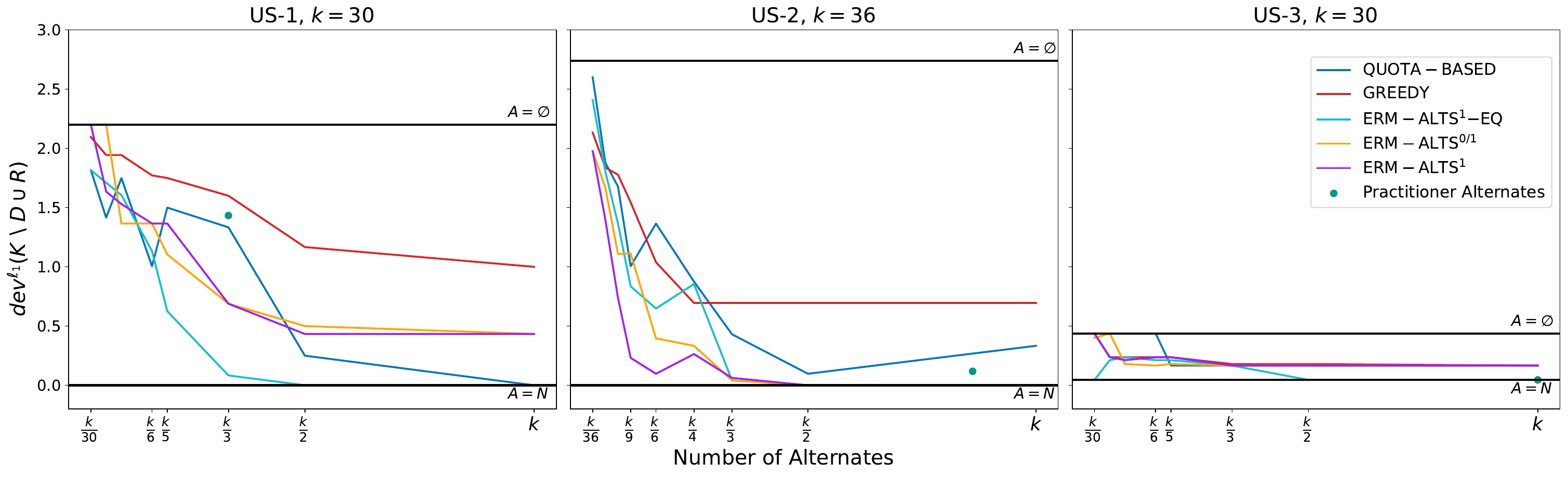}
    \caption{In each instance \textit{US-j} we are given all elements of $\inst$, plus the realization of $D$ and dropout probability estimates $\tilde{\dprobs}_j$. We run each algorithm on all inputs except $D$, and each produces an $A$. What is plotted is then $dev^{\ell_1}(K \setminus D \cup R)$, where $R$ is chosen from each $A$ to minimize $dev^{\ell_1}(K \setminus D \cup R)$.}
    \label{fig:real-dropouts}
\end{figure}
\Cref{fig:real-dropouts} shows that while no algorithm is clearly dominant, the two non-ERM algortihms, \textsc{Quota-Based} and \textsc{Greedy}, have the highest losses for most values of $a$. In \textit{US-2}, \textsc{ERM-Alts}$^{1}$ seems to perform the best; in the \textit{US-1} and \textit{US-3}, \textsc{ERM-Alts}$^{1}$-\textsc{Eq} performs the best (although in \textit{US-3}, only one person dropped out, so the maximum possible loss is very low). 

It may seem surprising that $\algoname^{\ell_1}$ does not always dominate, given that it is precisely optimizing the $L^{\ell_1}$ loss.
There are two possible explanations: $\algoname^{\ell_1}$ performs well on the dropout set \textit{distribution}, but we got unlucky with this realized draw of $D$; or, our estimated dropout probabilities are inaccurate. While one cannot tease apart these explanations conclusively, the dominance of $\algoname^{\ell_1}\textsc{-Eq}$ suggests that inaccurate dropout probability estimates is a contributing factor (since it is just $\algoname^{\ell_1}$ without the dropout probabilities). This is supported empirically: we find that our probability estimates are \textit{well-calibrated} to the realized dropouts in \textit{US-2}, but far from it in \textit{US-1} and \textit{US-3}, precisely tracking where $\algoname^{\ell_1}$ dominates (\Cref{app:calibration}). 

We draw two conclusions from these findings. First, the lower loss of ERM algorithms suggests the importance of considering the problem's combinatorial structure. Second, $\algoname^{\ell_1}$ exhibits strong performance when it has reasonably well-calibrated predictions\emdash and such predictions are possible to achieve. Importantly, the quality of our dropout probability estimates here should be taken as a weak lower bound on the achievable prediction quality in practice. The data we had access to was on-hand and not collected or curated for this purpose, so it was highly imperfect for our prediction task compared to what practitioners could have access to at steady state.\footnote{Our data originally contained 25 panels, but most were dropped due to inconsistent features. The data also did not include features that practitioners anecdotally believe to be predictive of dropping out (e.g., living far from the panel's physical location). Finally, we did not have records of recruitment process issues that might have affected participation in specific datasets (e.g., emails going to spam, or other external shocks). If panel organizers collected consistent features, recorded more dropout-related attributes, and systematically noted external shocks, predictions could likely be significantly improved.}

\subsection{Evaluation on the \textit{dropout set distribution} when \texorpdfstring{$\tilde{\dprobs} = \dprobs$}{what is this}}

We now evaluate these algorithms' $L^{\ell_1}$ loss \textit{in expectation over dropout sets, assuming the ground-truth dropout probabilities are known}. For the sake of realism, we let these ground-truth probabilities $\dprobs$ be those learned by training on \textit{US-1}, \textit{US-2}, and \textit{US-3}. 
Here, we expect \algoname$^{1}$ to outperform all other algorithms, as it is precisely optimizing our evaluation objective; what we are testing here is \textit{the extent} to which various algorithms dominate others, and \textit{how efficiently} each algorithm uses alternates, i.e., how quickly the loss approaches the unattainable optimum of $A = N$.   
\begin{figure}[h!]
    \centering
\includegraphics[width=\linewidth]{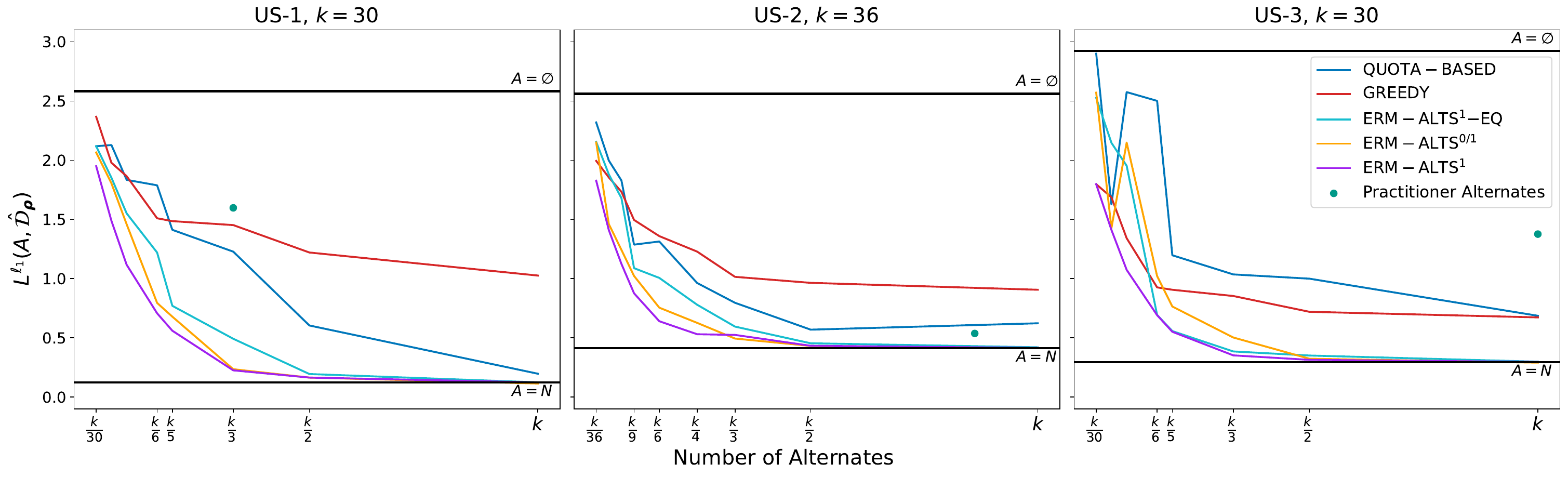}
    \caption{The dropout probabilities known to the relevant algorithms is $\dprobs$. We evaluate loss according to an empirical estimate of $\mathcal{D}_{\dprobs}$, as described under \textbf{Loss Estimation}. Versions of these plots that include standard deviations over $D \sim \mathcal{D}_{\dprobs}$ are in \Cref{app:sim-L1-stdev}.}
    \label{fig:sim-L1}
\end{figure}

In \Cref{fig:sim-L1}, we see that $\algoname^{\ell_1}$ indeed outperforms all other algorithms. It also uses alternates highly efficiently, achieving loss \textit{matching} that of the offline solution (choosing $A = N$) with an alternate budget of only $k/2$, which amounts to $7$\%, $8\%$, and $16\%$ of $n$ across \textit{US-1}, \textit{US-2}, and \textit{US-3}, respectively. Comparing the algorithms more broadly, we see that \textsc{Quota-Based} and \textsc{Greedy} are again reliably dominated by all three ERM algorithms, reinforcing the weakness of heuristics for this problem. Furthermore, the fact that $\algoname^{\ell_1}$ dominates both \textsc{Greedy} (ignores combinatorics) and \textsc{ERM-Alts}$^1$\textsc{-Eq} (ignores dropout probabilities) shows us that both our innovations\emdash learning $\dprobs$ \textit{and} considering the problem's combinatorics\emdash are individually important to achieving low loss.

\algoname$^{1}$'s dominance here is by construction, so we repeat this analysis for five other performance metrics: \textit{the $L^{\ell_1}$ loss counting only violations of lower quotas}, since too few people is generally worse than too many; \textit{the maximum deviation from any quota}, raw and normalized by the quota size; \textit{the number of groups excluded from the panel}; and \textit{the $L^{0/1}$ loss}. The results, in \Cref{app:other-criteria}, show that $\algoname^{\ell_1}$ is reliably one of the lowest-loss algorithms across all these metrics.

\subsection{Robustness: Evaluation on the \textit{dropout set distribution} when \texorpdfstring{$\tilde{\dprobs} \neq \dprobs$}{test}}
Finally, we isolate the effects of prediction errors on the performance of these algorithms. We begin with the ground-truth probabilities $\dprobs$ from the previous experiment, and then perturb each entry of $\dprobs$ such that $\tilde{\dprob}_i \sim Unif([\max\{0,\dprob_i-\gamma\}, \min\{1,\dprob_i + \gamma\}])$. 
The true dropout distribution remains $\ddist_{\dprobs}$, but the algorithms now receive $\tilde{\dprobs}$. We repeat the experiment $r = 25$ times per error level $\gamma$.
\begin{figure}[h!]
    \centering
\includegraphics[width=0.95\linewidth]{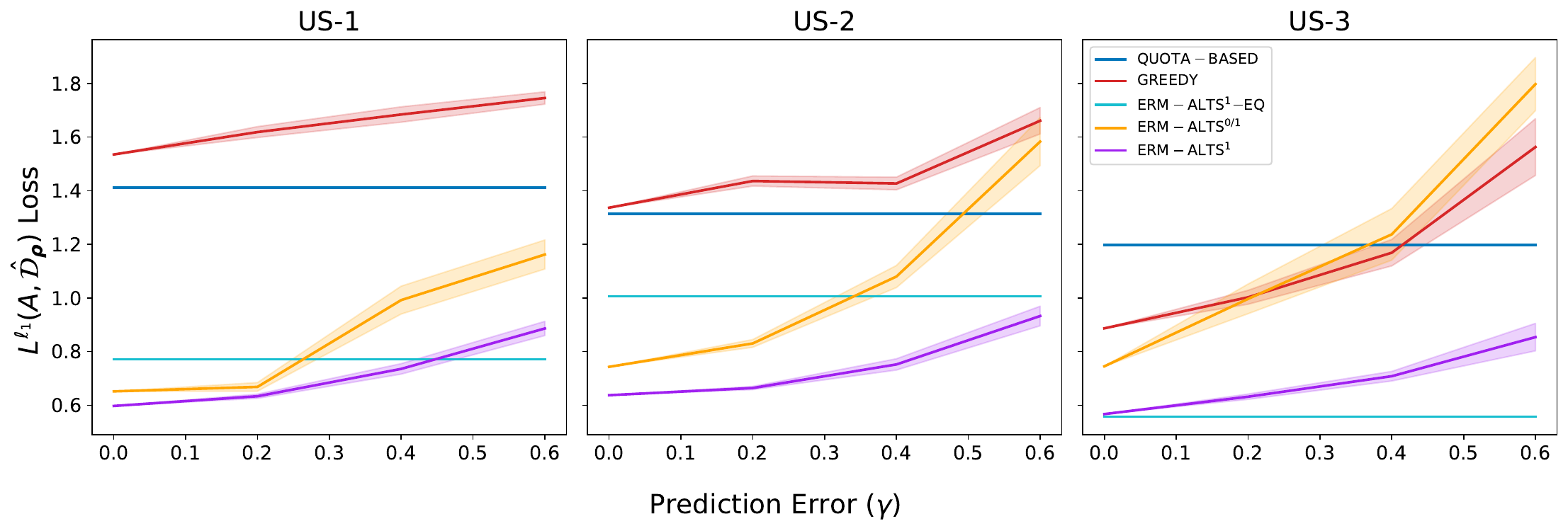}
\vspace*{-0.5em}
    \caption{$a = 6$ (between $k/6$ and $k/5$ for all instances). Per each of the $r = 25$ reps: we run each algorithm (other than the two which don't use dropout probabilities) on $\inst$ with perturbed dropout probabilities $\tilde{\dprobs}$. Then, we evaluate $A$'s loss according to an empirical estimate of $\ddist_{\dprobs}$, as described under \textbf{Loss Estimation}. The plot shows the mean and standard error of the loss over reps, with losses of \textsc{Quota-Based} and \algoname$^{\ell_1}$\textsc{-Eq} shown for comparison.} 
    \label{fig:robustness}
\end{figure}

As expected, all algorithms' loss increases in the prediction error $\gamma$. First comparing algorithms that use predictions, the separation is clear: $\algoname^{\ell_1}$ begins with lower loss \textit{and} its loss increases more slowly in $\gamma$. Comparing $\algoname^{\ell_1}$ and $\algoname^{\ell_1}$\textsc{-Eq}, we see that in \textit{US-1} and \textit{US-2}, large prediction errors are required ($0.4$, $>0.6$) before it is worth ignoring dropout probability estimates and opting for $\algoname^{\ell_1}$\textsc{-Eq}. In \textit{US-3}, $\algoname^{\ell_1}$ and $\algoname^{\ell_1}$\textsc{-Eq} have identical performance under perfect predictions, but as $\gamma$ grows to $0.6$, the gap in loss between the algorithms grows only by 0.25 (where the max possible $L^{\ell_1}$ loss in \textit{US-3} is 2.9, by \Cref{fig:sim-L1}). Moreover, $\algoname^{\ell_1}$'s empirical robustness far outperforms the theoretical bounds shown in \Cref{sec:robustness}.

\section{Discussion} \label{sec:discussion}
Our ERM algorithms are immediately practical: they all\emdash but $\algoname^{\ell_1}$ especially\emdash offer the prospect of drastically decreasing the number of alternates practitioners must select and compensate, while still maintaining or even decreasing the extent to which dropouts compromise representation. Even if a practitioner group does not have the requisite data to predict dropout rates (or simply does not want to), $\algoname^{\ell_1}$\textsc{-Eq} still outperforms both heuristic benchmarks significantly, thereby offering substantial practical improvement even for practitioners without access to predictions.

\textbf{Extensions of our ERM algorithms.} Our ERM algorithms can be generalized in many practical ways. Chief among the remaining practical concerns is that \textit{alternates} may drop out in addition to panelists. Our ERM algorithms can be directly applied to hedge optimally against such additional dropouts, even when alternates drop out according to a different distribution. Furthermore, our ERM algorithms can be used to hedge against dropouts in many different ways beyond selecting alternates: they can be used to select loss-minimizing \textit{extra panelists} while obeying additional quotas; to select loss-minimizing \textit{panels} directly; or to select loss-minimizing panels and alternates in conjunction. We describe the details of all these extensions in Appendix F.

\textbf{Future work: beyond the expected loss.} Currently, our general ERM algorithm focuses on minimizing the \textit{expected} loss. One concern with this approach is that it tells us nothing about more pessimistic properties of the loss distribution, i.e., how much loss we could incur if we were unlucky with the dropout set. An avenue to improve robustness against such scenarios would be to minimize a convex combination of the expected loss and a high percentile of the loss. 
Or, taking a fully worst-case approach, one could also study a minimax variant in which the minimizer chooses $A$ to minimize the loss against a dropout set chosen by the maximizer. We believe that the former direction could be an extension of our methods, while the latter requires separate algorithmic tools.

\textbf{Future work: retaining the randomness of sortition.} Finally, we examine how alternate selection fits within the broader pipeline of
\textit{sortition}, the process of randomly selecting the assembly's original members. Past work on sortition has gone to great lengths to design the randomness of sortition such that pool members' chances of selection for the original panel are \textit{as equal as possible} within the quotas\emdash a property which confers normative ideals like fairness, resistance to manipulation, and more \cite{flanigan2023mini}. Now, as we expand the sortition model to encompass dropouts and the subsequent deployment of alternates, we encounter a barrier to equalizing pool members' chances of selection: existing alternate selection methods and ours alike
choose alternates \textit{deterministically}. 
This determinism can compromise the ideals that past work used finely-tuned randomness to achieve: for example, alternate selection algorithms may be distinctly unfair, privileging groups with lower likelihood of dropping out. Similarly --- while it is impossible to guarantee strategyproof sortition even \textit{without} dropouts --- creating a deterministic pathway to the panel via alternates could significantly worsen the manipulability of the sortition process.

This motivates the natural next research question: \textit{How would one randomize alternate selection?} We give some intuition here. First, suppose we wanted to ``randomize'' our ERM algorithms \textit{without} compromising on robustness to dropout. To do this, we would find as many alternate sets as possible with optimal loss, and then randomize over them with the goal of maximally equalizing pool members' chances of selection, exactly as state-of-the-art sortition algorithms randomize over quota-satisfying panels \cite{flanigan2021fair}. This approach is not guaranteed to work, however; in the worst case, there may be only one such optimal alternate set, and our algorithm would remain deterministic. Intuitively, as we permit ourselves to randomize over alternate sets with increasingly suboptimal loss, we gain the flexibility to randomize over more alternate sets and can  thus further equalize pool members' chances of being selected. This exposes a fundamental trade-off between \textit{the randomness that defines sortition} and \textit{robustness of representation to dropout}; we leave the development of algorithms that strike this trade-off optimally to future work.

\begin{acks}

    We thank Healthy Democracy and MASS LBP for providing data; Manolis Zampetakis for helpful technical conversations; Kyle Redman, Chris Ellis, Linn Davis, Grace Taylor, and Justin Reedy for helpful discussions on sortition dropouts; and Tavor Baharav for the title suggestion. For funding, we thank the ETH Excellence Scholarship Opportunity Programme (CB), the Harvard Data Science Initiative Postdoctoral Fellowship (BF), and NSF IIS-2229881 (AP).

\end{acks}

\bibliographystyle{ACM-Reference-Format}
\bibliography{bibliography}


\begin{thebibliography}{24}


\ifx \showCODEN    \undefined \def \showCODEN     #1{\unskip}     \fi
\ifx \showISBNx    \undefined \def \showISBNx     #1{\unskip}     \fi
\ifx \showISBNxiii \undefined \def \showISBNxiii  #1{\unskip}     \fi
\ifx \showISSN     \undefined \def \showISSN      #1{\unskip}     \fi
\ifx \showLCCN     \undefined \def \showLCCN      #1{\unskip}     \fi
\ifx \shownote     \undefined \def \shownote      #1{#1}          \fi
\ifx \showarticletitle \undefined \def \showarticletitle #1{#1}   \fi
\ifx \showURL      \undefined \def \showURL       {\relax}        \fi
\providecommand\bibfield[2]{#2}
\providecommand\bibinfo[2]{#2}
\providecommand\natexlab[1]{#1}
\providecommand\showeprint[2][]{arXiv:#2}

\bibitem[Assos et~al\mbox{.}(2025)]%
        {thispaper-ssrn}
\bibfield{author}{\bibinfo{person}{Angelos Assos}, \bibinfo{person}{Carmel Baharav}, \bibinfo{person}{Bailey Flanigan}, {and} \bibinfo{person}{Ariel~D. Procaccia}.} \bibinfo{year}{2025}\natexlab{}.
\newblock \showarticletitle{Alternates, Assemble! Selecting Optimal Alternates for Citizens' Assemblies}.
\newblock  (\bibinfo{year}{2025}).
\newblock
\urldef\tempurl%
\url{https://ssrn.com/abstract=5283438}
\showURL{%
\tempurl}


\bibitem[Baharav and Flanigan(2024)]%
        {baharav2024fair}
\bibfield{author}{\bibinfo{person}{Carmel Baharav} {and} \bibinfo{person}{Bailey Flanigan}.} \bibinfo{year}{2024}\natexlab{}.
\newblock \showarticletitle{Fair, Manipulation-Robust, and Transparent Sortition}. In \bibinfo{booktitle}{\emph{Proceedings of the 25th ACM Conference on Economics and Computation}}. \bibinfo{pages}{756--775}.
\newblock


\bibitem[Benad{\`e} et~al\mbox{.}(2019)]%
        {benade2019no}
\bibfield{author}{\bibinfo{person}{Gerdus Benad{\`e}}, \bibinfo{person}{Paul G{\"o}lz}, {and} \bibinfo{person}{Ariel~D Procaccia}.} \bibinfo{year}{2019}\natexlab{}.
\newblock \showarticletitle{No stratification without representation}. In \bibinfo{booktitle}{\emph{Proceedings of the 2019 ACM Conference on Economics and Computation}}. \bibinfo{pages}{281--314}.
\newblock


\bibitem[Bürgerrat(2023)]%
        {france2}
\bibfield{author}{\bibinfo{person}{Bürgerrat}.} \bibinfo{year}{2023}\natexlab{}.
\newblock \bibinfo{title}{French citizens' assembly supports assisted dying}.
\newblock \bibinfo{howpublished}{Available at \url{https://www.buergerrat.de/en/news/french-citizens-assembly-supports-assisted-dying/} (2024/02/11)}.
\newblock


\bibitem[{Deutscher Bundestag}(2023)]%
        {Bundestag2023Nutrition}
\bibfield{author}{\bibinfo{person}{{Deutscher Bundestag}}.} \bibinfo{year}{2023}\natexlab{}.
\newblock \bibinfo{title}{The Legislative Process: Focus on Nutrition}.
\newblock \bibinfo{howpublished}{\url{https://www.bundestag.de/en/parliament/process-nutrition-995912}}.
\newblock
\newblock
\shownote{Accessed: 2025-02-03}.


\bibitem[Ebadian et~al\mbox{.}(2022)]%
        {EKMP+22}
\bibfield{author}{\bibinfo{person}{Soroush Ebadian}, \bibinfo{person}{Gregory Kehne}, \bibinfo{person}{Evi Micha}, \bibinfo{person}{Ariel~D Procaccia}, {and} \bibinfo{person}{Nisarg Shah}.} \bibinfo{year}{2022}\natexlab{}.
\newblock \showarticletitle{Is Sortition Both Representative and Fair?}
\newblock \bibinfo{journal}{\emph{Advances in Neural Information Processing Systems}}  \bibinfo{volume}{35} (\bibinfo{year}{2022}).
\newblock


\bibitem[Ebadian and Micha(2023)]%
        {ebadianboosting}
\bibfield{author}{\bibinfo{person}{Soroush Ebadian} {and} \bibinfo{person}{Evi Micha}.} \bibinfo{year}{2023}\natexlab{}.
\newblock \bibinfo{title}{Boosting Sortition via Proportional Representation}.
\newblock \bibinfo{howpublished}{Manuscript}.
\newblock


\bibitem[Fetaya(2016)]%
        {FetayaLectureNotes2016}
\bibfield{author}{\bibinfo{person}{Ethan Fetaya}.} \bibinfo{year}{2016}\natexlab{}.
\newblock \bibinfo{title}{Introduction to Statistical Learning Theory - Lecture 5}.
\newblock \bibinfo{howpublished}{Online}.
\newblock
\newblock
\shownote{Available: \url{https://www.wisdom.weizmann.ac.il//\%7Eethanf/teaching/ItSLT_16/lectures/lec_fat_no_anim.pdf}}.


\bibitem[Fish et~al\mbox{.}(2023)]%
        {fish2023generative}
\bibfield{author}{\bibinfo{person}{Sara Fish}, \bibinfo{person}{Paul G{\"o}lz}, \bibinfo{person}{David~C Parkes}, \bibinfo{person}{Ariel~D Procaccia}, \bibinfo{person}{Gili Rusak}, \bibinfo{person}{Itai Shapira}, {and} \bibinfo{person}{Manuel W{\"u}thrich}.} \bibinfo{year}{2023}\natexlab{}.
\newblock \showarticletitle{Generative social choice}.
\newblock \bibinfo{journal}{\emph{arXiv preprint arXiv:2309.01291}} (\bibinfo{year}{2023}).
\newblock


\bibitem[Fishkin et~al\mbox{.}(2019)]%
        {fishkin2019deliberative}
\bibfield{author}{\bibinfo{person}{James Fishkin}, \bibinfo{person}{Nikhil Garg}, \bibinfo{person}{Lodewijk Gelauff}, \bibinfo{person}{Ashish Goel}, \bibinfo{person}{Kamesh Munagala}, \bibinfo{person}{Sukolsak Sakshuwong}, \bibinfo{person}{Alice Siu}, {and} \bibinfo{person}{Sravya Yandamuri}.} \bibinfo{year}{2019}\natexlab{}.
\newblock \showarticletitle{Deliberative democracy with the online deliberation platform}. In \bibinfo{booktitle}{\emph{The 7th AAAI Conference on Human Computation and Crowdsourcing (HCOMP 2019)}}. \bibinfo{pages}{1--2}.
\newblock


\bibitem[Flanigan et~al\mbox{.}(2021a)]%
        {flanigan2021fair}
\bibfield{author}{\bibinfo{person}{Bailey Flanigan}, \bibinfo{person}{Paul G{\"o}lz}, \bibinfo{person}{Anupam Gupta}, \bibinfo{person}{Brett Hennig}, {and} \bibinfo{person}{Ariel~D Procaccia}.} \bibinfo{year}{2021}\natexlab{a}.
\newblock \showarticletitle{Fair algorithms for selecting citizens’ assemblies}.
\newblock \bibinfo{journal}{\emph{Nature}} \bibinfo{volume}{596}, \bibinfo{number}{7873} (\bibinfo{year}{2021}), \bibinfo{pages}{548--552}.
\newblock


\bibitem[Flanigan et~al\mbox{.}(2020)]%
        {flanigan2020neutralizing}
\bibfield{author}{\bibinfo{person}{Bailey Flanigan}, \bibinfo{person}{Paul G{\"o}lz}, \bibinfo{person}{Anupam Gupta}, {and} \bibinfo{person}{Ariel~D Procaccia}.} \bibinfo{year}{2020}\natexlab{}.
\newblock \showarticletitle{Neutralizing self-selection bias in sampling for sortition}.
\newblock \bibinfo{journal}{\emph{Advances in Neural Information Processing Systems}}  \bibinfo{volume}{33} (\bibinfo{year}{2020}), \bibinfo{pages}{6528--6539}.
\newblock


\bibitem[Flanigan et~al\mbox{.}(2023)]%
        {flanigan2023mini}
\bibfield{author}{\bibinfo{person}{Bailey Flanigan}, \bibinfo{person}{Paul G{\"o}lz}, {and} \bibinfo{person}{Ariel Procaccia}.} \bibinfo{year}{2023}\natexlab{}.
\newblock \bibinfo{booktitle}{\emph{Mini-Public Selection: Ask What Randomness Can Do for You}}.
\newblock \bibinfo{publisher}{Ash Institute for Democratic Governance and Innovation}.
\newblock


\bibitem[Flanigan et~al\mbox{.}(2021b)]%
        {flanigan2021transparent}
\bibfield{author}{\bibinfo{person}{Bailey Flanigan}, \bibinfo{person}{Gregory Kehne}, {and} \bibinfo{person}{Ariel~D Procaccia}.} \bibinfo{year}{2021}\natexlab{b}.
\newblock \showarticletitle{Fair sortition made transparent}.
\newblock \bibinfo{journal}{\emph{Advances in Neural Information Processing Systems}}  \bibinfo{volume}{34} (\bibinfo{year}{2021}), \bibinfo{pages}{25720--25731}.
\newblock


\bibitem[Flanigan et~al\mbox{.}(2024)]%
        {flanigan2024manipulation}
\bibfield{author}{\bibinfo{person}{Bailey Flanigan}, \bibinfo{person}{Jennifer Liang}, \bibinfo{person}{Ariel~D Procaccia}, {and} \bibinfo{person}{Sven Wang}.} \bibinfo{year}{2024}\natexlab{}.
\newblock \showarticletitle{Manipulation-Robust Selection of Citizens’ Assemblies}. In \bibinfo{booktitle}{\emph{Proceedings of the AAAI Conference on Artificial Intelligence}}.
\newblock


\bibitem[Gelauff and Goel(2022)]%
        {gelauff2022opinion}
\bibfield{author}{\bibinfo{person}{Lodewijk~L Gelauff} {and} \bibinfo{person}{Ashish Goel}.} \bibinfo{year}{2022}\natexlab{}.
\newblock \showarticletitle{Opinion change or differential turnout: Austin’s budget feedback exercise and the police department}. In \bibinfo{booktitle}{\emph{Proceedings of the 2nd ACM Conference on Equity and Access in Algorithms, Mechanisms, and Optimization}}. \bibinfo{pages}{1--19}.
\newblock


\bibitem[Giraudet et~al\mbox{.}(2022)]%
        {giraudet2022co}
\bibfield{author}{\bibinfo{person}{Louis-Ga{\"e}tan Giraudet}, \bibinfo{person}{B{\'e}n{\'e}dicte Apouey}, \bibinfo{person}{Hazem Arab}, \bibinfo{person}{Simon Baeckelandt}, \bibinfo{person}{Philippe Begout}, \bibinfo{person}{Nicolas Berghmans}, \bibinfo{person}{Nathalie Blanc}, \bibinfo{person}{Jean-Yves Boulin}, \bibinfo{person}{Eric Buge}, \bibinfo{person}{Dimitri Courant}, {et~al\mbox{.}}} \bibinfo{year}{2022}\natexlab{}.
\newblock \showarticletitle{“Co-construction” in deliberative democracy: lessons from the French Citizens’ Convention for Climate}.
\newblock \bibinfo{journal}{\emph{Humanities and Social Sciences Communications}} \bibinfo{volume}{9}, \bibinfo{number}{1} (\bibinfo{year}{2022}), \bibinfo{pages}{1--16}.
\newblock


\bibitem[Irish Citizens' Assembly~Project(2019)]%
        {ireland2019}
\bibfield{author}{\bibinfo{person}{The Irish Citizens' Assembly~Project}.} \bibinfo{year}{2019}\natexlab{}.
\newblock
\newblock
\shownote{\url{http://www.citizenassembly.ie/work/}}.


\bibitem[Ma et~al\mbox{.}(2013)]%
        {ma2013estimating}
\bibfield{author}{\bibinfo{person}{Jianzhu Ma}, \bibinfo{person}{Jian Peng}, \bibinfo{person}{Sheng Wang}, {and} \bibinfo{person}{Jinbo Xu}.} \bibinfo{year}{2013}\natexlab{}.
\newblock \showarticletitle{Estimating the partition function of graphical models using Langevin importance sampling}. In \bibinfo{booktitle}{\emph{Artificial Intelligence and Statistics}}. PMLR, \bibinfo{pages}{433--441}.
\newblock


\bibitem[{OECD}(2020)]%
        {OECD2020Innovative}
\bibfield{author}{\bibinfo{person}{{OECD}}.} \bibinfo{year}{2020}\natexlab{}.
\newblock \bibinfo{booktitle}{\emph{Innovative Citizen Participation and New Democratic Institutions}}.
\newblock \bibinfo{type}{Policy Report}. \bibinfo{institution}{OECD Publishing}.
\newblock
\newblock
\shownote{Accessed: 2025-02-03}.


\bibitem[{Participedia}(2021)]%
        {GlobalAssembly2021}
\bibfield{author}{\bibinfo{person}{{Participedia}}.} \bibinfo{year}{2021}\natexlab{}.
\newblock \bibinfo{title}{Global Assembly on the Climate \& Ecological Crisis}.
\newblock \bibinfo{howpublished}{\url{https://participedia.net/case/global-assembly-on-the-climate-ecological-crisis}}.
\newblock
\newblock
\shownote{Accessed: 2025-02-03}.


\bibitem[Peters et~al\mbox{.}(2022)]%
        {peters2022robust}
\bibfield{author}{\bibinfo{person}{Dominik Peters}, \bibinfo{person}{Ariel~D Procaccia}, {and} \bibinfo{person}{David Zhu}.} \bibinfo{year}{2022}\natexlab{}.
\newblock \showarticletitle{Robust rent division}.
\newblock \bibinfo{journal}{\emph{Advances in Neural Information Processing Systems}}  \bibinfo{volume}{35} (\bibinfo{year}{2022}), \bibinfo{pages}{13864--13876}.
\newblock


\bibitem[Shalev-Shwartz and Ben-David(2014)]%
        {shalev2014understanding}
\bibfield{author}{\bibinfo{person}{Shai Shalev-Shwartz} {and} \bibinfo{person}{Shai Ben-David}.} \bibinfo{year}{2014}\natexlab{}.
\newblock \bibinfo{booktitle}{\emph{Understanding machine learning: From theory to algorithms}}.
\newblock \bibinfo{publisher}{Cambridge university press}.
\newblock


\bibitem[Tessler et~al\mbox{.}(2024)]%
        {tessler2024ai}
\bibfield{author}{\bibinfo{person}{Michael~Henry Tessler}, \bibinfo{person}{Michiel~A Bakker}, \bibinfo{person}{Daniel Jarrett}, \bibinfo{person}{Hannah Sheahan}, \bibinfo{person}{Martin~J Chadwick}, \bibinfo{person}{Raphael Koster}, \bibinfo{person}{Georgina Evans}, \bibinfo{person}{Lucy Campbell-Gillingham}, \bibinfo{person}{Tantum Collins}, \bibinfo{person}{David~C Parkes}, {et~al\mbox{.}}} \bibinfo{year}{2024}\natexlab{}.
\newblock \showarticletitle{AI can help humans find common ground in democratic deliberation}.
\newblock \bibinfo{journal}{\emph{Science}} \bibinfo{volume}{386}, \bibinfo{number}{6719} (\bibinfo{year}{2024}), \bibinfo{pages}{eadq2852}.
\newblock


\end{thebibliography}

\newpage
\appendix

\section{Supplemental Materials from \Cref{sec:introduction}}
\subsection{Argument showing NP-hardness} \label{app:reduction}

\begin{lemma}
    Given a panel $K$, an alternate set $A$, a dropout set $D$, and a set of quotas $\boldsymbol{l},\boldsymbol{u}$, it is NP-hard (in asymptotic parameter $a$) to compute the optimal replacement set $R \subseteq A : |R| \leq |D|$.
\end{lemma}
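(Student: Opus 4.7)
The plan is to obtain NP-hardness by reducing from a standard combinatorial problem whose input size matches our asymptotic parameter $a = |A|$. A natural candidate cited in prior sortition work is the decision problem of panel feasibility with quotas: given a pool and upper/lower quotas, does there exist a $k$-subset of the pool satisfying all quotas? This is NP-hard (e.g., by a short reduction from $\textsc{Exact Cover by 3-Sets}$). The replacement-selection problem generalizes this feasibility problem, so the hardness should transfer via a simple embedding.

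Concretely, I would reduce as follows. Given a feasibility instance with pool $N'$, quotas $\boldsymbol{l}', \boldsymbol{u}'$, and target size $k'$, build a replacement instance by taking $A := N'$, choosing $K$ arbitrarily with $D := K$ and $|K| = k'$ (so that $K\setminus D = \emptyset$ and the size constraint $|R|\leq |D|$ becomes $|R|\leq k'$), and letting $\boldsymbol{l} := \boldsymbol{l}'$, $\boldsymbol{u} := \boldsymbol{u}'$. If the feasibility problem implicitly requires $|R| = k'$ rather than $\leq k'$, I would add one dummy binary feature $f_\star$ on which every agent takes value $\star$ with tight quota $l_{f_\star,\star} = u_{f_\star,\star} = k'$, forcing $|R| = k'$. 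The optimal replacement set $R^\star$ then has $dev(K\setminus D \cup R^\star, \boldsymbol{l}, \boldsymbol{u}) = 0$ under either the binary or linear deviation \emph{iff} the original feasibility instance has a solution; in particular, any polynomial-time algorithm for the optimal replacement problem would decide feasibility. Since the construction is linear in $|N'|$, and by design $a = |A| = |N'|$, NP-hardness in the parameter $a$ follows.

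If it turns out that the existing theorem the authors have in mind is not stated as NP-hardness in the pool size, I would instead give a direct reduction from $\textsc{Exact Cover by 3-Sets}$: for a universe $U$ with $|U|=3q$ and a collection $\mathcal{S}$ of 3-element subsets, create one binary feature $f_u$ per element $u\in U$, place one alternate in $A$ per set $S\in\mathcal{S}$ whose feature vector marks $1$ exactly on $\{f_u : u\in S\}$, and set $l_{f_u,1} = u_{f_u,1} = 1$. Take $K = D$ with $|D| = q$ (plus a dummy feature if one wants to force $|R| = q$). An exact cover exists iff there is an $R\subseteq A$ with $|R|\leq q$ achieving zero deviation. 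Here $a = |\mathcal{S}|$, matching the input size of the X3C instance.

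The only real subtlety is parameter alignment: I need to ensure that the number of features and the quota specification are polynomially bounded in $a$ (true in both reductions above), and that the deviation function being targeted is handled consistently — but since $dev^{0/1} = 0 \iff dev^{\ell_1} = 0$, the feasibility-based reductions cover both loss variants simultaneously. The potential obstacle is finding an \emph{existing} hardness theorem stated in precisely the parameter regime the authors want; absent that, the direct X3C reduction above is self-contained and suffices.
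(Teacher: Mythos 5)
Your proposal is correct and follows essentially the same route as the paper: both reduce from the known NP-hardness of finding a quota-satisfying subset of a given size from a pool (Theorem~1 of \citet{flanigan2021fair}, proved there via Set Cover), by arranging the instance so that computing the optimal replacement set decides that feasibility question. The paper keeps a general $D$ and rewrites the quotas in terms of the dropout counts $D_{f,v}$, whereas you take $D=K$ and add a dummy tight feature to force $|R|=k'$, but these are cosmetic differences in the same embedding.
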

\begin{proof}
    We show that simply \textit{deciding} whether there exists an $R$ such that $K \setminus D \cup A$ satisfies the quotas is NP-hard. Let it be the case that $a = k$, and let the dropout probabilities be constant so that with high probability, $|D| \in \Theta(a)$ (note that we know that $|D| \leq a$). Let the quotas $\boldsymbol{l}$, $\boldsymbol{u}$ be tight. Let $D_{f,v} = \sum_{i \in D} : f(i) = v$ be shorthand for the number of agents in $D$ with value $v$ for feature $f$. Now, let our \textit{modified} quotas $\boldsymbol{l}'$, $\boldsymbol{u}'$ also be tight, such that $l_{f,v} = u_{f,v} = D_{f,v}$ for all $f,v \in FV$.

    Note that the problem of deciding whether there exists an $R$ such that $K \setminus D \cup A$ satisfies the quotas $\boldsymbol{l}$, $\boldsymbol{u}$ is equivalent to checking whether there is a $|D|$-sized subset of $A$ satisfying the modified quotas $\boldsymbol{l}'$, $\boldsymbol{u}'$. By Theorem 1 of \cite{flanigan2021fair} (via a reduction from Set Cover), this problem is known to be NP-hard in $a$.
\end{proof}

\newpage
\section{Supplemental Materials from \Cref{sec:model}}
\subsection{ILP formulation for computing best replacement set} \label{app:replacementILP}

Given a deviation function $dev$, an alternate set $A$, and a dropout set $D$, one can compute the deviation-minimizing replacement set via the following ILP. Because a generic deviation function $dev$ directly intakes a set (a strange variable to represent in an ILP), we will write two ILPs, one for $dev^{0/1}$ and one for $dev^{\ell_1}$. we will write ILPs for  variables are $y_1,\dots,y_a$, where $y_i \in \{0,1\}$ is an indicator of whether alternate $i$ is included in the replacement set $R$. 

$dev^{0/1}$:
\begin{align*}
    \min \quad & \sum_{f,v} z_{f,v} \notag\\
    &\sum_{i \in A} y_{i} \leq |D| \notag\\
    &l_{f,v} - \left(\sum_{i \in K \setminus D} \mathbb{I}(f(i) = v) + \sum_{i \in A} y_{i}\, \mathbb{I}(f(i)=v) \right) \leq z_{f, v} \, (|K|+|A|) &&\text{for all } f \in F, v \in V_f \\
    & - u_{f,v} + \left(\sum_{i \in K \setminus D} \mathbb{I}(f(i) = v) + \sum_{i \in A} y_{i,D} \,\mathbb{I}(f(i)=v) \right)\leq z_{f, v} \, (|K|+|A|)  &&\text{for all } f \in F, v \in V_f \\
    & y_i \in \{0,1\} &&\text{for all } i \in A\\
    & z_{f,v} \in \{0,1\} &&\text{for all } f \in F, v \in V_f
\end{align*}

$dev^{\ell_1}$:
\begin{align*}
    \min \quad & \sum_{f,v} z_{f,v} \notag\\
    &\sum_{i \in A} y_{i} \leq |D| \notag\\
    &l_{f,v} - \left(\sum_{i \in K \setminus D} \mathbb{I}(f(i) = v) + \sum_{i \in A} y_{i}\, \mathbb{I}(f(i)=v) \right) \leq z_{f, v} \, u_{f,v} &&\text{for all } f \in F, v \in V_f \\
    & - u_{f,v} + \left(\sum_{i \in K \setminus D} \mathbb{I}(f(i) = v) + \sum_{i \in A} y_{i,D} \,\mathbb{I}(f(i)=v) \right)\leq z_{f, v} \, u_{f,v} &&\text{for all } f \in F, v \in V_f \\
    & y_i \in \{0,1\} &&\text{for all } i \in A\\
    & z_{f,v} \in \mathbb{R}_{\geq 0} &&\text{for all } f \in F, v \in V_f
\end{align*}

\subsection{Formulation of $\textsc{Opt}(\ddist,\inst)$} \label{app:ILPformulation}
For concreteness, we define this ILP not for generic $dev$, but for our two specific deviation functions $dev^{0/1}$ and $dev^{\ell_1}$. We express these programs simultaneously, where the $(*\,dev^{0/1}\,*)$ and $(*\,dev^{\ell_1}\,*)$ tags denote constraints appearing only in the $dev^{0/1}$ and $dev^{\ell_1}$ versions of \textsc{Opt}, respectively.

\textit{Variables in \textsc{Opt}.} The variables $x_i$ for $i\in N$ are indicator variables of whether pool member $i \in N$ is included in the alternate set. $y_{i,D}$ is then the indicator variable of whether $i$ is chosen to be on the replacement set for dropout set $D$ (note that for $y_{i,D} =1$, it must be that $i$ was in the alternate set to begin with, implying the constraint that $y_{i,D} \leq x_i$). The variables $z_{D,f,v}$ can be thought of as the linear deviation from the quotas, specifically on feature $f$ and value $v$, that is incurred when we use the selected replacement set on dropout set $D$. Finally, the variables $d_D$ capture the deviation (either linear or binary) on set $D$. The objective minimizes the expected value of the deviation function over the dropout set distribution $\mathcal{D}$.\\

\noindent \textsc{Opt}$^{dev}(\inst,\mathcal{D})$ 
\begin{align*}
    \min \ & \sum_{D \in 2^K} d_D \cdot \mathcal{D}(D) \notag\\
    \text{s.t.} \ &\sum_{i\in N} x_i = a \notag\\
    &y_{i,D} \leq x_i &&\forall \ i \in N,\ D \in 2^K \\
    &\sum_{i \in N} y_{i, D} \leq |D| &&\forall \ D \in 2^K\\
    &l_{f,v} - \left(\sum_{i \in K \setminus D} \mathbb{I}(f(i) = v) + \sum_{i \in N} y_{i,D}\, \mathbb{I}(f(i)=v) \right) \leq z_{D, f, v} \, u_{f,v} &&\forall \ D \in 2^K, f,v \in FV \\
    & - u_{f,v} + \left(\sum_{i \in K \setminus D} \mathbb{I}(f(i) = v) + \sum_{i \in N} y_{i,D} \,\mathbb{I}(f(i)=v) \right)\leq z_{D, f, v} \, u_{f,v} &&\forall \ D \in 2^K, f,v \in FV \\
    &\sum_{f,v} z_{D,f,v} \leq d_D \qquad \quad \ \qquad(*\,dev^{\ell_1}\,*) &&\forall \ D \in 2^K \\
    &\sum_{f,v} z_{D,f,v} \leq d_D \cdot (|K| + a) |FV| \qquad(*\,dev^{0/1}\,*) &&\forall \ D \in 2^K\\
    & d_D \in \mathbb{R}_{\geq 0}  \ \ \ \qquad(*\,dev^{\ell_1}\,*)&&\forall \ D \in 2^K\\
    & d_D \in \{0,1\} \qquad(*\,dev^{0/1}\,*) &&\forall \ D \in 2^K\\
    & x_i \in \{0,1\}, \ y_{i,D} \in \{0,1\}, \  z_{D,f,v} \in \mathbb{R}_{\geq 0} && \forall \  i \in N, D \in 2^K, f,v \in FV 
\end{align*}

\newpage
\section{Supplemental Materials from \Cref{sec:algo}}

\subsection{Proof of \Cref{thm:mainbounds} (Lower Bounds)} \label{app:mainbounds}

\begin{proof} 

We begin by proving that our relative values of $n$ and $a$ as constructed in our instance are compatible. We will set $k = |F| = \lfloor \log{n \choose a} \rfloor - 2a$, and we will require that $n \geq a 2^{|F|/a}$. Then, it needs to be the case that for all $a,n \in \mathbb{N}_{\geq 1}$ and $n \geq a$,
\[n \geq a2^{\left(\log{n \choose a} - 2a\right)/a}.\]
\begin{align*}
    a2^{\left(\log{n \choose a} - 2a\right)/a} = a2^{\log{n \choose a}/a - 2} = a/4 \cdot {n \choose a}^{1/a} \leq a/4 \cdot\left(\left(\frac{ne}{a}\right)^{a}\right)^{1/a} = a/4 \cdot \frac{ne}{a} = en/4 < n.
\end{align*}

With our instance parameters established as valid, we proceed by first constructing the instance, then proving the $\VC$ lower bound, then extending that result to prove a $\Pdim$ lower bound.

\vspace{0.4em}
\noindent \textbf{Instance construction.} Construct $\inst$ as follows. Let $k = |F| = \lfloor \log {n \choose a}\rfloor - 2a$. Let the features $f$ be numbered $1 \dots |F|$. Let each feature be binary-valued, so $V_f = \{0,1\}$ for all $f \in F$. Let the quotas be $l_{f,0} = 0,$ $l_{f,1} = 1$ and $u_{f,0} = u_{f,1} = k$ for all $f \in F$, so the only requirement is that the panel contains at least one person with a 1 for every feature. 

Next, we construct this instance's panel and pool. The most efficient way to define them is in terms of agents' feature \textit{vectors}, describing their values across the features; formally, agent $i$'s feature vector is defined as $w(i):=(f(i) | i \in F)$.

\vspace{0.4em}
\noindent \textit{Constructing the panel $K$.} Let the panelists be numbered $i \in [k]$, and let the $i$-th panelist's feature vector be the $i$-th standard $|F|$-length basis vector, so that
        \[
         w(1) = \underbrace{100\dots000}_{\text{length }|F|},\  w(2) = 010\dots000, \dots ,
        \ w(k-1) = 000\dots010, \ w(k) = 000\dots001.
        \]
Note that $K$ satisfies the instance's quotas.
    
\vspace{0.4em}
\noindent \textit{Constructing the pool.} We will construct only a subset of the pool and leave the rest to be constructed arbitrarily. The subset we will construct consists of $a$ groups of agents $G_1,\dots,G_a$. To define these groups, first define corresponding integers $m_1,\dots,m_a$, where $m_j \in \left\{\lfloor|F|/a\rfloor,\lceil|F|/a\rceil\right\}$. Which $m_j$ are assigned the floor versus ceiling doesn't matter so long as $\sum_{j \in [a]} m_j = |F|$. We divide up the features $1 \dots |F|$ into sequential sets of features of size $m_1,\dots,m_a$; call these sequential sets of features $g_j$, so
\begin{align*}
    g_1 = \{1,2,\dots,m_1\},\quad g_2= \{m_1 + 1,m + 2,\dots,m_1 + m_2\}, \quad \dots,\quad g_a = \{\text{$\sum_{j = 1}^{a-1}m_i$} + 1,\dots,a|F|\}.
\end{align*}
We now define our pool subgroups $G_1 \dots G_a$. Each set $G_j$ contains $2^{m_j}$ agents, and is composed of agents $i \in G_j$ who have 0s for all features other than those in $g_j$, and for the $m_j$ features in $g_j$ have values described by a unique vector $u_i \in \{0,1\}^{m_j}$. That is, each $i \in G_j$ is defined by some unique $u_i \in \{0,1\}^g$, and their overall feature vector is then $0\dots 0u_i0\dots 0$, where $u_i$ occurs over the indexes in $g_j$. Because there are exactly $2^{m_j}$ agents in group $G_j$, every unique vector $u_i \in \{0,1\}^{m_j}$ is possessed by exactly one agent in $G_j$. 

\vspace{0.4em}
\noindent \textbf{Construction of shattered set for $\VC$ lower bound.} We will now define a collection of $k$ dropout sets to be shattered, where each dropout set consists of a single panelist. 
    \[D_1 = \{1\}, \ D_2 = \{2\}, \dots, \ D_k = \{k\}.\]
 Fix any vector of labels $b \in \{0,1\}^k$, indexed as $b_i$. Our goal is to construct an alternate set $A_b$ such that $A_b$ ``realizes'' this vector of labels; that is, for all $i \in [k]$, we have that
   \[b_i = h_{A_b}^{0/1}(D_i).\]
   In words, the $1$ label at $b_i$ reflects that $A_b$ cannot fully restore the quotas after $D_i$ drops out, and the $0$ label reflects that it can. Let $c \in \{0,1\}^k$ be the ``opposite'' of $b$, so $c_i = 1 \iff b_i = 0$. We will construct $A_b$ by taking one agent from each set $G_j$, thereby taking $a$ agents overall. From the set $G_j$, add to $A_b$ the agent whose unique vector of values $u_i$ over features $g_j$ matches $c$ over those same indices. Formally, we add agent $i \in G_j$ to the alternate set for whom
  \[f(i) = c_f \qquad \text{ for all } f \in g_j.\] 
In other words, this agent's feature vector is $0\dots0\tilde{c}0\dots0$, where $\tilde{c}$ is a sub-vector of $c$, formed by the entries of $c$ at the indices in $g_j$. Now, it remains to show that
\[b_i = 0 \iff h_{A_b}^{0/1}(D_i) = 0,\]
which proves the claim. Recall that we achieve a $0$ label for dropout set $D_i$ when at least one agent in $A_b$ has a 1 for the $i$-th feature, as this is the 1-value we lose when $D_i = \{i\}$ drops out. 

To prove the forward direction, fix any $i \in [k]$ such that $b_i = 0$. By construction, $A_b$ must contain an agent with a 1 $i$-th position: this is exactly the agent we took from $G_j$ where $i \in g_j$. This agent must have a 1 for the $i$-th feature because we choose them to match the labels in $c$ (the opposite of the labels in $b$) on the features in $g_j$; their $i$th value must be 1 then, since $b_i = 0 \implies c_i = 1$.
 This means that 
    \[b_i = 0 \implies h_{A_b}^{0/1}(D_i) = 0.\]  
To prove the backward direction (by contrapositive), fix an $i \in K$ such that $b_i = 1$, and fix the $j$ such that $g_j \ni i$. The agent added to $A_b$ from group $G_j$ must have a 0 for the $i$th feature by construction of this agent to match $c$ on the indices of $g_j$. The agents added from groups $G_{j'}$ for all $j \in [a] \setminus j$ must have 0s for all features outside $g_{j'}$, which includes index $i$ by assumption. This means that 
    \[b_i = 1 \implies h_{A_b}^{0/1}(D_i) = 1.\]
We have shattered a collection of dropout sets of size $k$, which implies that $VC(\mathcal{H}^{0/1}(\inst)) \geq k = \left\lfloor \log {n \choose a}\right\rfloor - 2a$, and we obtain our tight lower bound of 
    \begin{equation} \label{eq:vc_lb_app}
        VC(\mathcal{H}^{0/1}(\inst)) \geq \left\lfloor \log {n \choose a}\right\rfloor - 2a \geq \log \left(\frac{n}{a}\right)^a - 1 - 2a = a(\log n - \log a -2) - 1 \in \Omega(a \log n).
\end{equation}

\vspace{0.4em}
\noindent \textbf{Extension to $\Pdim$ lower bound.} We will prove our Pdim lower bound by proving something stronger: that for all $\inst'$,
\begin{equation} \label{eq:vc-to-pdim-app}
    \VC(\mathcal{H}^{0/1}(\mathcal{I}')) \geq t \implies \Pdim(\mathcal{H}^{\ell_1}(\mathcal{I}')) \geq t \qquad \text{for all } t \in \mathbb{R}^+.
\end{equation}
Then, it will follow from \Cref{eq:vc_lb_app} that for our instance constructed above,
\[\Pdim(\mathcal{H}^{\ell_1}(\inst)) \in \Omega(a \log n).\]
It remains to prove the statement in \Cref{eq:vc-to-pdim-app}. Suppose we have $\mathcal{I}$ for which $\VC(\mathcal{H}^{0/1}(\mathcal{I})) \geq t$. That means we can find a collection of $t$ dropout sets $\mathbf{D} = \{D_1, \dots, D_t\}$ for which for any labeling $b \in \{0,1\}^t$ there exists $h^{0/1} \in \mathcal{H}^{0/1}(\mathcal{I})$ for which $h^{0/1}(D_i) = b_i$ for all $i \in [t]$

Now, we will show that for this same instance $\inst$ and same collection of $t$ dropout sets $\mathbf{D}$, there exists a witness vector $r = (r_1, \dots, r_t)$ such that for all $b \in \{0, 1\}^t$, there exists a hypothesis $h^{\ell_1} \in \mathcal{H}^{\ell_1}(\mathcal{I})$ for which
    \[
        \mathbb{I}(h^{\ell_1}(D_i) > r_i) = b_i \qquad \text{for all }i \in [t].
    \]
    This is precisely what is required to show that $\Pdim(\mathcal{H}^{\ell_1}(\mathcal{I})) \geq t$.

    Define $r$ such that $r_i = \nicefrac{1}{k+1}$ for all $i \in [t]$, and fix an arbitrary $b \in \{0,1\}^t$. Let the alternate set $A_b$ and its corresponding binary hypothesis $h^{0/1}_{A_b} \in \mathcal{H}^{0/1}$ be the hypothesis that realizes the labeling $b$ in the binary case, i.e., such that
    \[\mathbb{I}(h^{0/1}_{A_b}(D_i) = 1) = b_i,\]
    Note that such a hypothesis must exist by the fact that $\VC(\mathcal{H}^{0/1}(\mathcal{I})) \geq t$. 

    Examining the \textit{linear} hypothesis corresponding to the same alternate set, $h^{\ell_1}_{A_b}$, the key observation --- which concludes the proof --- is that for all dropout sets $D_i, i \in [t]$,
    \begin{equation}
        \label{eq:Pdim-conclusion}
    \mathbb{I}\left(h^{\ell_1}_{A_b}(D_i) > \nicefrac{1}{k+1}\right) \stackrel{(1)}{=} \mathbb{I}\left(h^{0/1}_{A_b}(D_i) = 1\right) \stackrel{(2)}{=} b_i.
    \end{equation}
    Equality (2) holds simply because we chose chose $A_b$ such that $h^{0/1}_{A_b}$ realizes the labeling $b$. The critical step is equality (1), and it follows from the definition of the range of hypotheses in $\mathcal{H}^{\ell_1}(\inst)$. By definition, the range of any hypothesis in this class is fundamentally the range of the $dev^{\ell_1}$, which consists of the sum of rational numbers whose denominators are $u_{f,v}$ for $f \in F, v \in V_f$. Using that all such $u_{f,v} \leq k$, it follows that the smallest possible nonzero rational number in the range of any hypothesis in $\mathcal{H}^{\ell_1}(\inst)$ is
    \[\frac{1}{\max_{f \in F, v \in V_f} u_{f,v}} \geq \frac{1}{k} > \frac{1}{k+1}.\]
    This implies that for any $h^{\ell_1} \in \mathcal{H}^{\ell_1}(\inst)$ and any dropout set $D$,
    \begin{equation} \label{eq:equiv1}
        h^{\ell_1}(D) > \nicefrac{1}{k+1} \iff h^{\ell_1}(D) > 0.
    \end{equation}
    The only required remaining observation is that for all alternate sets $A$ and corresponding hypotheses $h^{\ell_1}_A \in \mathcal{H}^{\ell_1}(\inst)$ and $h^{0/1}_A \in \mathcal{H}^{0/1}(\inst)$, plus all dropout sets $D$, we have that
     \begin{equation} \label{eq:equiv2}
         h^{\ell_1}_A(D) > 0 \iff h^{0/1}_A(D) > 0
     \end{equation}
    because both inequalities correspond exactly to the case that $A$ cannot restore the quotas when $D$ drops out.

    Putting the equivalences together in \Cref{eq:equiv1,eq:equiv2}, we get that for all $A$ and all $D$,
    \[h^{\ell_1}_A(D) > \nicefrac{1}{k+1} \iff h^{0/1}_A(D) > 0,\]
    exactly as needed to show equality (1) in \Cref{eq:Pdim-conclusion}.
\end{proof}

\subsection{Proof of \Cref{thm:FdepVCPdim}} \label{app:FdepVCPdim}

\begin{proof}[Proof: Upper Bounds]
The upper bound is proven by bounding the size of the hypothesis class in terms of $|F|$ rather than $n$ and $a$. Note that the maximum number of possible unique feature vectors that could ever occur in an instance is 
    \[m = \prod_{f \in F} |V_f| \leq \left(\max_{f\in F}|V_f|\right)^{|F|}\]
    Then, the maximum \textit{unique} possible alternate sets in any instance is defined by the number of ways to choose $a$ alternates from a set of $m$ unique elements, potentially with duplicates. It is well-known that the way this is as follows, where we are slightly abusing notation here to now treat $\mathcal{A}$ as the set of all \textit{unique} alternate sets: 
    \[|\mathcal{A}| \leq {m + a - 1 \choose a} \leq \left(\frac{e(m + a - 1)}{a}\right)^a.\]
    Taking the logarithm of this quantity and plugging in our upper bound for $m$, we get
    \begin{align*}
        |\mathcal{A}| \leq a \, \left( \log(m + a - 1) - \log a + 1\right) &\leq a \, \left( \log\left(\left(\max_{f\in F}|V_f|\right)^{|F|} + a - 1\right) - \log a + 1\right)\\
        &\leq a \, \left( \log\left(\max_{f\in F}|V_f|\right)^{|F|} +1 \right)\tag{*}\\
    &=a\left(|F|\log\max_{f\in F}|V_f|+1\right)
\end{align*}
The starred step is shown as follows. Consider the inequality written more simply as 
\[\log(x + a -1) - \log a +1 \leq \log(x) + 1 \iff \log(x + a -1) - \log a - \log(x) \leq 0.\]
Take the derivative of the LHS with respect to $x$, and get 
\[\frac{1}{a+x-1} - \frac{1}{x}.\]
This is weakly negative for all $x \geq 1, a \geq 1$, both of which are true in any non-degenerate instance of our problem. This means the original LHS is maximized at $x=1$. At $x=1$,
\begin{align*}
    \log(1 + a -1) - \log a - \log(1) = 0 \leq 0,
\end{align*}
concluding the proof.
\end{proof}

\begin{proof}[Proof: Lower Bounds]
The proof of the lower bound proceeds very similarly to the lower bound in \Cref{thm:mainbounds}. 

\textbf{Instance construction.} Construct $\inst$ as follows. Let $|F| \leq k$, and let $n \geq a 2^{|F|/a}$. Let the features $f$ be numbered $1 \dots |F|$. Again, $V_f = \{0,1\}$ for all $f \in F$ and the quotas are again $l_{f,0} = 0,$ $l_{f,1} = 1$ and $u_{f,0} = u_{f,0} = \infty$ for all $f \in F$. The panel is constructed as in the proof of \Cref{thm:mainbounds}, except now there are $|F|$ panelists $1,2,\dots,|F|$ with unique $|F|$-length basis vectors, with panelist $i$ having value 1 for feature $i$ and 0s for all other features. For the remaining $k - |F|$ panelists, let them have value 0 for all features. We construct the pool exactly as in the proof of \Cref{thm:mainbounds}. 

\textbf{Construction of shattered set for $\VC$ lower bound.} We construct the collection of dropout sets to be shattered as before, except now we construct only $|F|$ of them, from panelists $1 \dots |F|$: 
    \[D_1 = \{1\}, \ D_2 = \{2\}, \dots, \ D_{|F|} = \{|F|\}.\]
 Analogously to before, we fix any vector of labels $b \in \{0,1\}^{|F|}$, indexed as $b_i$. We construct our alternate set $A_b$ exactly as before, taking one person from each set $G_j$. The rest of the proof of the $\VC$ lower bound follows for the same reason, as does the extension to the $\Pdim$ lower bound, giving us lower bounds of 
 \[\VC(\mathcal{H}^{0/1}(\inst)), \ \Pdim(\mathcal{H}^{\ell_1}(\inst)) \geq |F|. \qedhere\] 
\end{proof}

\subsection{Restatement of Existing PAC-Learning Bounds} \label{app:restatement}
We use the following known PAC learning bounds, along with the fact that uniform convergence is sufficient for agnostic learnability (Theorem 6.7, \cite{shalev2014understanding}). 
\begin{lemma}[Theorem 6.8 of \cite{shalev2014understanding}] \label{thm:vc-existing-bounds}
Fix constants $\varepsilon, \delta > 0$, and let $\mathcal{H}$ is a hypothesis class of functions with range $\{0,1\}$ such that $\Pdim(\mathcal{H}) < \infty$. Then, there exists a constant $C \in \mathbb{R}^+$ such that $\mathcal{H}$ has the uniform convergence property with sample complexity 
    \[
       s(\delta,\varepsilon) \leq C\cdot \frac{\VC(\mathcal{H})  + \log\left( \frac{1}{\delta}\right)}{\varepsilon^2}.
    \]
\end{lemma}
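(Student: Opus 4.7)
The plan is to derive the stated sample complexity bound via the classical symmetrization--Sauer argument, which converts the problem of controlling $\sup_{h \in \mathcal{H}} |L_S(h) - L(h)|$ (where $L_S$ is empirical and $L$ is true $0$-$1$ loss) into a union bound over finitely many sample-induced behaviors. Uniform convergence at scale $\varepsilon/2$ suffices for the lemma, since the ERM hypothesis then competes with the best $h \in \mathcal{H}$ up to $\varepsilon$ with probability $1-\delta$.

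First I would apply the \emph{symmetrization} trick: introduce an independent ``ghost'' sample $S'$ of size $s$ drawn from $\mathcal{D}$, and by Jensen's inequality bound
\[
    \Pr_S\!\left[\sup_{h} |L_S(h) - L(h)| > \varepsilon/2\right] \;\leq\; 2\,\Pr_{S,S'}\!\left[\sup_{h} |L_S(h) - L_{S'}(h)| > \varepsilon/4\right].
\]
Then I would introduce Rademacher signs $\sigma_1,\dots,\sigma_s \in \{\pm 1\}$ and use the exchangeability of the $2s$-tuple $(S,S')$ under swaps to rewrite the right-hand side as a probability over $\sigma$ conditional on the concatenated sample $T = S \cup S'$.

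The key step is bounding the number of distinct loss patterns that hypotheses in $\mathcal{H}$ induce on a fixed $T$. By the Sauer--Shelah lemma, this count is at most the growth function $\tau_{\mathcal{H}}(2s) \leq (2es/d)^d$, where $d = \VC(\mathcal{H})$. For each fixed pattern, Hoeffding's inequality bounds the probability that the Rademacher-signed average exceeds $\varepsilon/4$ by $2\exp(-s\varepsilon^2/32)$, so a union bound yields
\[
    \Pr\!\left[\sup_{h} |L_S(h) - L_{S'}(h)| > \varepsilon/4\right] \;\leq\; 2\Bigl(\tfrac{2es}{d}\Bigr)^d \exp\!\bigl(-s\varepsilon^2/32\bigr).
\]

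Finally I would set this tail $\leq \delta/2$, take logarithms, and solve for $s$: the requirement becomes $s\varepsilon^2/32 \geq d \log(2es/d) + \log(4/\delta)$, which, using the standard trick $\log s \leq \alpha s + \log(1/\alpha) - 1$ with $\alpha = \varepsilon^2/(64d)$ to absorb the $\log s$ term, yields $s \in O\!\bigl((d + \log(1/\delta))/\varepsilon^2\bigr)$ for a suitable absolute constant $C$. The main technical obstacle in this plan is the Sauer--Shelah step: a naive union bound over $\mathcal{H}$ could be vacuous (the class may be infinite), so it is essential to replace $\mathcal{H}$ by its projection onto the sample. Everything else (symmetrization, Rademacher signs, Hoeffding, and algebraic inversion) is mechanical once that finite projection is in hand.
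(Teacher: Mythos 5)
The paper does not prove this statement at all: it is imported verbatim as Theorem~6.8 of \cite{shalev2014understanding} and used as a black box, so there is no in-paper argument to compare yours against. Judged on its own terms, your skeleton (symmetrization with a ghost sample, Rademacher signs, Sauer--Shelah to reduce to the growth function, Hoeffding, union bound) is the correct classical route, but the final inversion step does not deliver the bound as stated. From $s\varepsilon^2/32 \geq d\log(2es/d) + \log(4/\delta)$, the absorption $\log s \leq \alpha s + \log(1/\alpha) - 1$ with $\alpha \propto \varepsilon^2/d$ trades the $\log s$ term for a residual $d\log(1/\alpha) \in \Theta(d\log(d/\varepsilon^2))$, which depends on $\varepsilon$ and therefore cannot be swallowed by the absolute constant $C$. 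What your argument actually proves is $s \in O\bigl((d\log(1/\varepsilon) + \log(1/\delta))/\varepsilon^2\bigr)$, which is strictly weaker than the stated $O\bigl((d + \log(1/\delta))/\varepsilon^2\bigr)$. Removing that $\log(1/\varepsilon)$ factor is a known nontrivial refinement requiring chaining (Dudley's entropy integral) or an equivalent technique; Shalev-Shwartz and Ben-David themselves prove only the log-burdened version in Chapter~28 and defer the sharp constant-free form to the literature. A secondary quibble: Jensen's inequality gives the in-expectation symmetrization $\E\sup_h|L_S(h)-L(h)| \leq \E\sup_h|L_S(h)-L_{S'}(h)|$; the tail-probability version you write needs a Chebyshev argument on the ghost sample (valid once $s \gtrsim 1/\varepsilon^2$), not Jensen.

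For what it is worth, the weaker bound you can actually prove would still support everything this paper does with the lemma: \Cref{cor:binary-sample-complexity} treats $\varepsilon$ as a constant, and the companion linear-loss bound (\Cref{lem:pdimexistingbound}, \Cref{cor:sample-complexity-linear}) already carries a $\log(1/\varepsilon)$ factor. So the gap is real as a proof of the lemma as stated, but harmless for its role in the paper.
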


\begin{lemma}[\cite{FetayaLectureNotes2016}]\label{lem:pdimexistingbound}
Fix constants $\varepsilon, \delta > 0$, and let $\mathcal{H}$ is a hypothesis class of functions with range in $\mathbb{R}$ such that $\Pdim(\mathcal{H}) < \infty$. Then, there exists a constant $C \in \mathbb{R}^+$ such that $\mathcal{H}$ has the uniform convergence property with sample complexity 
\[ s(\delta, \varepsilon) \leq C\cdot \frac{\Pdim(\mathcal{H})\cdot \log\left( \frac{1}{\varepsilon}\right) + \log\left( \frac{1}{\delta}\right)}{\varepsilon^2}
.\]
\end{lemma}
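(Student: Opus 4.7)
The plan is to invoke the standard symmetrization-plus-covering argument from empirical process theory, which is the textbook route for converting pseudodimension bounds into uniform convergence guarantees. I will assume (as is standard, and as is implicit in the way the lemma is applied in \Cref{cor:sample-complexity-linear}) that functions in $\mathcal{H}$ take values in $[0,1]$ so that Hoeffding's inequality applies. Throughout, I would write $d = \Pdim(\mathcal{H})$, and define the loss of $h$ against labels $y$ to be $L(h) = \mathbb{E}_{(x,y) \sim \mathcal{D}}[|h(x)-y|]$ with empirical counterpart $\hat{L}(h)$ on a sample of size $s$.

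First, I would establish an $L_1$ covering-number bound for $\mathcal{H}$ restricted to any finite sample in terms of $d$, using the pseudodimension analogue of Sauer--Shelah (the Haussler--Pollard bound): for every sample $S$ of size $s$ and every $\eta > 0$, the $L_1$ $\eta$-covering number of $\mathcal{H}|_S$ satisfies $\mathcal{N}_1(\eta, \mathcal{H}, S) \leq (K/\eta)^{C d}$ for universal constants $K, C$. Second, I would apply the standard ghost-sample symmetrization trick to bound
\[
\Pr\!\Big[\sup_{h\in\mathcal{H}} |L(h) - \hat{L}(h)| > \varepsilon\Big] \leq 2\,\Pr\!\Big[\sup_{h\in\mathcal{H}} |\hat{L}'(h) - \hat{L}(h)| > \varepsilon/2\Big],
\]
where $\hat{L}'$ is the empirical risk on an independent ghost sample of the same size; this step is where the problem is reduced to deviations on a purely combinatorial object (the joint sample of size $2s$).

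Third, I would discretize $\mathcal{H}$ on the joint sample to an $\varepsilon/8$-cover $\tilde{\mathcal{H}}$ of size at most $(8K/\varepsilon)^{Cd}$, and use the triangle inequality to replace the supremum over $\mathcal{H}$ with the supremum over $\tilde{\mathcal{H}}$ at the cost of an additive $\varepsilon/4$. For each fixed $h \in \tilde{\mathcal{H}}$, Hoeffding's inequality (applied to the $[0,1]$-valued random variables $|h(x_i)-y_i|-|h(x_i')-y_i'|$) gives
\[
\Pr\!\big[|\hat{L}'(h) - \hat{L}(h)| > \varepsilon/4\big] \leq 2\exp\!\big(-s\varepsilon^2/c\big)
\]
for an absolute constant $c$. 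A union bound over the $(8K/\varepsilon)^{Cd}$ functions in the cover yields total failure probability at most $2(8K/\varepsilon)^{Cd} \exp(-s\varepsilon^2/c)$. Fourth, setting this bound to $\delta$ and taking logarithms gives the stated requirement $s \geq C' (d \log(1/\varepsilon) + \log(1/\delta))/\varepsilon^2$.

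The main obstacle is the first step: the $L_1$ covering-number bound in terms of pseudodimension is the non-trivial ingredient, since unlike the VC case there is no immediate Sauer--Shelah statement for real-valued classes. In practice I would cite Haussler's 1995 bound (or Pollard's pseudodimension covering bound) as a black box rather than reprove it; its proof itself relies on a discretization of the range and a careful reduction to the binary shattering count via the pseudodimension's definition. Once that covering bound is in hand, the remaining symmetrization, union bound, and Hoeffding steps are entirely standard and mirror the proof of \Cref{thm:vc-existing-bounds} for the VC case, with the $d$-factor in the final exponent playing the role of $\VC(\mathcal{H})$ and the additional $\log(1/\varepsilon)$ factor arising from the scale-dependence of the cover.
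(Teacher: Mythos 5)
The paper does not actually prove this lemma---it is imported as a black box from the cited lecture notes (and used alongside the uniform-convergence-implies-agnostic-learnability fact from \cite{shalev2014understanding})---so there is no in-paper argument to compare yours against. Your sketch is the standard and correct route to this result: symmetrization with a ghost sample, Haussler's $L_1$ covering-number bound $\mathcal{N}_1(\eta,\mathcal{H},S)\le O\bigl((1/\eta)^{O(d)}\bigr)$ for classes of finite pseudodimension, a union bound over the cover, and Hoeffding; the extra $\log(1/\varepsilon)$ relative to the VC case does indeed come from the scale-dependence of the cover. Two small points worth making explicit if you were to write this out in full. First, you correctly note that the lemma as stated (range in $\mathbb{R}$) is too weak for Hoeffding---some boundedness of the range is needed, and the paper implicitly relies on this when applying the lemma; note that in the paper's application the linear deviation $dev^{\ell_1}$ can exceed $1$ (it is a sum over $|FV|$ normalized terms), so the relevant range is $[0,B]$ for a problem-dependent $B$, which changes only the constant $C$. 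Second, the covering bound you invoke is for $\mathcal{H}$ itself, whereas the union bound must be taken over the induced \emph{loss} class $\{(x,y)\mapsto|h(x)-y|\}$; since $t\mapsto|t-y|$ is $1$-Lipschitz, an $L_1$ cover of $\mathcal{H}|_S$ transfers to one of the loss class at the same scale, so this gap closes immediately, but it should be stated.
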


\newpage
\section{Supplemental Materials from \Cref{sec:robustness}}
\subsection{Proof of \Cref{thm:lb-robust-binary}} \label{app:lb-robust-binary}

\begin{proof}
We construct a simple instance $\mathcal{I} = (N, K, \boldsymbol{l}, \boldsymbol{u}, a)$ as follows: there is only one binary feature, so $F = {f_1}$, and $V_{f_1} = \{0,1\}$. The panel $K$ is of size $k = 2 \lceil\log_{(1-\gamma)}\alpha\rceil$ (note: must be $\geq 2$, even, and positive) and it is comprised of $k/2$ agents with feature-value 0 and $k/2$ agents with feature-value 1. The quotas are tight, so $l_{f_1,1} = u_{f_1,1} = l_{f_1,0} = u_{f_1,0} = k/2$. The pool $N$ contains at least $k$ agents with feature-value 0 and at least $k$ agents with feature-value 1, so our alternate set construction is unencumbered by any limitations of the pool. Finally, $a = k/2$. We set $\boldsymbol{\rho}$ and $\tilde{\boldsymbol{\rho}}$ as follows, noting that indeed $\| \boldsymbol{\rho} - \tilde{\boldsymbol{\rho}}\|_{\infty} = \gamma$.
\begin{align*}
    \rho_i = \begin{cases}
        \gamma &\text{if } f_1(i) = 1\\
        0 &\text{if } f_1(i) = 0
    \end{cases} \qquad \tilde{\rho}_i = \begin{cases}
        0 &\text{if } f_1(i) = 1\\
        \gamma &\text{if } f_1(i) = 0
    \end{cases} \qquad \text{ for all $i \in K$.}
\end{align*}
First, observe that $\opttruealts{0/1}$ just consists of $a=k/2$ agents $i$ with $f_1(i) = 1$, and $\optfakealts{0/1}$ consists of $a=k/2$ agents $i$ with $f_1(i) = 0$. This is because under both $\boldsymbol{\rho}$ and $\tilde{\boldsymbol{\rho}}$, exactly $a=k/2$ members of the panel have a non-zero chance of dropping out, and both of these optimal sets simply have backups for all of these agents, and no other agents. Note that $\mathcal{L}^{0/1}(\opttruealts{0/1},\ddist_{\dprobs}) = 0$. Further, 
\[\mathcal{L}^{0/1}(\optfakealts{0/1},\ddist_{\dprobs}) = 1 \cdot \Pr_{D \sim \mathcal{D}_{\boldsymbol{\rho}}}[\exists \ i \in D \colon f_1(i) = 1] = 1 - (1-\gamma)^{k/2} \geq 1-\alpha,\]
because $\optfakealts{0/1}$ contains only agents with $f(i) = 0$ and therefore must incur $dev^{0/1}$ if \textit{any} panelist with $f(i) = 1$ drops out. It follows that
\[\mathcal{L}^{0/1}(\optfakealts{0/1};\ddist_{\dprobs}) - \mathcal{L}^{0/1}(\opttruealts{0/1};\ddist_{\dprobs}) \geq 1-\alpha.\]
Finally, it remains to show that for sufficient $s$, 
\[\Pr[\ermtruealts{0/1} = \opttruealts{0/1} \land \ermfakealts{0/1} = \optfakealts{0/1}]\geq 1-2\delta,\] 
which implies the claim. This argument proceeds two steps: (1) showing a constant separation between the loss of $\opttruealts{0/1}$ and that of any other alternate set $A$, and a symmetric separation for $\optfakealts{0/1}$; and then (2) applying \Cref{cor:binary-sample-complexity} to derive $s(\alpha,\delta,\gamma)$ such that for all $s \geq s(\alpha,\delta,\gamma)$, the desired event occurs with at least $1-2\delta$ probability.

First, consider any alternate set $A \neq \opttruealts{0/1}$. $A$ contains $<k/2$ alternates with value 1, and thus it incurs $dev^{0/1}$ of $1$ when all $k/2$ panelists with value $1$ drop out. This dropout set occurs with probability $\gamma^{k/2}$. Hence, 
\begin{equation} \label{eq:Aloss}
    \mathcal{L}^{0/1}(A;\ddist_{\dprobs}) - \mathcal{L}^{0/1}(\opttruealts{0/1};\ddist_{\dprobs}) \geq \gamma^{k/2} > \gamma^k.
\end{equation}
Now, recall that $k = 2 \lceil \log_{(1-\gamma)}\alpha\rceil$ and let $s(\alpha,\delta,\gamma) \in \Theta\left(\frac{k/2 \cdot \log(k) + \log(1/\delta)}{\gamma^{k}}\right)$.
By \Cref{cor:binary-sample-complexity}, we know that for all $s \geq s(\alpha,\delta,\gamma)$,
\[\Pr\left[|\mathcal{L}^{0/1}(\ermtruealts{0/1};\ddist_{\dprobs}) - \mathcal{L}^{0/1}(\opttruealts{0/1};\ddist_{\dprobs})| \leq \gamma^{k}\right] \geq 1-\delta.\] 
Combining this fact with \Cref{eq:Aloss}, we get that
\[\Pr[\ermtruealts{0/1} = \opttruealts{0/1}] \geq 1-\delta.\]
By symmetry, under $\mathcal{D}_{\tilde{\boldsymbol{\rho}}}$ the same gap in $0/1$ loss is induced when all panelists with \textit{value 0} drop out, and thus for all $s \geq s(\alpha,\delta,\gamma)$, we have that
\[\Pr[\ermfakealts{0/1} = \optfakealts{0/1}] \geq 1-\delta.\]
By union bounding, we get that 
\[\Pr[\ermtruealts{0/1} = \opttruealts{0/1} \land \ermfakealts{0/1} = \optfakealts{0/1}]\geq 1-2\delta. \qedhere\]
\end{proof}

\subsection{Proof of \Cref{thm:ub-robust-linear}} \label{app:ub-robust-linear}

In these proofs, it will be convenient to use the following shorthand: For any $(f,v) \in FV$, define the feature-value linear deviation of a set $S$ as:\[
dev^{\ell_1}_{f,v}(S, \mathcal{I}) = \frac{\max\{0, l_{f,v} - \sum_{i \in S} \mathbb{I}(f(i) = v), -u_{f,v} + \sum_{i \in S} \mathbb{I}(f(i) = v)\}}{u_{f,v}}
\]
Then note that then we can express the overall linear deviation as the sum of the feature-value deviations: $dev^{\ell_1}(S, \mathcal{I}) = \sum_{(f, v) \in FV} dev^{\ell_1}_{f,v}(S, \mathcal{I})$. 

\begin{proof}
Because the instance will be fixed throughout the proof, we will drop $\inst$ from all our $\mathcal{L}^{\ell_1}$ and $dev^{\ell_1}$ functions, leaving it implicit.

The core of the proof is showing the following bound on the change in linear loss for \textit{any} fixed alternate set $A$, when evaluated with respect to $\mathcal{D}_{\dprobs}$ versus $\mathcal{D}_{\tilde{\dprobs}}$:
\begin{equation}\label{eq:linearublemma-app}
    \vert\mathcal{L}^{\ell_1}(A;\mathcal{D}_{\dprobs}) - \mathcal{L}^{\ell_1}(A;\mathcal{D}_{\tilde{\dprobs}})\vert  \leq \err |FV|.
\end{equation}
Once we have this bound, we can apply it to $\opttruealts{\ell_1}$ to show the following chain of inequalities, where the first inequality is by the optimality of $\optfakealts{\ell_1}$ for $\ddist_{\tilde{\dprobs}}$:
\[\mathcal{L}^{\ell_1}(\optfakealts{\ell_1};\ddist_{\tilde{\dprobs}}) - \mathcal{L}^{\ell_1}(\opttruealts{\ell_1};\ddist_{\dprobs})\leq \mathcal{L}^{\ell_1}(\opttruealts{\ell_1};\ddist_{\tilde{\dprobs}}) - \mathcal{L}^{\ell_1}(\opttruealts{\ell_1};\ddist_{\dprobs})\leq  \gamma|FV|.\]
This gives us an almost analogous version of our desired bound for $\opttruealts{\ell_1}$ and $\optfakealts{\ell_1}$. To relate $\mathcal{L}^{\ell_1}(\ermfakealts{\ell_1};\ddist_{\dprobs})$ to $\mathcal{L}^{\ell_1}(\optfakealts{\ell_1};\ddist_{\tilde{\dprobs}})$, we apply \Cref{eq:linearublemma} once more and derive $s(\varepsilon,\delta,\gamma)$ based on \Cref{cor:sample-complexity-linear} to ensure that with probability $\geq 1-\delta$, the loss of $\ermfakealts{\ell_1}$ on $\mathcal{D}_{\tilde{\boldsymbol{\rho}}}$ with $s \geq s(\varepsilon,\delta,\gamma)$ is within $\varepsilon$ of its respective corresponding optimal alternate set. As we are showing an upper bound, we do not have to ensure that $\ermtruealts{\ell_1}$ is close to $\opttruealts{\ell_1}$. This completes the proof.

The high level approach to proving \Cref{eq:linearublemma} is to construct $k+1$ intermediate probability vectors that incrementally transform $\dprobs$ to $\tilde{\dprobs}$ by altering one agent's dropout probability at a time. As such, let $\dprobs^i = (\tilde{\dprob}_1, \dots, \tilde{\dprob}_i, \dprob_{i+1}, \dots, \dprob_k)$ for all $i \in [k]$. Then our quantity of interest can be rewritten as $\vert \mathcal{L}^{\ell_1}(A;  \mathcal{D}_{\dprobs^0}) - \mathcal{L}^{\ell_1}(A;  \mathcal{D}_{\dprobs^k})\vert$. Using a telescoping sum and the triangle inequality, we have that 
\begin{align*}
    \mathcal{L}^{\ell_1}(A;  \mathcal{D}_{\dprobs^0}) - \mathcal{L}^{\ell_1}(A;  \mathcal{D}_{\dprobs^k})\vert &= \left\vert \sum_{i=1}^{k} \mathcal{L}^{\ell_1}(A; \mathcal{D}_{\dprobs^{i-1}}) - \mathcal{L}^{\ell_1}(A; \mathcal{D}_{\dprobs^i})\right\vert \\
    &\leq \sum_{i=1}^{k} \left \vert \mathcal{L}^{\ell_1}(A; \mathcal{D}_{\dprobs^{i-1}}) - \mathcal{L}^{\ell_1}(A; \mathcal{D}_{\dprobs^i}) \right \vert.
\end{align*}
We will bound each individual term of this resulting sum separately. This will be done via a coupling argument, where we couple the random dropouts under $\dprobs^i$ and $\dprobs^{i+1}$. Formally, we are coupling \[
\boldsymbol{Y} = (Y_j \sim \text{ Bernoulli}(\rho^{i-1}_j)|j \in [k]) \quad \text{ and } \quad \boldsymbol{Y'} = (Y'_j \sim \text{ Bernoulli}(\rho^{i}_j)|j \in [k]),
\]
whose entries describe whether each panelist dropped out when dropouts were sampled under the $\dprobs^{i}$ and $\dprobs^{i+1}$ vector, respectively. $\boldsymbol{Y}$ and $\boldsymbol{Y'}$ are sampled the following coupled sampling process, which maintains the marginal distributions of $\boldsymbol{Y}$ and $\boldsymbol{Y'}$ as compared to the Bernoulli processes described above, and therefore does not affect any expected value of interest. Fix a sequence of random values $\boldsymbol{X} = (X_1,\dots,X_k)$, each drawn independently from the uniform distribution on $[0,1]$. Then, $Y_j$ and $Y'_j$ depend on the $X_j$ as follows:
\begin{align*}
    Y_i(\boldsymbol{X}) = \begin{cases}
        1 \text{ if }X_i \leq \rho_i\\
        0 \text{ else}
    \end{cases} 
    \quad
     Y'_i(\boldsymbol{X}) = \begin{cases}
        1 \text{ if }X_i \leq \tilde{\rho}_i\\
        0 \text{ else}
    \end{cases}
    \quad 
    Y_j(\boldsymbol{X}), Y'_j(\boldsymbol{X}) = \begin{cases}
        1 \text{ if }X_j \leq \tilde{\rho}_j \text{ and } j < i\\
        1 \text{ if }X_j \leq \rho_j \text{ and } j > i\\
        0 \text{ else}
    \end{cases}.
\end{align*}
We write $\boldsymbol{Y}(\boldsymbol{X}) = (Y_j(\boldsymbol{X}) | j \in [k])$ and $\boldsymbol{Y'}(\boldsymbol{X}) = (Y'_j(\boldsymbol{X}) | j \in [k])$. Note that the distribution of $\boldsymbol{Y}(\boldsymbol{X})$ (resp.~$\boldsymbol{Y'}(\boldsymbol{X})$) per this sampling process is identical to the distribution of $\boldsymbol{Y}$ (resp.~$\boldsymbol{Y'}$) induced by sampling each entry of $\boldsymbol{Y}$ (resp.~$\boldsymbol{Y'}$) according to independent Bernoulli random draws: the marginals are the same, and each entry of $\boldsymbol{Y}$ (resp.~$\boldsymbol{Y'}$) remains independent.

Let $D(\boldsymbol{Y}(\boldsymbol{X})) = \{j \in [k] | Y_j = 1\}$ be the dropout set specified by $\boldsymbol{Y}$ and define $D(\boldsymbol{Y'}(\boldsymbol{X}))$ equivalently. Define $R(\boldsymbol{Y}(\boldsymbol{X})) := \argmin_{R \subseteq A} dev^{\ell_1}(K \setminus D(\boldsymbol{Y}(\boldsymbol{X})) \cup R)$ to be the optimal replacement set for $D(\boldsymbol{Y}(\boldsymbol{X}))$, and define $R(\boldsymbol{Y'}(\boldsymbol{X}))$ analogously. 

The goal of the coupling argument is to ensure that $D(\boldsymbol{Y}(\boldsymbol{X}))$ and $D(\boldsymbol{Y'}(\boldsymbol{X}))$ are similar. In fact:

\textit{Fact 1:}
    When $X_i \leq \min(\rho_i, \tilde{\rho}_i)$ or $X_i > \max(\rho_i, \tilde{\rho}_i)$,\
$D(\boldsymbol{Y}(\boldsymbol{X})) = D(\boldsymbol{Y'}(\boldsymbol{X}))$ and consequently $R(\boldsymbol{Y}(\boldsymbol{X})) = R(\boldsymbol{Y'}(\boldsymbol{X}))$.

\textit{Fact 2:} When $X_i \in (\min(\rho_i, \tilde{\rho}_i), \max(\rho_i, \tilde{\rho}_i)]$, \ $D(\boldsymbol{Y}(\boldsymbol{X})) \triangle D(\boldsymbol{Y'}(\boldsymbol{X})) = \{i\}$ ($\triangle$ is the symmetric difference).

We can express our quantity of interest in terms of these quantities and then apply the law of total expectation, linearity of expectation, and Jensen's inequality to get that
\begin{align*}
|\mathcal{L}^{\ell_1}(A; \,\mathcal{D}_{\dprobs^{i-1}}) &- \mathcal{L}^{\ell_1}(A;  \mathcal{D}_{\dprobs^i})|\\
&= \left \vert \mathbb{E}_{\boldsymbol{Y}} \left[dev^{\ell_1}(K \setminus D(\boldsymbol{Y}) \cup R(\boldsymbol{Y}))\right] - \mathbb{E}_{\boldsymbol{Y'}} \left[dev^{\ell_1}(K \setminus D(\boldsymbol{Y'}) \cup R(\boldsymbol{Y'}))\right] \right \vert\\
\qquad\qquad &=\left \vert \mathbb{E}_{\boldsymbol{X}}\left[\mathbb{E}_{\boldsymbol{Y}} \left[dev^{\ell_1}(K \setminus D(\boldsymbol{Y}) \cup R(\boldsymbol{Y}))\middle \vert \boldsymbol{X}\right] \right] - \mathbb{E}_{\boldsymbol{X}} \left[\mathbb{E}_{\boldsymbol{Y'}} \left[ dev^{\ell_1}(K \setminus D(\boldsymbol{Y'}) \cup R(\boldsymbol{Y'}))\middle \vert \boldsymbol{X}\right] \right]\right \vert\\
&= \left \vert \mathbb{E}_{\boldsymbol{X}}\left[dev^{\ell_1}(K \setminus D(\boldsymbol{Y}(\boldsymbol{X})) \cup R(\boldsymbol{Y}(\boldsymbol{X})))\right] - \mathbb{E}_{\boldsymbol{X}} \left[dev^{\ell_1}(K \setminus D(\boldsymbol{Y'}(\boldsymbol{X})) \cup R(\boldsymbol{Y'}(\boldsymbol{X})))\right]\right \vert \\
&\leq \mathbb{E}_{\boldsymbol{X}}\left[| dev^{\ell_1}(K \setminus D(\boldsymbol{Y}(\boldsymbol{X})) \cup R(\boldsymbol{Y}(\boldsymbol{X}))) - dev^{\ell_1}(K \setminus D(\boldsymbol{Y'}(\boldsymbol{X})) \cup R(\boldsymbol{Y'}(\boldsymbol{X})))|\right].
\intertext{To upper bound this expression, we select the same replacement set for both $D(\boldsymbol{Y}(\boldsymbol{X}))$ and $D(\boldsymbol{Y'}(\boldsymbol{X}))$ for a fixed realization of $\boldsymbol{X}$. Let this set $R_{\boldsymbol{X}}$ be defined as follows: if the difference inside the absolute value is nonnegative, let $R_{\boldsymbol{X}} = R(\boldsymbol{Y'}(\boldsymbol{X}))$, otherwise $R_{\boldsymbol{X}} = R(\boldsymbol{Y}(\boldsymbol{X}))$. For each possible realization of $\boldsymbol{X}$, this fixes the optimal replacement set for the smaller term and substitutes it in for the optimal replacement set for the larger term, thereby upper-bounding the absolute value of the difference.}
&\leq \mathbb{E}_{\boldsymbol{X}}\left[|dev^{\ell_1}(K \setminus D(\boldsymbol{Y}(\boldsymbol{X})) \cup R_{\boldsymbol{X}}) - dev^{\ell_1}(K \setminus D(\boldsymbol{Y'}(\boldsymbol{X})) \cup R_{\boldsymbol{X}})|\right]
\intertext{To simplify notation, we will henceforth let $S(\boldsymbol{X}) = K \setminus D(\boldsymbol{Y} (\boldsymbol{X})) \cup R_{\boldsymbol{X}}$ and $S'(\boldsymbol{X}) = K \setminus D(\boldsymbol{Y'}(\boldsymbol{X})) \cup R_{\boldsymbol{X}}$ be the panels after dropout and replacement:}
&=\mathbb{E}_{\boldsymbol{X}}\left[|dev^{\ell_1}(S'(\boldsymbol{X})) - dev^{\ell_1}(S'(\boldsymbol{X}))\right]
\intertext{By \textit{Fact 1}, when $X_i \leq \min(\rho_i, \tilde{\rho}_i)$ or $X_i > \max(\rho_i, \tilde{\rho}_i)$, the expression in the expectation evaluates to 0. Thus, it can be simplified to only consider the case when $X_i \in (\min(\rho_i, \tilde{\rho}_i), \max(\rho_i, \tilde{\rho}_i)]$, which (due to the distribution of $X_i$) occurs with probability $\max(\rho_i, \tilde{\rho}_i) - \min(\rho_i, \tilde{\rho_i}) \leq \gamma$:}
&\leq \mathbb{E}_{\boldsymbol{X}}\left[|dev^{\ell_1}(S'(\boldsymbol{X})) - dev^{\ell_1}(S'(\boldsymbol{X}))| \middle\vert \  X_i \in (\min(\rho_i, \tilde{\rho}_i), \max(\rho_i, \tilde{\rho}_i)]\right] \cdot \gamma 
\intertext{Now expanding by $f,v$ and applying the triangle inequality,}
    &\leq \mathbb{E}_{\boldsymbol{X}} \left[ \sum_{f \in F} \sum_{v \in V_f} \left \vert dev^{\ell_1}_{f,v}(S({\boldsymbol{X}})) - dev^{\ell_1}_{f,v}(S'({\boldsymbol{X}})) \right \vert\ \middle\vert \  X_i \in (\min(\rho_i, \tilde{\rho}_i), \max(\rho_i, \tilde{\rho}_i)]\right] \cdot \gamma
\intertext{By \textit{Fact 2}, when $X_i \in (\min(\rho_i, \tilde{\rho}_i), \max(\rho_i, \tilde{\rho}_i)]$, \ $D(\boldsymbol{Y}(\boldsymbol{X})) \triangle D(\boldsymbol{Y'}(\boldsymbol{X})) = \{i\}$. It follows that $S(\boldsymbol{X}) \triangle S'(\boldsymbol{X}) = \{i\}$. Then, for all feature-values $i$ does not possess (all $f,v : f(i) \neq v$), we have that $|dev^{\ell_1}_{f,v}(S({\boldsymbol{X}})) - dev^{\ell_1}_{f,v}(S'({\boldsymbol{X}}))| = 0$. Then,}
&\leq \mathbb{E}_{\boldsymbol{X}} \left[ \sum_{f \in F}  |dev^{\ell_1}_{f,f(i)}(S(\boldsymbol{X}))-dev^{\ell_1}_{f,f(i)}(S'(\boldsymbol{X}))| \middle\vert \  X_i \in (\min(\rho_i, \tilde{\rho}_i), \max(\rho_i, \tilde{\rho}_i)]\right] \cdot \gamma
\intertext{Now, to prove the final step of the deduction, assume without loss of generality that $S(\boldsymbol{X}) = S'(\boldsymbol{X}) \dot{\cup} \{i\}$. Then for any $f \in F$, there is some $z_f \in \mathbb{N}$ such that $\sum_{j \in S'(\boldsymbol{X})} \mathbb{I}(f(j) = f(i)) = z_f$ and $\sum_{j \in S(\boldsymbol{X})} \mathbb{I}(f(j) = f(i)) = z_f+1$. We can then rewrite the feature-level deviations as \[dev^{\ell_1}_{f,f(i)}(S(\boldsymbol{X})) = \frac{1}{u_{f,f(i)}}\max\left\{0, l_{f,v} - (z_f+1), -u_{f,v} + z_f+1\right\}\] and
\[dev^{\ell_1}_{f,f(i)}(S'(\boldsymbol{X})) = \frac{1}{u_{f,f(i)}} \max\left\{0, l_{f,v} - z_f, -u_{f,v} + z_f\right\}.\]
There are three relevant domains for both $z_f,z_f+1$ to consider: $< l_{f,v}, \  [l_{f,v}, u_{f,v}], \text{ and} \ > u_{f,v}$. Note that because $z_f, z_f+1, u_{f,v}$, and $l_{f,v}$ are all integers, it cannot be that $z_f < l_{f,v} \leq u_{f,v} < z_f+1$. Thus, $z_f$ and $z_f+1$ must either be in the same domain or in neighboring domains. The proof proceeds by showing that, regardless of which of the five possible cases of which domains $z_f$ and $z_f+1$ fall into, $|dev^{\ell_1}_{f,f(i)}(S(\boldsymbol{X})) - dev^{\ell_1}_{f,f(i)}(S('\boldsymbol{X}))| \leq 1/u_{f,f(i)}$. Because the arguments all use exactly the same approach, we show only the first such case, where $z_f,z_f+1 < l_{f,v}$.
\textit{Case 1:} If $z_f,z_f+1 < l_{f,v}$, then the dominating term of both $\max\left\{0, l_{f,v} - (z_f+1), -u_{f,v} + z_f+1\right\}$ and $\max\left\{0, l_{f,v} - z_f, -u_{f,v} + z_f\right\}$ are the second one; Then we get
    \[|dev^{\ell_1}_{f,f(i)}(S(\boldsymbol{X})) - dev^{\ell_1}_{f,f(i)}(S('\boldsymbol{X}))| = \left|1/u_{f,f(i)}((l_{f,v}-z_f) - l_{f,v}-z_{f}-1))\right| = \left|1/u_{f,f(i)}\right|.\]
Carrying out cases 2-5 similarly, we conclude that the final step in the overall deduction:}
&\leq \gamma \sum_{f \in F} \frac{1}{u_{f, f(i)}}
\end{align*}

Putting it all together gives
\begin{align*}
    &|\mathcal{L}^{\ell_1}(A;  \mathcal{D}_{\dprobs^0}) - \mathcal{L}^{\ell_1}(A;  \mathcal{D}_{\dprobs^k})|\\
    &\qquad \leq \sum_{i=1}^k \gamma \sum_{f \in F} \frac{1}{u_{f, f(i)}} = \gamma \sum_{f \in F} \sum_{v \in V_f} \frac{\sum_{i=1}^k \mathbb{I}(f(i) = v)}{u_{f,v}} \leq \gamma \sum_{f \in F} \sum_{v \in V_f} \frac{u_{f,v}}{u_{f,v}}= \gamma |FV|.
\end{align*}
Now that we have proven \Cref{eq:linearublemma}, we apply it to complete the proof. Take $s \geq s(\varepsilon, \delta)$ where $s(\varepsilon, \delta) \in \Theta\left((a\log n+\log(1/\varepsilon) + \log(1/\delta)) \cdot 1/\varepsilon^2\right)$. Then:
\begin{align*}
    \mathcal{L}^{\ell_1}(\ermfakealts{\ell_1};  \mathcal{D}_{\dprobs}) &\leq \mathcal{L}^{\ell_1}(\ermfakealts{\ell_1};  \mathcal{D}_{\tilde{\dprobs}}) + \gamma |FV| &&\text{[\cref{eq:linearublemma}, $\dprobs \rightarrow \tilde{\dprobs}$]}\\
    &\leq \mathcal{L}^{\ell_1}(\optfakealts{\ell_1};  \mathcal{D}_{\tilde{\dprobs}}) + \varepsilon + \gamma |FV| &&\text{[w.p. $1-\delta$ via \Cref{cor:sample-complexity-linear}]}\\
    &\leq \mathcal{L}^{\ell_1}(\opttruealts{\ell_1};  \mathcal{D}_{\tilde{\dprobs}}) + \varepsilon + \gamma |FV| &&\text{[optimality of $\optfakealts{\ell_1}$ for $\mathcal{D}_{\tilde{\dprobs}}$]}\\
    &\leq \mathcal{L}^{\ell_1}(\opttruealts{\ell_1};  \mathcal{D}_{\dprobs}) + \varepsilon + 2\gamma |FV| &&\text{[\cref{eq:linearublemma}, $\tilde{\dprobs} \rightarrow \dprobs$]}\\
    &\leq \mathcal{L}^{\ell_1}(\ermtruealts{\ell_1};  \mathcal{D}_{\dprobs}) + \varepsilon + 2\gamma |FV| &&\text{[optimality of $\opttruealts{\ell_1}$ for $\mathcal{D}_{\dprobs}$]}
\end{align*}
Rearranging gives us \[Pr[\mathcal{L}^{\ell_1}(\ermfakealts{\ell_1};  \mathcal{D}_{\dprobs}) - \mathcal{L}^{\ell_1}(\ermtruealts{\ell_1};  \mathcal{D}_{\dprobs}) \leq \varepsilon + 2\gamma |FV|] \geq 1-\delta. \qedhere\]
\end{proof}

\subsection{Proof of \Cref{thm:lb-robust-linear}} \label{app:lb-robust-linear}

\begin{proof}
We employ a similar proof strategy as in \Cref{thm:lb-robust-binary}: we construct an instance, $\mathcal{I}$, and a pair of dropout probability vectors, $\dprobs$ and $\tilde{\dprobs}$, such that the \emph{optimal} alternate set for the misestimated dropout probability vector performs substantially worse than the \emph{optimal} alternate set for the true dropout probability vector when evaluated under the true dropout probability vector. We then apply \Cref{cor:sample-complexity-linear} to select an $s(\delta,\gamma)$ such that \algoname$^{\ell_1}$ respectively given $\tilde{\dprobs}$ and $\dprobs$ outputs both the corresponding optimal sets with probability $1-2\delta$, thus allowing us to conclude the result.

We construct $\mathcal{I}$ as follows: without loss of generality, assume that all feature-values are numbers, so for all $f \in F$, $V_f = \{1, 2, \dots, |V_f|\}$. Set our panel size, $k = 2(\max_{f \in F} |V_f| - 1)$. Our panel is comprised of two ``subgroups'', each of size $\max_{f \in F} |V_f| - 1$. Their features are as follows: 
\begin{table}[h]
    \centering
    \begin{tabular}{l | c| c}
        \toprule
        & \textbf{Group 1} & \textbf{Group 2} \\
        \midrule
        Index Range & $i \in \left\{1, \dots, \max_{f \in F} |V_f| - 1\right\}$ & $i \in \left\{\max_{f \in F} |V_f|, \dots, |K|\right\}$ \\
        $\forall f \in F$, & 
        $f(i) = \begin{cases}
            i, \quad \text{if } i < |V_f| \\
            |V_f|, \quad \text{otherwise}
        \end{cases}$ & 
        $f(i) =|V_f|$ \\
        $\rho_i \ \forall i \in K$ & $\gamma$ & $0$ \\
        $\tilde{\rho}_i \ \forall i \in K$ & $0$ & $\gamma$ \\
        \bottomrule
    \end{tabular}
\end{table}

Essentially, all values except for the last value for each feature are only represented by one person on the panel. The last value for each feature, $f$,  is a ``filler'' value that everyone else has, preserving the rarity of the first $|V_f| - 1$ values. We have tight quotas, so for all $f \in F$ and all $v \in V_f$ such that $v \neq |V_f|$, $l_{f,v} = u_{f,v} = 1$. For all $f \in F$, $l_{f, |V_f|} = u_{f,|V_f|} = k - (|V_f|-1)$. Our pool is a copy of the panel, so for every $i \in K$ there is an $i' \in N$ with matching feature-values. Our alternate budget, $a$, is $k/2$. 

First, we lower bound $\mathcal{L}^{\ell_1}(\optfakealts{\ell_1}; \mathcal{D}_{\dprobs}) - \mathcal{L}^{\ell_1}(\opttruealts{\ell_1}; \mathcal{D}_{\dprobs})$. Observe that $\opttruealts{\ell_1}$ is the set of all Group 1 clones in the pool, and $\optfakealts{\ell_1}$ is the set of all Group 2 clones in the pool. This is because, similar to in the proof of \Cref{thm:lb-robust-binary}, under both $\dprobs$ and $\tilde{\dprobs}$, there are exactly $k/2$ agents with a non-zero probability of dropping out, and these sets contain clones of exactly those agents and no others. As a result, note that $\mathcal{L}^{\ell_1}(\opttruealts{\ell_1}; \mathcal{D}_{\dprobs}) = 0$ and $\mathcal{L}^{\ell_1}(\optfakealts{}{\ell_1}; \mathcal{D}_{\tilde{\dprobs}}) = 0$, so:
\begin{align*}
    \mathcal{L}^{\ell_1}(\optfakealts{\ell_1}; \mathcal{D}_{\dprobs}) - \mathcal{L}^{\ell_1}(\opttruealts{\ell_1}; \mathcal{D}_{\dprobs}) = \mathcal{L}^{\ell_1}(\optfakealts{\ell_1}; \mathcal{D}_{\dprobs})
    = \mathbb{E}_{D \sim \mathcal{D}_{\dprobs}}\left[\min_{R \subseteq \optfakealts{\ell_1}} dev^{\ell_1}(K \setminus D \cup R)\right]
\end{align*}
For every dropout set $D$ in the support of $\mathcal{D}_{\dprobs}$, denote the linear deviation minimizing replacement subset of $A$ as $R(D)$. We first lower bound our deviation by only accounting for the error on only some of the feature values.\begin{align*}
    \mathbb{E}_{D \sim \mathcal{D}_{\dprobs}}\left[dev^{\ell_1}(K \setminus D \cup R(D))\right] &\geq \mathbb{E}_{D \sim \mathcal{D}_{\dprobs}}\left[\sum_{\substack{(f,v) \in FV \\ v < |V_f|}} dev^{\ell_1}_{f,v}(K \setminus D \cup R(D))\right]
    \intertext{Noting that all of these feature-values have a quota of 1 and applying linearity, we have:}
    &\geq \mathbb{E}_{D \sim \mathcal{D}_{\dprobs}} \left[\sum_{\substack{(f,v) \in FV \\ v < |V_f|}} \frac{1}{1} \cdot \left(1 - \sum_{i \in K \setminus D \cup R(D)} \mathbb{I}(f(i) = v)\right)\right] \\
    &= \sum_{\substack{(f,v) \in FV \\ v < |V_f|}} \left(1 - \mathbb{E}_{D \sim \mathcal{D}_{\dprobs}} \left[\sum_{i \in K \setminus D \cup R(D)} \mathbb{I}(f(i)=v) \right]\right)\\
    \intertext{Finally, recall that for any $(f,v) \in FV$ such that $v < |V_f|$, only one agent on the panel (agent $v$) has that value. Additionally, as $\optfakealts{\ell_1}$ is the set of all Group 2 clones, it has \emph{no} agents with any of these values, and the same must be true for any replacement set taken from it. Therefore, $\sum_{i \in K \setminus D \cup R(D)} \mathbb{I}(f(i)=v)$ is 1 if agent $v$ does not drop, and 0 otherwise. }
    &= \sum_{\substack{(f,v) \in FV \\ v < |V_f|}} \left(1 - \Pr[\text{agent } v \text{ does not drop}]\right) \\
    &=  \sum_{\substack{(f,v) \in FV \\ v < |V_f|}} (1 - (1-\gamma)) = \gamma (|FV| - |F|).
\end{align*}

Hence, we have our desired bound on the difference between $\mathcal{L}^{\ell_1}(\optfakealts{\ell_1}; \mathcal{D}_{\dprobs})$ and $\mathcal{L}^{\ell_1}(\opttruealts{\ell_1}; \mathcal{D}_{\dprobs})$. Now, we show that there exists $s(\delta, \gamma) = \Theta\left(\frac{k/2 \cdot \log k + \log (k/\gamma^k) + \log(1/\delta)}{\gamma^{2k}/k^2}\right)$, such that for all $s \geq s(\delta, \gamma)$\[\Pr[\ermtruealts{\ell_1} = \opttruealts{\ell_1} \land \ermfakealts{\ell_1} = \optfakealts{\ell_1}]\geq 1-2\delta,\] 
First we consider the selection of $\ermfakealts{\ell_1}$. Fix any $A \subseteq N$ that is not $\optfakealts{\ell_1}$ \emdash so it is missing at least one Group 2 clone. In the event that our dropout set is all panelists in Group 2, $A$ will incur some non-zero linear loss. In particular, for any feature $f$ with maximal $|V_f|$, the dropout set will have $k/2$ agents with value $|V_f|$. However, \emph{no} Group 1 member, and thus no Group 1 clone in the pool, has value $|V_f|$ for that feature. As $A$ has strictly fewer than $a = k/2$ Group 2 clones, any replacement subset taken from it will fall short of the quota for $f, |V_f|$ by at least one person, incurring a loss of at least $\frac{1}{u_{f, |V_f|}} = \frac{1}{k/2}$. Recall that $\mathcal{L}^{\ell_1}(\optfakealts{\ell_1}; \mathcal{D}_{\tilde{\dprobs}}) = 0$, so we have that:
\begin{align*}
    |\mathcal{L}^{\ell_1}(\optfakealts{\ell_1}; \mathcal{D}_{\tilde{\dprobs}}) - \mathcal{L}^{\ell_1}(A; \mathcal{D}_{\tilde{\dprobs}})|
    &\geq \min_{R \subseteq A} dev^{\ell_1}(K \setminus \{\text{Group 2}\} \cup R) \cdot \Pr_{D \sim \mathcal{D}_{\tilde{\dprobs}}}[D = \{\text{Group 2}\} ]\\
    &\geq \frac{1}{k/2} \cdot \gamma^{k/2} > \frac{\gamma^k}{k}
\end{align*}
By our choice of $s$ and \Cref{cor:sample-complexity-linear}, we know that \[
\Pr[|\mathcal{L}^{\ell_1}(\ermfakealts{\ell_1}; \mathcal{D}_{\tilde{\dprobs}}) - \mathcal{L}^{\ell_1}(\optfakealts{\ell_1}; \mathcal{D}_{\tilde{\dprobs}})| \leq \gamma^{k}/k] \geq 1- \delta
\]
 Hence, this implies that $\ermfakealts{\ell_1} = \optfakealts{\ell_1}$ with probability at least $1- \delta$. 

Next, consider the selection of $\ermtruealts{\ell_1}$. Fix any $A \subseteq N$ that is not equal to $\opttruealts{\ell_1}$ . If our dropout set is all panelists in Group 1, then there is at least one person, $i$, in the dropout set whose clone, $i'$, is not in $A$. This person has at least one feature, $f$, for which $f(i) < |V_f|$ (as is the case for all panelists in Group 1 by its definition). Moreover, $i$ and $i'$ are the only agents in the instance with that feature-value. Therefore, $A$ must incur a loss of 1 from that feature-value, as the quota is exactly 1. Recall that $\mathcal{L}^{\ell_1}(\opttruealts{\ell_1}; \mathcal{D}_{\dprobs}) = 0$, so we have that:\begin{align*}
    |\mathcal{L}^{\ell_1}(\opttruealts{\ell_1}; \mathcal{D}_{\dprobs}) - \mathcal{L}^{\ell_1}(A; \mathcal{D}_{\dprobs})| 
    &\geq \min_{R \subseteq A} dev^{\ell_1}(K \setminus \{\text{Group 1}\} \cup R) \cdot \Pr_{D \sim \mathcal{D}_{\dprobs}}[D = \{\text{Group 1}\} ]\\
    &\geq 1 \cdot \gamma^{k/2} > \frac{\gamma^k}{k}
\end{align*}
As before, by our choice of $s$ and \Cref{cor:sample-complexity-linear}, we know that \[
\Pr[|\mathcal{L}^{\ell_1}(\ermtruealts{\ell_1}; \mathcal{D}_{\dprobs}) - \mathcal{L}^{\ell_1}(\opttruealts{\ell_1}; \mathcal{D}_{\tilde{\dprobs}})| \leq \gamma^{k}/k] \geq 1- \delta
\]
 Hence, this implies that $\ermtruealts{\ell_1} = \opttruealts{\ell_1}$ with probability at least $1- \delta$. By union bounding, we conclude that indeed $\Pr[\ermtruealts{\ell_1} = \opttruealts{\ell_1} \land \ermfakealts{\ell_1} = \optfakealts{\ell_1}] \geq 1 - 2\delta$. Hence we can combine this with our lower bound on $\mathcal{L}^{\ell_1}(\optfakealts{\ell_1}; \mathcal{D}_{\dprobs}) - \mathcal{L}^{\ell_1}(\opttruealts{\ell_1}; \mathcal{D}_{\dprobs})$ to conclude that \[
\Pr[\mathcal{L}^{\ell_1}(\ermfakealts{\ell_1}; \mathcal{D}_{\dprobs}) - \mathcal{L}^{\ell_1}(\ermtruealts{\ell_1}; \mathcal{D}_{\dprobs}) \geq \gamma(|FV| - |F|)] \geq 1- 2\delta.
\]
\end{proof}

\newpage
\section{Supplemental Materials from \Cref{sec:empirics}}


\subsection{Data cleaning details} \label{app:datacleaning}

\subsubsection{Data cleaning details for datasets \textit{US-1}, \textit{US-2}, and \textit{US-3}.}
After trimming features that were not common across these datasets, these datasets contain the following features with the following values (up to renaming for consistency across datasets).

\begin{table}[h!]
    \centering
    \begin{tabular}{l|l}
        $f$ & $V_f$  \\
        \hline
        \textit{gender} & \textit{male}\\
        &\textit{female}\\
        &\textit{non-binary/other}\\
        \hline
        \textit{age} & \textit{16-24}\\
        & \textit{25-34}\\
        &\textit{35-44}\\
        & \textit{45-54}\\
        &\textit{55-64}\\
        &\textit{65-74}\\
        &\textit{75+}\\
        \hline
        \textit{housing} & \textit{own}\\
        &\textit{rent}\\
        &\textit{subsidized housing/unhoused}\\
        \hline
        \textit{educational attainment} & \textit{bachelor's or higher}\\
        & \textit{some college}\\
        & \textit{high school}\\
        & \textit{some schooling}\\
        \hline
        \textit{race/ethnicity} & \textit{native american}\\
        &\textit{white}\\
        & \textit{AAPI}\\
        & \textit{black}\\
        & \textit{latinx}\\
        & \textit{multiracial}
    \end{tabular}
    \caption{Feature-values common across \textit{US} instances.}
    \label{tab:FV-US}
\end{table}
\noindent \textbf{Handling feature-values for \Cref{fig:dropout-rates} / learning the dropout probabilities.} The features above were all directly available in the raw data. The only modification to the data we made for these purposes was to merge some of the $f = age$ categories. For \Cref{fig:dropout-rates}, this was for consistency across \textit{US} and \textit{Canada} datasets; for training, this was to ensure sufficient sample size and try to capture practically meaningful life stages. These merges combined \textit{16-24} and \textit{25-34} into ``Young Adult'', \textit{35-44}, \textit{45-54}, and \textit{55-64} into ``Middle Age'', and \textit{65-74} and \textit{75+} into ``Retirement Age''.

\vspace{0.5em}
\noindent \textbf{Handling feature-values for running/evaluating the algorithms.} The quotas we imposed exactly reflected the quotas used in practice in each instance. In particular, these quotas were imposed on the raw-feature values above, and sometimes other feature-values that appeared in only a subset of these datasets (which were omitted above because they were not common across datasets). 

\vspace{0.5em}
\noindent \textbf{Handling the ``dropped'' variable.} In this dataset, the main (but still minor) complication was that in \textit{US-2} and \textit{US-3}, there was fairly substantial ``dropout'' due to people not responding or declining immediately when notified they were selected for the panel. There were concerns that this may have been due to logistical snags in the recruitment process in these particular cases, meaning that someone ``dropping out'' at this stage might not be reflective of general ``propensity to drop out''. Furthermore, there were people recruited to replace these people immediately after the panel was selected (called ``panel reselection''), and those who were selected in that process were essentially panelists, meaning they had the opportunity to display dropout/nondropout behavior. We therefore constructed our ``dropped'' variable in the following way: we ignored people who were selected for the panel but then did not accept the initial notification (just assigning them \textit{n/a} for the binary outcome variable describing dropout), and conversely, we considered those who were selected during panel reselection to be within our dropped/did not drop dataset, even though they were not technically selected for the original panel. Note that this construction was used only for estimating the dropout probabilities, as this is the only purpose of the ``dropped'' variable.

\vspace{0.5em}
\noindent \textbf{Handling missing data.} There was no missing data.

\subsubsection{Data cleaning details for datasets \textit{Can-1} - \textit{Can-4}}
These datasets required considerably more unification across the values of each feature. After trimming features that were not common across these datasets and unifying variables, the \textit{Can} datasets contain the following common feature-values.

\begin{table}[h!]
    \centering
    \begin{tabular}{l|l}
        $f$ & $V_f$  \\
        \hline
        \textit{gender} & \textit{male}\\
        &\textit{female}\\
        &\textit{non-binary/other}\\
        \hline
        \textit{age} & \textit{1} (roughly up to 29)\\
        & \textit{2} (roughly 30 - 44)\\
        &\textit{3} (roughly 45 - 64)\\
        &\textit{4} (roughly 65 - 70)\\
        &\textit{5} (roughly 71+)\\
        \hline
        \textit{housing} & \textit{own}\\
        &\textit{rent}\\
        &\textit{subsidized housing/unhoused}\\
        \hline
        \textit{indigenous}& \textit{yes}\\
        &\textit{no}\\
        \hline
        \textit{racial minority} & \textit{yes}\\
        &\textit{no}\\
        \hline
        \textit{income} & \textit{higher income}\\
        & \textit{lower income}
    \end{tabular}
    \caption{Feature-values common across \textit{Canada} instances.}
    \label{tab:FV-Can}
\end{table}
Of these variables, three were constructed or modified in some way. First, \textit{age} was available in raw form, but the age ranges used across datasets varied substantially. We unified them to the best of our ability, trying to combine similar ranges. The age ranges given in \Cref{tab:FV-Can} are qualified with ``roughly'' because, while a majority of datasets' ranges respected those guidelines, sometimes there was some leakage between categories. Second, \textit{racial minority} is a constructed variable, which was set to \textit{yes} if people answered ``yes'' to any number of other raw features: whether they were a \textit{visible minority}, \textit{black}, \textit{indigenous}, \textit{POC}, or \textit{racialized}. Because different datasets contained different subsets of these raw indicators, we combined them into one global indicator to make features as common as possible across datasets. Finally, \textit{income} is a constructed variable, which was created by (1) unifying several differently-coded variables asking about income, and (2) inferring income level by how many times per year a person took a flight, where greater numbers of flights were assumed to indicate higher income. This variable was ultimately recoded into just two categories, to avoid over-indexing on the assumptions underlying the construction of this variable.

\vspace{0.5em}
\noindent \textbf{Handling feature-values for \Cref{fig:dropout-rates} / learning the dropout probabilities.} The only further modification to the data we made for these purposes was to again merge some of the $f = age$ categories. For \Cref{fig:dropout-rates}, this was for consistency across \textit{US} and \textit{Canada} datasets; for training, this was to ensure sufficient sample size and try to capture practically meaningful life stages. These merges defined \textit{1} as ``Young Adult'', \textit{2} and \textit{3} as ``Middle Age'', and \textit{4} and \textit{5} as ``Retirement Age''

\vspace{0.5em}
\noindent \textbf{Handling feature-values for running/evaluating the algorithms.} In these datasets, we did not have access to the raw quotas used in practice. We therefore simply imposed tight quotas on the feature-values in \Cref{tab:FV-Can}, enforcing that if the original panel contained $k_{f,v}$ agents with $f,v$, then we set the corresponding quotas such that $l_{f,v} = u_{f,v} = k_{f,v}$. We remark that although we made these quotas perfectly tight, the quota \textit{structure} is much less rich in this dataset, owing to the fact that there are far fewer feature-values. This is one likely explanation for why the trends are less pronounced for our results in this dataset. For completeness, we also include versions of the plots where we relax all quotas by 1, i.e. $l_{f,v} = \max(k_{f,v}-1, 0)$ and $u_{f,v} = \min(k_{f,v}+1, k)$. However, the trends are severely attenuated in these plots because the instances are unrealistically ``easy.''

\vspace{0.5em}
\noindent \textbf{Handling the ``dropped'' variable.} This variable was taken off-the-shelf, and required no cleaning.

\vspace{0.5em}
\noindent \textbf{Handling missing data.} There were a few limited cases where pool members were missing a feature-value. In these cases, we simply dropped that person from the data. This occurred in two datasets here, \textit{Can-2} and \textit{Can-3}. In \textit{Can-2}, we dropped 16 pool members out of 144 total. Similarly, in \textit{Can-3}, we dropped 2 pool members out of 212 total.

\subsection{Convergence plots across sample sizes}\label{app:convergence}
\Cref{fig:convergence} shows the loss of all three ERM algorithms at increasing numbers of samples $s$. Here, $s$ refers to the number of samples used to \textit{train} (i.e., the $s$ provided to \algoname$^{dev}$, determining how many samples to draw to create $\hat{\ddist}_{\dprobs}$); to avoid too much noise generated by \textit{evaluation} (rather than non-convergence), we evaluated the loss on 500 samples for all values of $s$. These tests were performed in instance \textit{US-3}. Based in these plots, we selected a sample size of $s=300$. 
\begin{figure}[h!]
    \centering    \includegraphics[width=\linewidth]{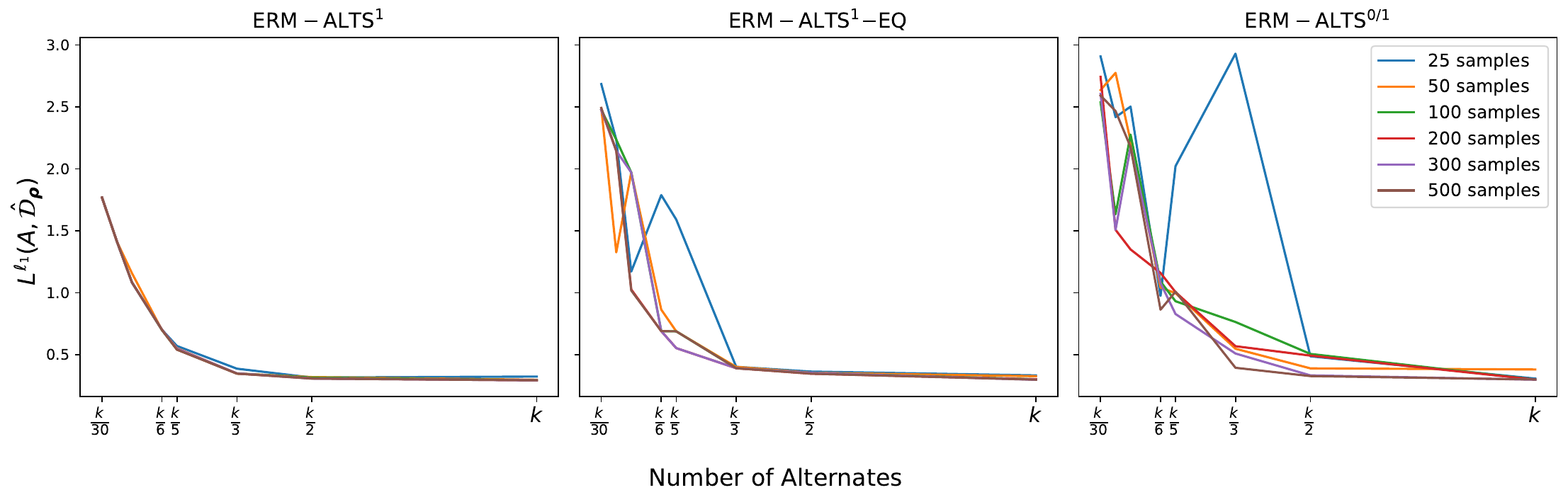}
    \caption{Plots showing losses of ERM algorithms over varying numbers of training samples (with losses always \textit{evaluated} using 500 samples).}
    \label{fig:convergence}
\end{figure}

\subsection{Benchmark algorithms} \label{app:benchmarks}

The only algorithms that remain to be defined are \textsc{Greedy} and \textsc{Quota-Based}. For each, we provide a formal definition, some intuition for what sets of people they tend to select as alternates, and what types of instances cause these algorithms to perform poorly.

First, we define \textsc{Greedy}. Recall that $w(i) = (f(i) | f \in F)$ is agent $i$'s feature vector. We define the hamming distance between two feature vectors $w,w'$ (indexed by $f$) as the total number of features on which they differ in value:
\[\textit{hamming-dist}(w,w'):= \sum_{f \in F} \mathbb{I}(w_f \neq w'_f).\]

\begin{algorithm}
\caption{$\textsc{Greedy}(\dprobs, \inst)$}
\DontPrintSemicolon
     $A \leftarrow \emptyset$\;
     $\{1,2,\dots,k\} \leftarrow \text{Number the panelists in decreasing order of } \dprob_i, \text{ such that } i < j \Rightarrow \dprob_i \geq \dprob_j$.\;
    \For{$i \in \{1,2,\dots,a\}$}{
         $i^* \leftarrow \arg\min_{j \in N \setminus A} \textit{hamming-dist}(w(i), w(j))$\;
         $A \leftarrow A \cup \{i^*\}$}
    \textbf{return} $A$
\end{algorithm}

As alluded to in \Cref{sec:empirics}, \textsc{Greedy} ignores the combinatorial structure of the problem. This is true even if we improve \textsc{Greedy} beyond realistic manual capabilities and change the distance metric for deciding which pool member is ``closest'' to a given panel member from hamming distance to $\ell_1$. While \textsc{Greedy} may typically perform well enough, it can do poorly in instances where incrementally patching up the gaps left by the probable dropout sets ends up doing worse than taking a global perspective. 

A simple example is as follows: we consider a setting with two binary features and tight panel quotas necessitating $k/2$ panelists with each feature-value in which $\dprob_1=\dprob_2 = 1$ and $\dprob_3 = \dots = \dprob_k = 0$. That is, the first two panelists deterministically drop out, and the remaining panelists deterministically stay. The two panelists have feature vectors $w(1) = 10$ and $w(2) = 01$ and the pool is made up of solely of agents with feature vectors $11$ and $00$. Now observe that under the $\ell_1$ and hamming distance metric, both types of pool members are equidistant from panelists 1 and 2. Therefore, \textsc{Greedy} may select any two pool members as alternates. It doesn't discriminate between $\{11, 11\}, \{00, 00\}, \text{ or } \{11, 00\}$ because it myopically focuses on picking a good replacement for one person at a time. However, when one considers the problem of filling in for \textit{both} of the delinquent panelists simultaneously, the last option is best.

To define \textsc{Quota-Based}, we first define how the quotas are constructed, and then define the ILP used to find a quota-satisfying panel. Fix the original quotas (designed for the size $k$ panel) $\boldsymbol{l},\boldsymbol{u}$. Now, define our \textit{scaled} quotas as
\[l_{f,v}':=  \lfloor l_{f,v} \cdot a/k \rfloor, \qquad u_{f,v}':=  \lceil u_{f,v} \cdot a/k \rceil \qquad \qquad \text{for all }f,v \in FV.\]
(where if these quotas are infeasible, we loosen all quotas by 1). Then, \textsc{Quota-Based} amounts to solving the following ILP; note that there may be many alternate sets that satisfy these quotas, and we leave this choice to be arbitrarily made by the ILP solver (in practice, a practitioner would use one of the existing panel selection algorithms \cite{baharav2024fair}, whose default functionality makes similarly arbitrary choices when it comes to the precise quota-compliant panel it outputs).

\vspace{0.5em}
\noindent \textsc{Quota-Based}$(\inst)$:
\begin{align*}
    \max_{x_i | i \in N} \quad &1\\
    \text{s.t.} \quad & \sum_{i \in N} x_i \cdot \mathbb{I}(f(i) = v) \in [l_{f,v}',u_{f,v}'] \qquad \text{for all }f,v \in FV\\
    &\sum_{i \in N} x_i = a\\
    &x_i \in \{0,1\} \qquad \text{for all }i \in N
\end{align*}

In contrast to \textsc{Greedy}, \textsc{Quota-Based} utilizes a more global outlook on the dropout problem, but completely ignores the dropout probabilities. By selecting for a scaled-down replica of the panel, it chooses alternates for each feature-value approximately proportionately to their quota. If people drop out at rates inversely correlated with their feature-value representation on the panel (as seems quite plausible with many feature-values in \Cref{fig:dropout-rates}), \textsc{Quota-Based} may completely miss the mark on which feature-values to prioritize in the alternate pool. 

For example, consider a setting with just one feature. There are $m < k$ special panelists with unique feature-values ranging from 1 to $m$, and the remaining $k-m$ panelists all have feature-value 0. The quotas require exactly one of each of the $m$ unique panelists, and anywhere from $k-m$ to $k$ of the 0 panelists. Now assume that all of the panelists with a non-zero feature-value have a high probability of dropping out, and the panelists with feature-value 0 have a dropout probability of 0. Assume that the pool has at least $a$ of each type of agent, so our selection of the alternate set is unrestricted by pool composition. If $a < k$, then by the definition of \textsc{Quota-Based}, $l'_{1,v} = 0$ for all $v \neq 0$, and $u'_{1,0} = a$. Hence, \textsc{Quota-Based} may well return a panel with just agents with feature-value 0. This alternate set is useless for this instance, and a much better alternate set would be selecting $a$ unique agents with a non-zero feature-value.

\subsection{Calibration plots} \label{app:calibration}
Here, we attempt to evaluate our empirically-learned dropout probabilities without knowing the ground truth. In each instance $j$, we compare the \textit{expected} number of dropouts with $f,v$ according to the learned probabilities $\tilde{\dprobs}_j$, versus the actual number of dropouts from $f,v$ in that instance (examining the \textit{realized} dropout set). Here, $\tilde{\dprobs}_j$ is exactly as it was defined in the description of \Cref{fig:real-dropouts} (so we train $\tilde{\dprobs}_j$ on all datasets \textit{other} than \textit{US-$j$}). We will consider our predictions well-calibrated for the intended dataset if all points on the plot hew to the line $y = x$, describing the case where the expectation of dropouts from $f,v$ perfectly matches the realized number of dropouts. We see that this is fairly true for \textit{US-2}, but far from true for \textit{US-1} and \textit{US-3}. We remark that \textit{US-3} was a slightly strange case, because there was just one dropout.
\begin{figure}[h!]
    \centering
    \includegraphics[width=\linewidth]{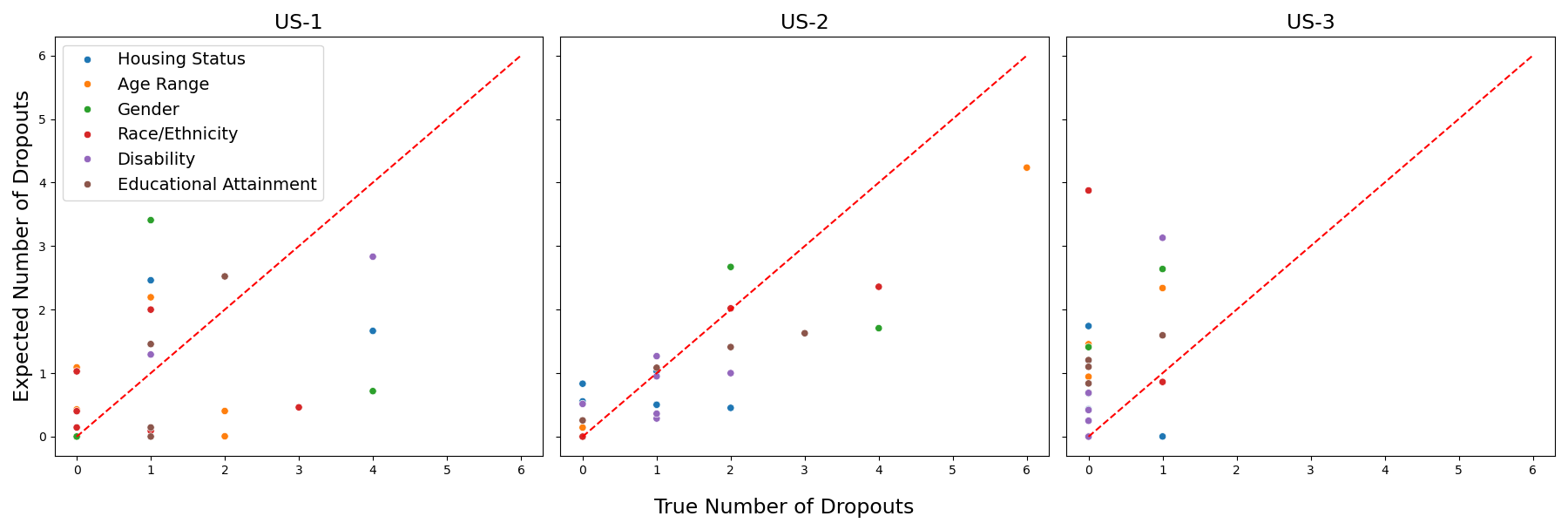}
    \caption{Points indicate number of dropouts observed in the data in each group $f,v$ (with colors corresponding to values of $f$). Red line is $y = x$.}
    \label{fig:enter-label}
\end{figure}

\subsection{Version of \Cref{fig:sim-L1} with standard deviations}\label{app:sim-L1-stdev}
In \Cref{fig:L1-sim-stdev} we show the version of \Cref{fig:sim-L1} with standard deviations, as described in the legend. All run parameters are identical to those used to create \Cref{fig:sim-L1}.

\begin{figure}[h!]
    \centering
    \includegraphics[width=0.95\linewidth]{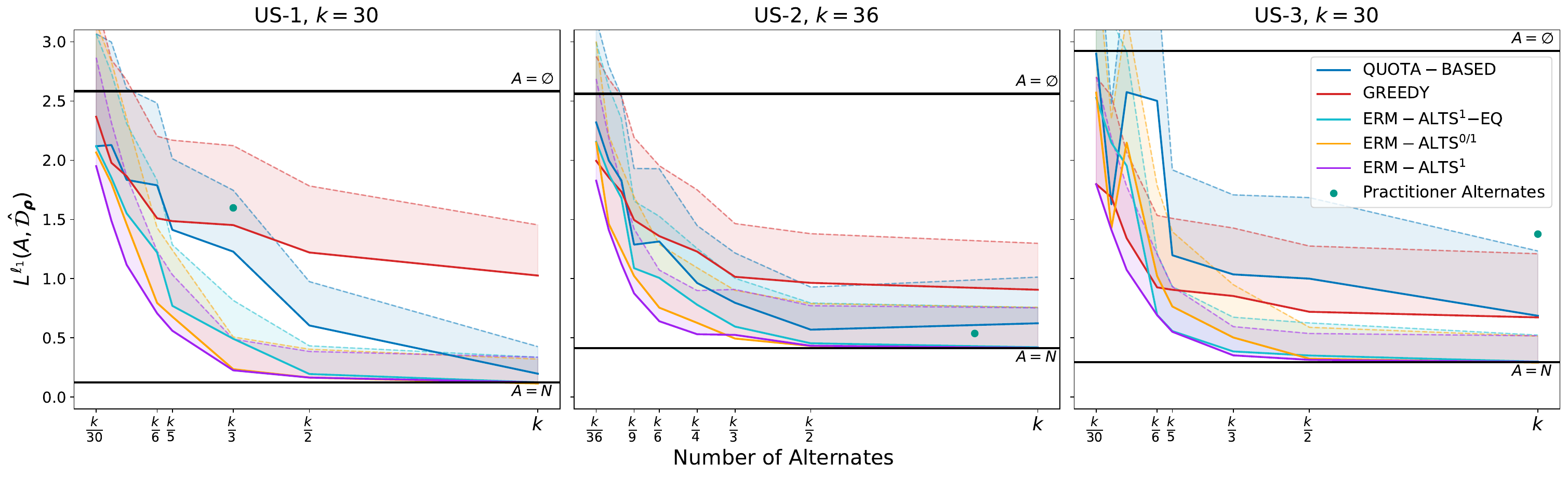}
    \caption{Analog of \Cref{fig:sim-L1} showing standard deviation of losses over all samples $D$ drawn during evaluation. Standard deviation shown only above the mean for clarity of plots, but is symmetric above and below the mean.}
    \label{fig:L1-sim-stdev}
\end{figure}

\subsection{Algorithm evaluation on other performance criteria}
\label{app:other-criteria}

In \Cref{fig:other-criteria} we replicate the analysis in \Cref{fig:sim-L1} on four other metrics of performance, which we define below. As in the body, we compute these expected values over an empirical version of $\mathcal{D}_{\dprobs}$, called $\hat{\mathcal{D}}_{\dprobs}$. For each draw of $D$, $S$ in the definitions below is equal to $K \setminus D \cup R$, where $R$ is chosen according to \Cref{eq:replacements} \textit{with the deviation function specified not as $dev^{\ell_1}$, but as the current performance benchmark}.

\vspace{1em}
\noindent \textbf{Normalized Deviation Below Quotas:}
\[\mathbb{E}_{D \sim \hat{\mathcal{D}}_{\dprobs}}\left[\sum_{f,v \in FV}\frac{\max\left\{0,\ l_{f,v} - \sum_{i \in S} \mathbb{I}(f(i) = v)\right\}}{u_{f,v}} \right].\]

\noindent \textbf{Max Normalized Quota Deviation:}
\[\mathbb{E}_{D \sim \hat{\mathcal{D}}_{\dprobs}}\left[\max_{f,v \in FV} \frac{\max\left\{0,\ l_{f,v} - \sum_{i \in S} \mathbb{I}(f(i) = v), \ -u_{f,v} + \sum_{i \in S} \mathbb{I}(f(i) = v)\right\}}{u_{f,v}} \right].\]

\noindent \textbf{Max Quota Deviation:}
\[\mathbb{E}_{D \sim \hat{\mathcal{D}}_{\dprobs}}\left[\max_{f,v \in FV} \max\left\{0,\ l_{f,v} - \sum_{i \in S} \mathbb{I}(f(i) = v), \ -u_{f,v} + \sum_{i \in S} \mathbb{I}(f(i) = v)\right\} \right]\]

\noindent \textbf{Number of Unrepresented Feature-Values:}
\[\mathbb{E}_{D \sim \hat{\mathcal{D}}_{\dprobs}}\left[\sum_{f,v \in FV} \mathbb{I}\left(\left( \sum_{i \in S}f(i) = v\right) = 0\right)\right]\]

\begin{figure}[h!]
    \centering
    \includegraphics[width=0.9\linewidth]{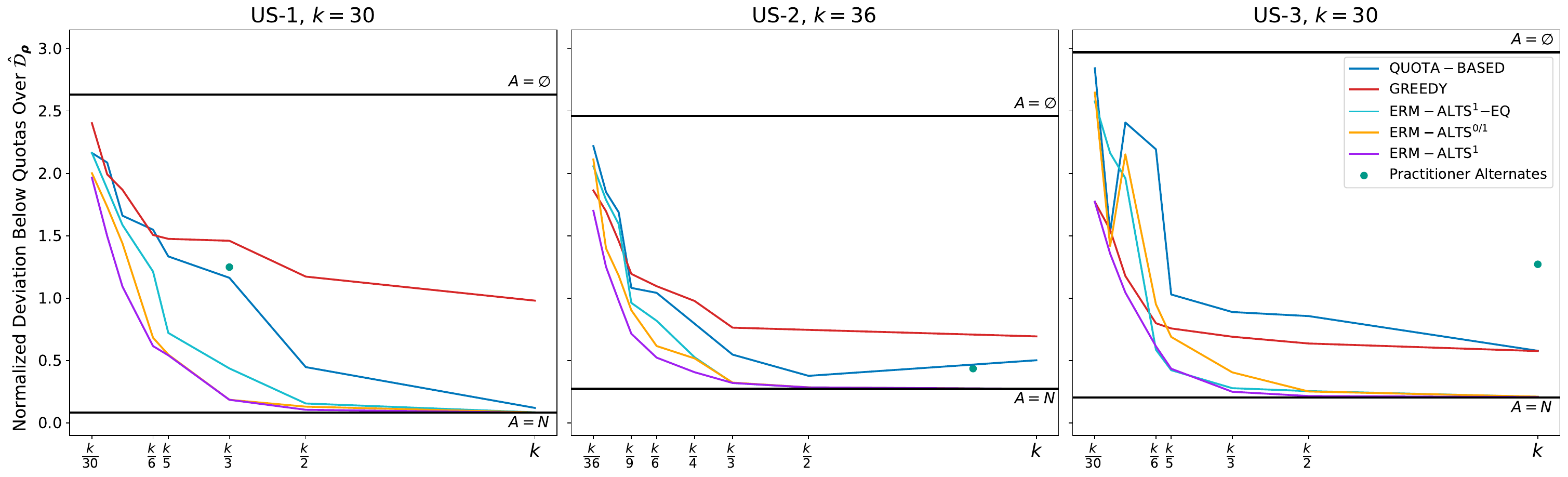}
    \includegraphics[width=0.9\linewidth]{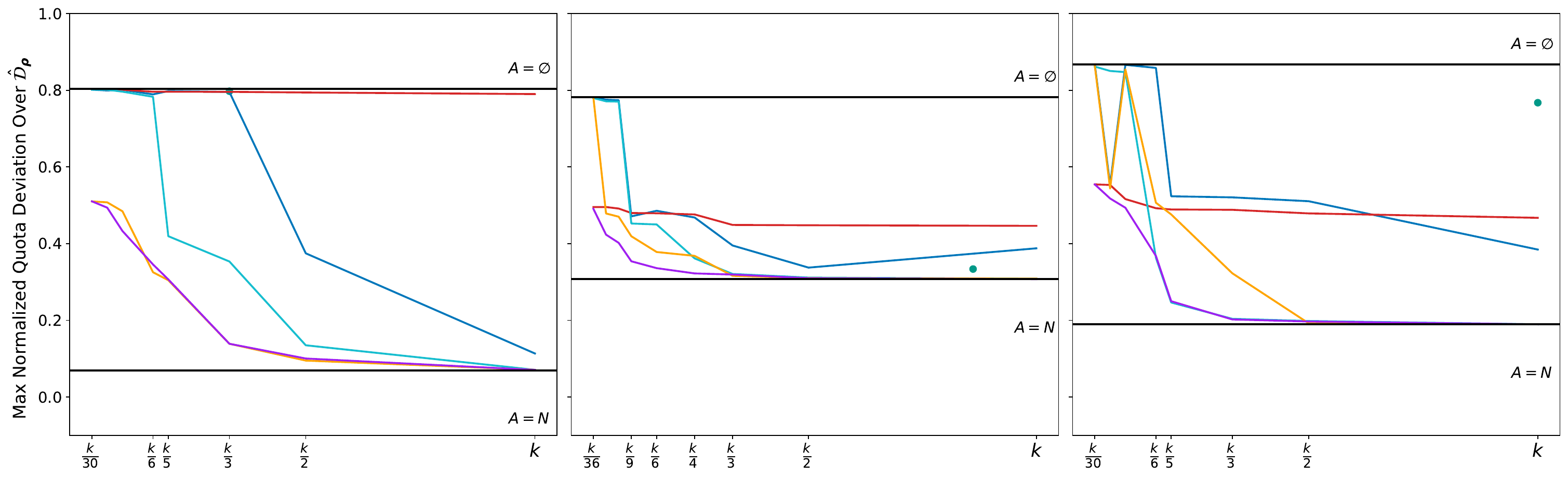}
    \includegraphics[width=0.9\linewidth]{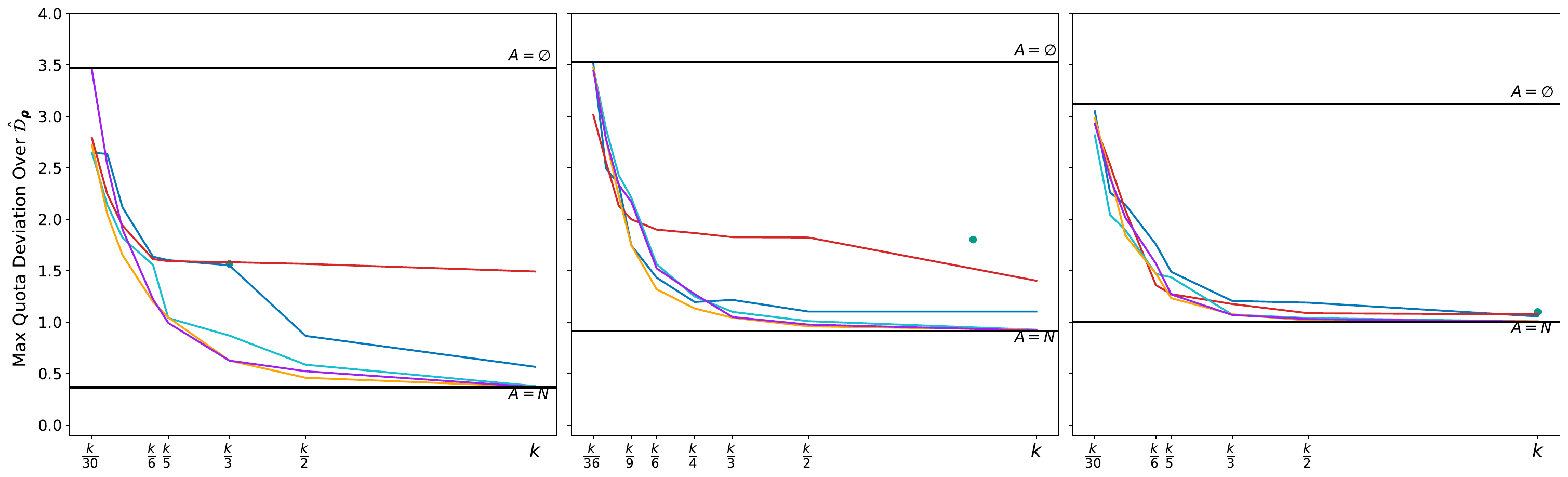}
    \includegraphics[width=0.9\linewidth]{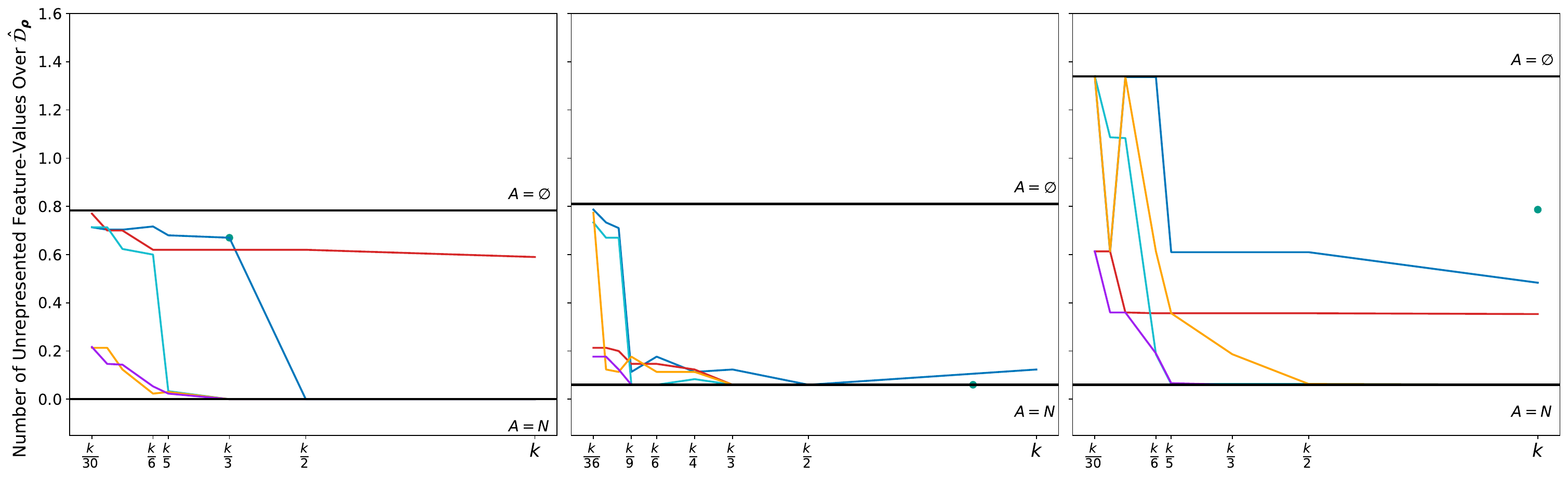}
    \includegraphics[width=0.9\linewidth]{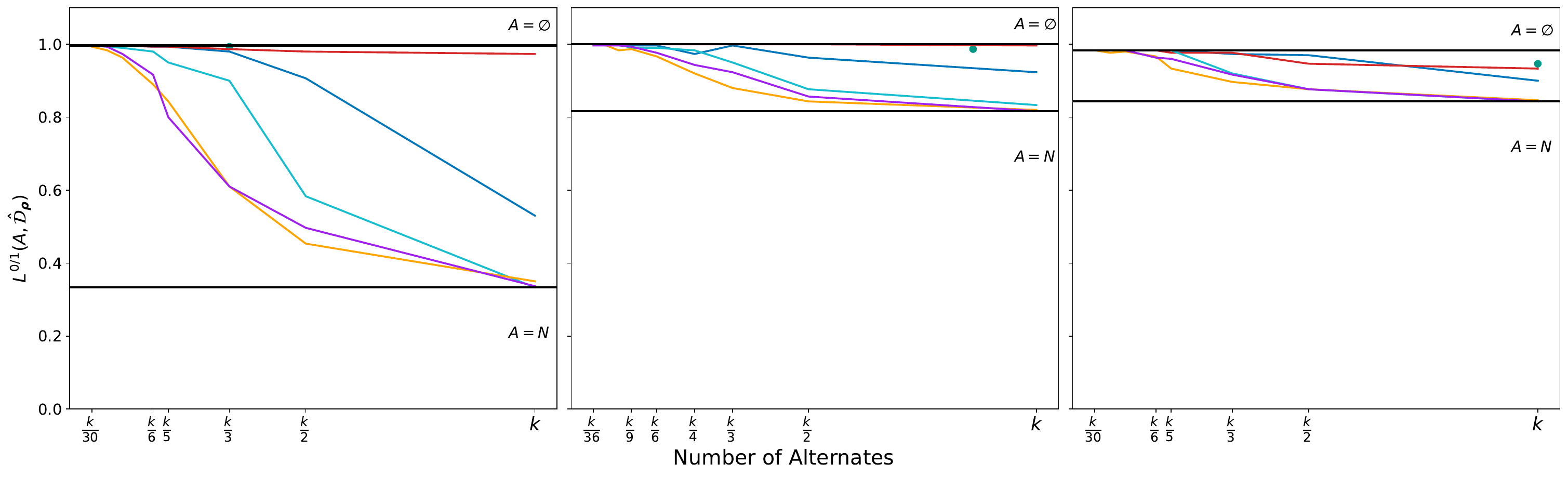}
    \caption{Analysis of expected algorithmic performance across instances \textit{US-1, US-2, US-3} on other performance metrics, as defined above. All run parameters are the same as in \Cref{fig:sim-L1}.}
    \label{fig:other-criteria}
\end{figure}

\subsection{Results on Canadian data cluster}
\label{app:canadian}

In \Cref{fig:canada} and \Cref{fig:canada2} we repeat our two main analyses from (respectively) \Cref{fig:real-dropouts} and \Cref{fig:sim-L1} in datasets \textit{Can-1} - \textit{Can-4}. We see similar relative performance of the algorithms, with the algorithms' performance on the realized dropout sets being somewhat unpredictable, and their relative performance in expectation being fairly consistent. Again, \textsc{Quota-Based} and \textsc{Greedy} tend to have the highest loss, though there are more exceptions in this dataset than the \textit{US} datasets.

\begin{figure}[h!]
    \centering
    \includegraphics[width=\linewidth]{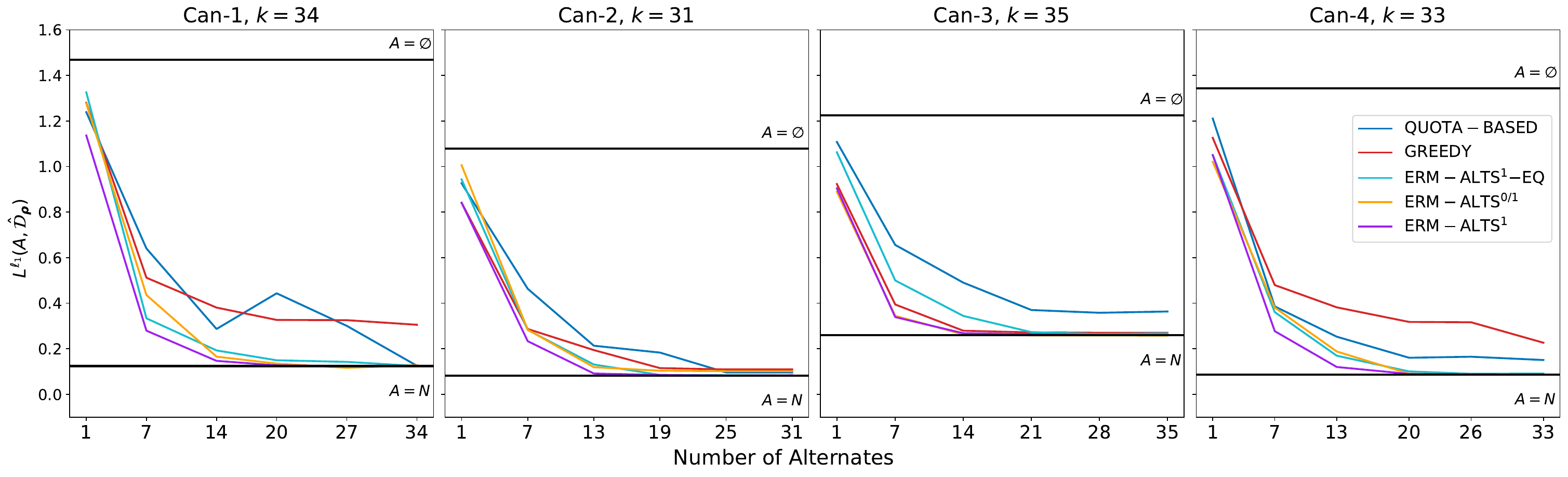}
    \caption{Analog of \Cref{fig:real-dropouts} in Datasets \textit{Can-1}-\textit{Can-4} with tight quotas.}
    \label{fig:canada}
\end{figure}

\begin{figure}[h!]
    \centering
    \includegraphics[width=\linewidth]{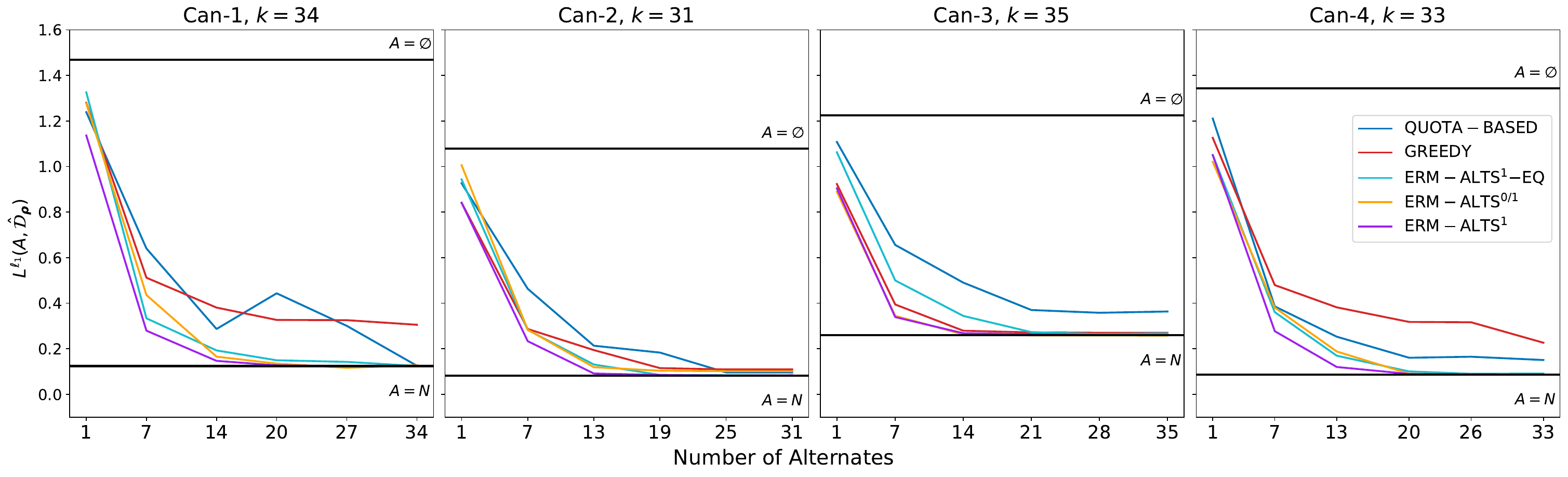}
    \caption{Analog of \Cref{fig:sim-L1} in Datasets \textit{Can-1}-\textit{Can-4} with tight quotas.}
    \label{fig:canada2}
\end{figure}

\begin{figure}[h!]
    \centering
    \includegraphics[width=\linewidth]{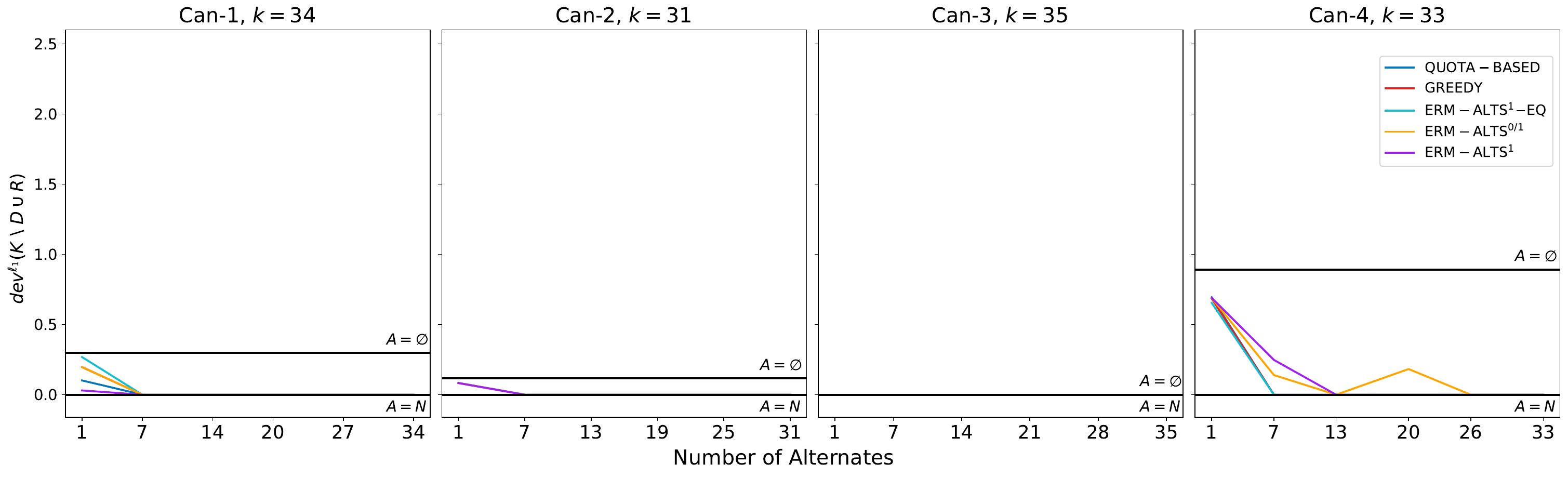}
    \caption{Analog of \Cref{fig:real-dropouts} in Datasets \textit{Can-1}-\textit{Can-4} with loose quotas.}
    \label{fig:canada3}
\end{figure}

\begin{figure}[h!]
    \centering
    \includegraphics[width=\linewidth]{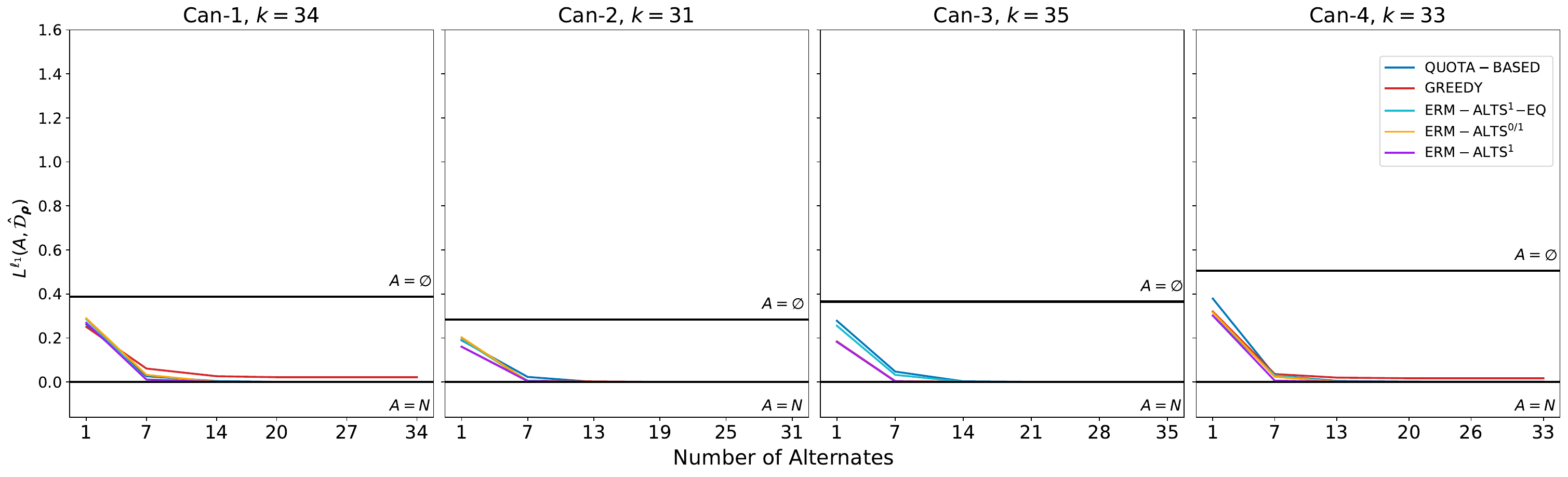}
    \caption{Analog of \Cref{fig:sim-L1} in Datasets \textit{Can-1}-\textit{Can-4} with loose quotas.}
    \label{fig:canada4}
\end{figure}

\newpage
\section{Supplemental Materials from \Cref{sec:discussion}} \label{app:extensions}
\subsection{Extension 1: Allowing \textit{alternates} to drop out} 
In the body we assumed that while panelists may drop out, the \textit{alternates} will always be available to replace them. A reasonable extension would be to assume that alternates will also drop out with some probability --- for simplicitly, we will suppose that they do so via the same dropout probabilities as do the panelists, but in principle our algorithm can handle separate distributions. In this case, one would choose the alternates via our algorithm as follows: first, sample dropouts from the panel $K$ \textit{and the remaining pool} $N$. Then, in every sample we get two dropout sets: the dropouts from the panel $D$ and the dropouts from the pool $\bar{D}$. Now, instead of evaluating $A$ on $D$, we evaluate $A \setminus \bar{D}$ on $D$. This weakly worsens the deviation for any given draw of dropouts, but it does not affect the sample complexity required for \textsc{ERM-Alts}, because the size of the hypothesis class has not changed, so running \textsc{ERM-Alts} with order $a \log n$ is still sufficient. The main change to the algorithm occurs in how we implement the ILP \textsc{Opt}, because we are now optimizing the expectation over dropouts from the panel \textit{and} from each candidate alternate set. This new version of \textsc{Opt} is specified below as \textsc{Opt-Alts-Drop}, which takes in a distribution over pool dropout sets ($\mathcal{\bar{D}}$) in addition to the instance and the distribution over panel dropout sets:\\

\noindent \textsc{Opt-Alts-Drop}$^{dev}(\inst,\mathcal{D}, \mathcal{\bar{D}})$ 
\begin{align*}
    \min \ & \sum_{(D,\bar{D}) \in 2^K \times 2^N} d_{(D,\bar{D})} \cdot \mathcal{D}(D) \cdot \mathcal{\bar{D}}(\bar{D}) \notag\\
    \text{s.t.} \ &\sum_{i\in N} x_i = a \notag\\
    &y_{i,(D,\bar{D})} \leq x_i &&\forall \ i \in N,\ (D, \bar{D}) \in 2^K \times 2^N \\
    &y_{i,(D,\bar{D})} \leq \mathbb{I}(i \in \bar{D})  \ \ \ \qquad(*\,\text{prevents using alternate dropouts}\,*) &&\forall \ i \in N,(D, \bar{D}) \in 2^K \times 2^N\\
    &\sum_{i \in N} y_{i, (D,\bar{D})} \leq |D| &&\forall \ (D, \bar{D}) \in 2^K \times 2^N\\
    &l_{f,v} - \left(\sum_{i \in K \setminus D} \mathbb{I}(f(i) = v) + \sum_{i \in N} y_{i,(D, \bar{D})}\, \mathbb{I}(f(i)=v) \right) \leq z_{(D, \bar{D}), f, v} \, u_{f,v} &&\forall \ (D, \bar{D}) \in 2^K \times 2^N, f,v \in FV \\
    & - u_{f,v} + \left(\sum_{i \in K \setminus D} \mathbb{I}(f(i) = v) + \sum_{i \in N} y_{i,(D, \bar{D})} \,\mathbb{I}(f(i)=v) \right)\leq z_{(D, \bar{D}), f, v} \, u_{f,v} &&\forall \ (D, \bar{D}) \in 2^K \times 2^N, f,v \in FV \\
    &\sum_{f,v} z_{(D, \bar{D}),f,v} \leq d_{(D, \bar{D})} \qquad \quad \ \qquad(*\,dev^{\ell_1}\,*) &&\forall \ (D, \bar{D}) \in 2^K \times 2^N \\
    &\sum_{f,v} z_{(D, \bar{D}),f,v} \leq d_{(D, \bar{D})} \cdot (|K| + a) |FV| \qquad(*\,dev^{0/1}\,*) &&\forall \ (D, \bar{D}) \in 2^K \times 2^N\\
    & d_{(D, \bar{D})} \in \mathbb{R}_{\geq 0}  \ \ \ \qquad(*\,dev^{\ell_1}\,*)&&\forall \ D \in (D, \bar{D}) \in 2^K \times 2^N\\
    & d_{(D, \bar{D})} \in \{0,1\} \qquad(*\,dev^{0/1}\,*) &&\forall \ (D, \bar{D}) \in 2^K \times 2^N\\
    & x_i \in \{0,1\}, \ y_{i,(D, \bar{D})} \in \{0,1\}, && \forall \  i \in N, (D, \bar{D}) \in 2^K \times 2^N\\
    &\  z_{(D, \bar{D}),f,v} \in \mathbb{R}_{\geq 0} && \forall \ (D, \bar{D}) \in 2^K \times 2^N, f,v \in FV 
\end{align*}

Given the practical importance of this extension, we replicate our empirical analysis from \Cref{fig:sim-L1} for this algorithm. The results, shown in \Cref{fig:alt-dropout}, show the same trends as previous results, with the three ERM algorithms consistently outperforming the two heuristic algorithms (\textsc{Quota-Based, Greedy}). 
\begin{figure}[h!]
    \centering
\includegraphics[width=\linewidth]{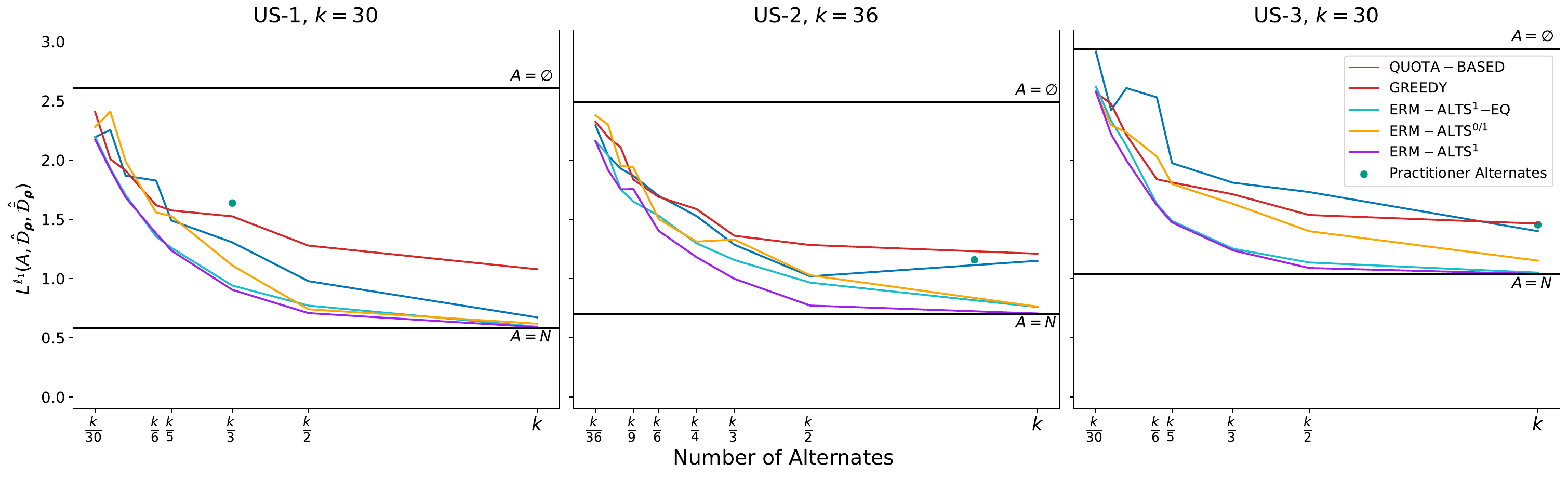}
    \caption{Replication of \Cref{fig:sim-L1} when alternates also drop out according to $\dprobs$. All run parameters are the same as in \Cref{fig:sim-L1}.}
    \label{fig:alt-dropout}
\end{figure}

\subsection{Extension 2: Directly adding extra panelists}
Suppose instead of selecting alternates from whom we can choose later, we might just want to select ``extra'' people outright who will be deterministically added to the panel. The only difference in this setup is that, because these people will be deterministically added to the panel, we may want to make sure that they do not violate any upper quotas too much. Therefore, we also take in a set of additional upper quotas, $\boldsymbol{u^*}$ to impose on the selection of extra panelists. This again does not affect the sample complexity of order $a \log n$, and amounts to a simple change in the ILP \textsc{Opt} to enforce extra quotas on the alternate set it considers. This new version of \textsc{Opt} is specified here as \textsc{Opt-Preempt}:\\

\noindent \textsc{Opt-Preempt}$^{dev}(\inst,\mathcal{D}, \boldsymbol{u}^*)$ 
\begin{align*}
    \min \ & \sum_{D \in 2^K} d_D \cdot \mathcal{D}(D) \notag\\
    \text{s.t.} \ &\sum_{i\in N} x_i \leq a \notag\\
    &\sum_{i \in N} x_i \mathbb{I}(f(i) = v) \leq u^*_{f,v} &&\forall \ D \in 2^K, f,v \in FV \\
    &l_{f,v} - \left(\sum_{i \in K \setminus D} \mathbb{I}(f(i) = v) + \sum_{i \in N} x_i\, \mathbb{I}(f(i)=v) \right) \leq z_{D, f, v} \, u_{f,v} &&\forall \ D \in 2^K, f,v \in FV \\
    & - u_{f,v} + \left(\sum_{i \in K \setminus D} \mathbb{I}(f(i) = v) + \sum_{i \in N} x_i \,\mathbb{I}(f(i)=v) \right)\leq z_{D, f, v} \, u_{f,v} &&\forall \ D \in 2^K, f,v \in FV \\
    &\sum_{f,v} z_{D,f,v} \leq d_D \qquad \quad \ \qquad(*\,dev^{\ell_1}\,*) &&\forall \ D \in 2^K \\
    &\sum_{f,v} z_{D,f,v} \leq d_D \cdot (|K| + a) |FV| \qquad(*\,dev^{0/1}\,*) &&\forall \ D \in 2^K\\
    & d_D \in \mathbb{R}_{\geq 0}  \ \ \ \qquad(*\,dev^{\ell_1}\,*)&&\forall \ D \in 2^K\\
    & d_D \in \{0,1\} \qquad(*\,dev^{0/1}\,*) &&\forall \ D \in 2^K\\
    & x_i \in \{0,1\}, \  z_{D,f,v} \in \mathbb{R}_{\geq 0} && \forall \  i \in N, D \in 2^K, f,v \in FV 
\end{align*}

\subsection{Extension 3: Selecting a maximally robust \textit{panel}}
In the body and in Extension 2, we assumed that the panel was given to us by some external selection algorithm and we could only choose alternates or extra panelists after the fact. Now, we ask: what if we could choose the panel \textit{alongside} the alternates? The next two extensions explore successively more general versions of this question.

We now suppose we want to avoid selecting alternates altogether and just try to choose a panel that is within some \textit{initial quotas} $\boldsymbol{l}'$, $\boldsymbol{u}'$, $k'$, so that it has minimal expected loss with respect to some true quotas $\boldsymbol{l}$, $\boldsymbol{u}$, $k$ after dropouts.\footnote{Note that if $\boldsymbol{l}' = \boldsymbol{l}$, $\boldsymbol{u}'=\boldsymbol{u}$, and $k'=k$, then this is just simply selecting the most robust panel in the actual bounds. If $\hat{k} > k, \hat{u}_{f,v} \geq  u_{f,v}$, and $\hat{\ell}_{f,v} = \ell_{f,v}$, then we are in the analog of the ``adding extra panelists'' scenario where now the extra people are now chosen in conjunction with the panel itself.} Just for the remainder of \Cref{app:extensions}, we will refer to the pool $N$ as the \textit{entire pool}, before the panel is selected. What we are now calling the pool would actually be $N \cup K$ in the notation of \Cref{sec:model}. To select a maximally robust panel using our ERM approach, one would first sample dropout sets directly from $N$ to get an empirical distribution of dropouts. We will refer to dropout sets from this pool as $D'$, where $D' \in 2^N$. For every dropout set $D'$ and every possible panel $K$, we are now evaluating the deviation as $dev(K \setminus D',\boldsymbol{l}, \boldsymbol{u})$. As such, we must solve a slightly modified version of the \textsc{Opt} ILP, which we specify below, that takes in a distribution over dropout sets from $2^N$. Additionally, given that our hypothesis class is now of size at most ${n \choose k}$, applying the same techniques as in \Cref{thm:mainbounds}, we deduce that it is sufficient to run \textsc{ERM-Alt} with a slightly larger sample complexity of order $k \log n$. \\

\noindent \textsc{Opt-Panel-Select}$^{dev}(\inst,\mathcal{D'})$ 
\begin{align*}
    \min \ & \sum_{D' \in 2^N} d_{D'} \cdot \mathcal{D'}(D') \notag\\
    \text{s.t.} \ &\sum_{i\in N} x_i = |K| \notag\\
    &l_{f,v} - \left(\sum_{i \in N \setminus D'} x_i\, \mathbb{I}(f(i) = v)\right) \leq z_{D', f, v} \, u_{f,v} &&\forall \ D' \in 2^N, f,v \in FV \\
    & - u_{f,v} + \left(\sum_{i \in N \setminus D'} x_i\, \mathbb{I}(f(i) = v)\right)\leq z_{D', f, v} \, u_{f,v} &&\forall \ D' \in 2^N, f,v \in FV \\
    &\sum_{f,v} z_{D',f,v} \leq d_{D'} \qquad \quad \ \qquad(*\,dev^{\ell_1}\,*) &&\forall \ D' \in 2^N \\
    &\sum_{f,v} z_{D',f,v} \leq d_{D'} \cdot |K| \cdot |FV| \qquad(*\,dev^{0/1}\,*) &&\forall \ D' \in 2^N\\
    & d_{D'} \in \mathbb{R}_{\geq 0}  \ \ \ \qquad(*\,dev^{\ell_1}\,*)&&\forall \ D' \in 2^N\\
    & d_{D'} \in \{0,1\} \qquad(*\,dev^{0/1}\,*) &&\forall \ D' \in 2^N\\
    & x_i \in \{0,1\}, \  z_{D',f,v} \in \mathbb{R}_{\geq 0} && \forall \  i \in N, D' \in 2^N, f,v \in FV 
\end{align*}

\subsection{Extension 4: Selecting a robust panel and alternates in conjunction} To select alternates and a panel in conjunction, we would again sample the entire pool to get a distribution over dropout sets $D'$. We would then solve a more complex version of the \textsc{Opt} ILP to find a panel $K$ and corresponding alternate set $A$. Here, the performance is evaluated as throughout the paper, where some dropouts are drawn from $K$ (which we have chosen) and we replace them as well as possible with the elements of $A$. We specify the new version of \textsc{Opt} below. Additionally, our hypothesis class is now of size at most ${n \choose k + a}$ (the distinction between the chosen $K$ and $A$ applies only in how we compute the deviation), meaning that it is sufficient to run \textsc{ERM-Alt} with sample complexity order $(k+a) \log n$.

\noindent \textsc{Opt-Panel-and-Alt-Select}$^{dev}(\inst,\mathcal{D'})$ 
\begin{align*}
    \min \ & \sum_{D' \in 2^N} d_{D'} \cdot \mathcal{D'}(D') \notag\\
    \text{s.t.} \ &\sum_{i \in N} w_i = |K| \ \ \ \qquad(*\,\text{panelists}\,*)\notag\\
    &\sum_{i\in N} x_i = a \notag\\
    &x_i \leq 1- w_i \ \ \ \qquad(*\,\text{alternates aren't panelists}\,*) &&\forall i \in N\\
    &y_{i,D'} \leq x_i &&\forall \ i \in N,\ D' \in 2^N \\
    &y_{i, D'} \leq \mathbb{I}(i \in D') \ \ \ \qquad(*\,\text{not using dropped replacers}\,*) &&\forall \ i \in N,\ D' \in 2^N \\
    &\sum_{i \in N} y_{i, D'} \leq \sum_{i \in D} w_i \ \ \ \qquad(*\,\text{limit replacement set size}\,*) &&\forall \ D' \in 2^N\\
    &l_{f,v} - \left(\sum_{i \in N \setminus D'} w_i\mathbb{I}(f(i) = v) + y_{i, D'}\mathbb{I}(f(i) = v)\right) \leq z_{D, f, v} \, u_{f,v} &&\forall \ D' \in 2^N, f,v \in FV \\
    & - u_{f,v} + \left(\sum_{i \in N \setminus D'} w_i\mathbb{I}(f(i) = v) + y_{i, D'}\mathbb{I}(f(i) = v)\right)\leq z_{D, f, v} \, u_{f,v} &&\forall \ D' \in 2^N, f,v \in FV \\
    &\sum_{f,v} z_{D',f,v} \leq d_{D'} \qquad \quad \ \qquad(*\,dev^{\ell_1}\,*) &&\forall \ D' \in 2^N \\
    &\sum_{f,v} z_{D',f,v} \leq d_{D'} \cdot |K| \cdot |FV| \qquad(*\,dev^{0/1}\,*) &&\forall \ D' \in 2^N\\
    & d_D \in \mathbb{R}_{\geq 0}  \ \ \ \qquad(*\,dev^{\ell_1}\,*)&&\forall \ D' \in 2^N\\
    & d_D \in \{0,1\} \qquad(*\,dev^{0/1}\,*) &&\forall \ D' \in 2^N\\
    & w_i \in \{0,1\}, x_i \in \{0,1\}, \ y_{i,D'} \in \{0,1\} && \forall \  i \in N, D' \in 2^N\\
    &z_{D',f,v} \in \mathbb{R}_{\geq 0} && \forall \ D' \in 2^N, f,v \in FV 
\end{align*}

\subsection{Extension 5: Setting robust \textit{quotas}} 

It may seem unsavory to select the panel based exclusively on the goal of being robust to dropout, as in extensions 3 and 4. This would forgo the guarantees on panel selection algorithms gained in previous work (e.g., \cite{flanigan2021fair}), and in a potentially problematic way: it might privilege groups whose members are unlikely to drop out (or even worse, who \textit{seem} unlikely to drop out based on past data). A different way of hedging against dropouts in a one-shot method (i.e., without choosing alternates or extra people after the panel is already chosen) is to try to set the quotas in a robust way to begin with. We now explore the extent to which this seemingly intuitive proposal is well-defined and algorithmically feasible.

First, observe that we cannot evaluate the robustness of a set of quotas directly, because many possible panels can be consistent with a set of quotas, and different panels may be robust to different degrees due to their differing combinatorial structure. In order to evaluate the robustness of a set of quotas, we must first fix a panel selection algorithm, which maps these quotas (along with the pool) to a particular \textit{panel distribution} $\mathcal{P}$ --- a distribution over all possible panels that satisfy the quotas. Fixing a selection algorithm, we then describe the extent to which our ``robustified quotas'' $\hat{\boldsymbol{\ell}},\hat{\boldsymbol{u}},\hat{k}$ are robust with respect to our true quotas $\boldsymbol{\ell},\boldsymbol{u},k$ as the expected deviation from the true quotas of $K \setminus D$, where $K \sim \mathcal{P}$ (satisfying $\hat{\boldsymbol{\ell}},\hat{\boldsymbol{u}},\hat{k}$) and $D \sim \ddist$. The key difference is that now, we are trying to optimize the expected deviation over the randomness of both the dropout set \textit{and the panel}. Our ERM approach still helps us here, because it allows us to evaluate our choice of quotas for a \textit{given} panel distribution; however, even with cutting edge selection algorithms, the process of computing a desirable $\mathcal{P}$\emdash as is necessary to evaluate a given choice of quotas\emdash is its own complex multilayer optimization problem \cite{flanigan2021fair}. This makes it is unclear how to directly optimize the quotas without brute-force exploration of all quota settings --- we leave this to future work.

\end{document}